\documentclass{article}

\usepackage{microtype}
\usepackage{graphicx}
\usepackage{subcaption}
\usepackage{booktabs}
\usepackage{hyperref}
\usepackage{xcolor}
\usepackage{todonotes}
\usepackage{multirow}
\usepackage{colortbl}
\usepackage{float}
\usepackage{xspace}
\usepackage{comment}
\usepackage{enumitem}
\usepackage{makecell}
\usepackage{soul}
\usepackage{textcomp}
\usepackage[T1]{fontenc}
\usepackage[accepted]{icml2026}

\usepackage{amsmath}
\usepackage{amssymb}
\usepackage{mathtools}
\usepackage{amsthm}
\usepackage[capitalize,noabbrev]{cleveref}

\usepackage{pifont}

\setlist{nolistsep,leftmargin=*}

\definecolor{rvanshblue}{RGB}{0,80,200}
\newcommand{\rvansh}[1]{{\color{black}#1}}
\newcommand{\rvtodo}[1]{}

\newcommand{\name}{\textsc{Dissolvr}\xspace}
\newcommand{\digit}{22\xspace}

\definecolor{best}{RGB}{166,221,130}      % dark green
\definecolor{second}{RGB}{223,244,217}    % light green

\newcommand{\best}[1]{\cellcolor{best}{#1}}
\newcommand{\second}[1]{\cellcolor{second}{#1}}
\newcommand{\code}[1]{\href{https://github.com/VanshRamani/Dissolvr}{\texttt{#1}}}
\DeclareMathOperator*{\argmin}{arg\,min}

\theoremstyle{definition}
\newtheorem{definition}{Definition}[section]
\newtheorem{problem}{Problem}[section]

\theoremstyle{plain}
\newtheorem{theorem}{Theorem}[section]
\newtheorem{lemma}{Lemma}[section]

\newtheorem{proposition}{Proposition}[section]

\theoremstyle{remark}

\icmltitlerunning{\textsc{\name}: Interpretable Solubility Prediction}

\begin{document}
\twocolumn[
    \icmltitle{\textsc{\name}: An Interpretable and Fast Framework for Aqueous and Organic Solubility Prediction}

    % 1. Define the symbols for equal contributions here
    \icmlsetsymbol{equal}{*}
    \icmlsetsymbol{senior}{\dag}

    % 2. Give each author their own line and attach the correct symbols
    \begin{icmlauthorlist}
        \icmlauthor{Vansh Ramani}{iit}
        \icmlauthor{Har Ashish Arora}{iit,equal}
        \icmlauthor{Dhairya Kuchhal}{iit,equal}
        \icmlauthor{Sayan Ranu}{iit}
        \icmlauthor{Tarak Karmakar}{iit}
    \end{icmlauthorlist}

    % 3. Define the affiliation tag
    \icmlaffiliation{iit}{Indian Institute of Technology, Delhi, India}

    \icmlcorrespondingauthor{Har Ashish Arora}{harashish.cs124@cse.iitd.ac.in}

    \icmlkeywords{Machine Learning, ICML}

    \vskip 0.3in
]

% -------------------------------------------------
% This command creates the footnote/copyright
% 4. Add the legend for your symbols inside these brackets!
% -------------------------------------------------
\printAffiliationsAndNotice{\textsuperscript{*}Equal contribution.}

% -------------------------------------------------
% Abstract
% -------------------------------------------------

\begin{abstract}
High-fidelity solubility prediction is fundamental to pharmaceutical development and environmental partitioning, where accurate modeling must couple molecular structure with thermodynamic behavior across diverse chemical environments. However, recent advancements have been dominated by deep learning architectures that often sacrifice physical interpretability for predictive power. We challenge this trend by showing that state-of-the-art performance does not require such non-transparent architectures. To address this, we introduce \name, a transparent framework for molecular solubility prediction. In addition, we perform a comprehensive literature review and a benchmarking study against various methods. We show that \name approaches the aleatoric limit of experimental uncertainty and achieves OOD generalization through structural invariance, derived by mapping molecules to physically-grounded descriptors. Then, we present an LLM-assisted post-hoc explanation pipeline that bridges the gap between symbolic model artifacts and chemically grounded narratives. Finally, a comparative benchmark of a survey involving \digit expert chemists reveals that expert evaluators provide deep insights.
\end{abstract}

% -------------------------------------------------
% Main paper
% -------------------------------------------------

\section{Introduction and Related Works}
\label{sec:intro}
Predicting solubility is a fundamental problem in chemistry, necessitating the development of predictive models. The field lies on the critical path of pharmaceutical and chemical development \citep{sc1, sc2, boobier2020machine}. \rvansh{Poor aqueous solubility is a major developmental hurdle: it plagues nearly 40\% of currently marketed drugs and contributes to the failure of up to 90\% of active candidates during preclinical and clinical development} \citep{kalepu2015insoluble}. Thus accurate solubility prediction is essential for optimizing drug formulations \citep{Savjani2012DrugSolubility} and reliably assessing bioavailability \citep{Kumari2023AdvancementSolubilization}. Moreover, solubility is a key determinant of the environmental fate of chemicals, controlling the mobility, persistence, and accumulation of contaminants in ecosystems \citep{NRC2014PhysicochemFate, Khaleeq2020FateToxicityPharmaceuticals}. Despite this, experimental measurements remain prohibitively time-intensive and error-prone \citep{Murdande2011AqueousSolubility}, necessitating computational models across diverse media. 

Solubility prediction has evolved from relatively simple empirical equations such as the General Solubility Equation (GSE) \citep{yalkowskey1980solubility} and the ESOL model \citep{delaney2004esol}, to deep learning architectures trained on large-scale curated datasets such as AqSolDB \citep{sorkun2019aqsoldb}, BigSolDB \citep{krasnov2023bigsoldb}, and BigSolDB2.0 \citep{krasnov2025bigsoldb}. Current state-of-the-art models, including FastSolv \citep{attia2025data}, and Solvaformer \citep{Broadbent2025Solvaformer} push accuracy toward the `aleatoric limit', the irreducible noise floor of experimental variability \citep{palmer2014experimental}. However, recent surveys \citep{llompart2024solved, attia2025data} identify critical gaps: deep models sacrifice transparency for marginal gains, incur prohibitive computational costs, and often exhibit performance inflation from inconsistent curation.

A more thorough review of literature is given in Appendix~\ref{app:relatedwork}.

\textbf{Contributions:} We present \name, demonstrating state-of-the-art solubility prediction requiring neither expensive computation nor deep learning. \name (i) achieves faster training and inference than deep learning baselines, (ii) \rvansh{enforces thermodynamic consistency under the standard endothermic-dissolution regime via hard monotonic constraints}, and (iii) produces natural-language explanations grounded in model evidence. Our contributions are:
\vspace{-0.1in}
\begin{itemize}
    \item \textbf{Comprehensive study and open-source benchmarking.}
    We conduct an extensive field review and establish reproducible evaluation protocols across multiple publicly available datasets. We demonstrate that many published models underperform their reported metrics when standardized curation is applied. We identify multiple works reporting performance beyond the aleatoric limit ($\sim$0.6-0.8 log $S$), which is physically implausible and indicative of curation artifacts or information leakage.

    \item \textbf{State-of-the-art transparent architecture} We introduce \name, a Gradient Boosted Decision Tree framework that replaces non-transparent architectures with explicitly engineered molecular descriptors derived from structural information. \name\ achieves superior predictive accuracy while remaining computationally efficient and transparent.

    \item \textbf{Physics-aligned dual-regime modeling.} We propose a unified strategy: (i) a descriptor-only regressor for single-solvent systems, (ii) an Interaction Layer for multi-solvent environments to model solute–solvent coupling, and (iii) enforced monotonic constraints on temperature features.

    \item \textbf{LLM-assisted post-hoc explanations and validation.} We augment an LLM-assisted explainer that maps symbolic model artifacts, such as decision paths and interaction layer coefficients, into chemically grounded narratives. \rvansh{A six-stage guardrail pipeline (Sec.~\ref{sec:llm_explainer}) hides the predicted $\log S$ until the symbolic evidence is analysed first} We conduct a small evaluation study with \digit expert chemists to analyse how such explanations are interpreted and acted upon in practice; the resulting observations are discussed in Section~\ref{survey_verif}.
\end{itemize}

\section{Preliminaries and Problem Formulation}
\label{sec:problems}

\begin{definition}[Thermodynamic Solubility]
Solubility $S$ is defined as the maximum concentration of a solute $\mathcal{A}$ in a solvent $\mathcal{B}$ at thermodynamic equilibrium (fixed $T, P$), where the chemical potentials of the crystalline and solvated phases are equal. %The Gibbs free energy change of solution admits a decomposition into a lattice and solvation contribution:
% \begin{equation}
% \Delta G_{\text{sol}} = \Delta G_{\text{latt}} + \Delta G_{\text{solv}}.
% \end{equation}
% where $\Delta G_{\text{latt}}$ and $\Delta G_{\text{solv}}$ denote the free energy changes required to transfer a solute from the crystalline phase to the gas phase, and to allow the solute to dissolve, respectively. 
The standard Gibbs free energy change of solution, $\Delta G^o_{\text{sol}}$, under dilute solution conditions, is related to the equilibrium solubility $S$ as $\Delta G^o_{\text{sol}} = - RT \ln S + C$, where $R$ is the gas constant, $T$ is the temperature, and $C$ is a constant determined by the choice of the standard state \rvansh{\citep{boothroyd2018solubility, palmer2014experimental}}. 
\end{definition}

While this provides the formal definition of solubility, approaches to accurately calculate $\Delta G^o _\text{sol}$ as a physical property typically require calculation of computationally-expensive quantum-mechanical descriptors~\citep{boothroyd2018solubility}. We thus use the linear relationship between $\Delta G^o_{\text{sol}}$ and $\log S$ to estimate solubility.

Following common practice in solubility modeling and benchmark construction (\textit{e.g.}, in the ESOL dataset derived from  \citep{delaney2004esol}), we represent solubility as $\Sigma=\log_{10}(S)$, where $S$ is measured in $\text{mol L}^{-1}$.

We define a molecular solubility prediction task as follows.

\begin{definition}[Molecular Solubility Task] \label{task}
    Let $\mathcal{X_A}$ and $\mathcal{X_B}$ denote the feature representations of a solute $\mathcal{A}$ and a solvent $\mathcal{B}$, respectively. The task of molecular solubility prediction is to learn a mapping $f: (\mathcal{X_A}, \mathcal{X_B}, T) \rightarrow \Sigma$, where $T$ represents the absolute temperature of the system.
\end{definition}
To ensure that this learned mapping $f$ remains computationally feasible, we define descriptors for the solute and solvent, $\mathcal{X_A}$ and $\mathcal{X_B}$, respectively, which are easy to compute in comparison to descriptors that require the modelling of complex physical systems.

In this work, we distinguish between two distinct predictive regimes - (i) solubility prediction in a fixed solvent and (ii) solubility prediction in multiple solvents.

\begin{problem}[Solubility Prediction in a Single Solvent]
    Given a fixed solvent $\mathcal{B}$ (\textit{e.g.}, water), the task of prediction reduces to learning a mapping $f_{\mathcal{B}}:(\mathcal{X_A}, T) \rightarrow \Sigma$.
\end{problem}

\begin{problem}[Solubility Prediction in Multiple Solvents]
    When both the solute $\mathcal{A}$ and solvent $\mathcal{B}$ vary, the prediction task requires modelling the interactions $\mathcal{X_I}$ between the compounds. This involves defining an interaction mapping: $\mathcal{I}: (\mathcal{X_A}, \mathcal{X_B}) \rightarrow \mathcal{X_I}$. The mapping $f$ then incorporates these interactions alongside the individual component features and temperature:
        $f:(\mathcal{X_A},\mathcal{X_B},\mathcal{X_I}, T) \rightarrow \Sigma$ \footnote{Strictly speaking, the term $\mathcal{X_I}$ is not required - we augment this as a term to $f$ to separate the significance of interaction terms with the features of the individual molecules.}.
\end{problem}

\begin{definition}[Aleatoric Limit]
\label{def:aleatoric}
    Defined as the noise floor $\epsilon_{noise}$, representing the \rvansh{inter-laboratory variability and experimental stochasticity reported across independently measured solubility studies}. This has been estimated to lie between $\sim 0.6$ and $0.8$ $\log_{10}(S)$ units for heterogeneous, large-scale data \citep{attia2025data, palmer2014experimental} like AqSolDB \citep{sorkun2019aqsoldb} and BigSolDB \citep{krasnov2023bigsoldb, krasnov2025bigsoldb}. \rvansh{Because this estimate is derived from inter-laboratory disagreement on matched compounds, it is an upper bound on the true irreducible noise; intra-laboratory replicate variance (when reported) is typically lower (App.~\ref{app:aleatoric}).} For datasets arising from singular sources (\textit{e.g.}, ESOL \citep{delaney2004esol}), the aleatoric limit is undefined.

    Formally, the expected Mean Squared Error (MSE) for any model $M(\mathbf{x})$ is lower-bounded by the conditional variance :
    \begin{equation}
        \mathbb{E}_{\mathbf{x}, y} \left[ (y - M(\mathbf{x}))^2 \right] \geq \mathbb{E}_{\mathbf{x}} \left[ \text{Var}(y | \mathbf{x}) \right] = \epsilon_{noise}^2
    \end{equation}
    This bound implies an irreducible RMSE floor of $\epsilon_{noise}$, \rvansh{which arises from inherent measurement inconsistencies and experimental variability and sets an effective performance limit on any predictive model regardless of capacity}\footnote{While real-world experimental solubility noise is inherently heteroscedastic, treating $\epsilon_{\text{noise}}$ as a single constant provides an approximation of the expected irreducible noise floor across the dataset. See Appendix~\ref{formal-aleatoric}.}.
\end{definition}

Alongside accurate predictions within the aleatoric limit, we also wish to develop a model that is transparent. We thus define a model as \textbf{symbolically accessible:}

\begin{definition}[Symbolic Accessibility] \label{symbolic-acc}
    A predictive model $F: \mathcal{X} \to \mathcal{Y}$ is symbolically accessible if it can be expressed as a weighted sum of $K$ logical predicates:
    \vspace{-0.1in}
    \begin{equation}
        f(x) = \sum_{k=1}^K w_k \cdot \mathbb{I}(P_k(x))
    \end{equation}
    where $w_k \in \mathbb{R}$ and each $P_k(x)$ is a Boolean conjunction of literal constraints (\textit{e.g.}, $x_i \le \theta$). This structure ensures that the decision boundary is a deterministic aggregation of explicit, human-readable rules, decomposable into explicit rule predicates, in contrast to deep learning or GNN architectures. Such models are thus transparent by decomposition.
\end{definition}

\vspace{-0.1in}
\section{Proposed Methodology: \name}
\begin{figure*}[ht]
    \centering
    \includegraphics[width=0.9\textwidth]{files/method-1.png}
    \caption{The figure depicts the pipeline for Regime 2. We featurize both molecules, calculate interaction features from the distilled feature set, and pass them along with the physically-constrained thermodynamic terms to the regressor. The approach for Regime 1 is much simpler, with the solute features passed directly to the ensemble.}
    \label{fig:method}
    \vspace{-0.2in}
\end{figure*}

We propose that an ideal model should have three attributes: (i) it achieves accuracy close to the aleatoric limit, (ii) its features and training complexity are computationally inexpensive, and (iii) it is symbolically accessible (Definition~\ref{symbolic-acc}).

Guided by these three objectives, we introduce \name. We adopt an explicit, descriptor-driven representation where each feature encodes a simple, physically interpretable property and maps these descriptors to solubility using CatBoost~\citep{catboost}, leveraging \textit{oblivious trees} for symmetric splitting and \textit{ordered boosting} for bias reduction. We also assume a standard endothermic dissolution model for thermodynamic consistency.

\textbf{Physicochemical Featurizer:}\label{feat}
\name's descriptors are grounded in $3$ complementary physicochemical pillars:
\begin{enumerate}

    \item \textbf{Moietal Identity (Subgraph Isomorphism Census)}
    % Functional groups are the primary active sites for intermolecular interactions. Hydrogen bond donors/acceptors, polar carbonyls, and aromatic systems each contribute distinct energetic signatures to both crystal packing and solvation \citep{abraham1994hydrogen}. A faithful representation must therefore census which specific moieties are present.
    We define the molecular graph $\mathcal{G}=(\mathcal{V},\mathcal{E})$ and a library of functional group subgraphs $\mathcal{F} = \{\mathcal{S}_1, \dots, \mathcal{S}_k\}$. The moietal identity feature vector, $\mathbf{x}_{\mathrm{moi}} \in \mathbb{N}^k$ is constructed as a subgraph isomorphism counting function $\psi$:    
    \begin{equation}\label{mose}
        \mathbf{x}_{\mathrm{moi}}[i]
        = \psi(\mathcal{S}_i, \mathcal{G})
        = \left\lvert \left\{ g \subseteq \mathcal{G}
        : g \cong \mathcal{S}_i \right\} \right\rvert,
    \end{equation}
    where $\cong$ denotes graph isomorphism. \rvansh{That is, $\mathbf{x}_{\mathrm{moi}}$ is simply a vector of substructure counts: each entry records how many times a given functional group (e.g.\ a hydroxyl, carbonyl, or benzene ring) occurs in the molecule.} This quantifies distinct functional moieties which reflect the primary active sites for specific solute-solvent interactions.

    \item \textbf{Topological Structure (Homomorphism Profiles)}
    % Beyond functional group identity, the spatial arrangement of molecular fragments critically determines both lattice energy (via packing efficiency) and solvation (via accessible surface area). Branched vs. linear isomers exhibit drastically different solubilities despite identical functional groups. We therefore require descriptors that capture higher-order steric and connectivity patterns.
    To capture higher-order steric and connectivity patterns, we employ Motif Structural Encoding (MoSE) \citep{bao2025homomorphism}. Let $\mathcal{H} = \{H_1, \dots, H_d\}$ be a set of pattern graphs representing cycles ($C_3$–$C_8$), paths ($P_3$–$P_5$), and stars ($S_3$–$S_4$). For a molecular graph $G$, we define the topological feature vector $\mathbf{x}_{\mathrm{topo}} \in \mathbb{R}^d$ by the counts of adjacency-preserving mappings $\phi: \mathcal{V}(H_j) \to \mathcal{V}(G)$.
    % \begin{equation}
    % \begin{aligned}
    % \mathbf{x}_{\mathrm{topo}}[j]
    % &= \mathrm{Hom}(H_j, G) \\
    % &= \Bigl| \bigl\{ \phi : \mathcal{V}(H_j) \to \mathcal{V}(G)
    % \;\big|\; (u,v) \in \mathcal{E}(H_jC) \\
    % &\hspace{3.2em} \Rightarrow (\phi(u), \phi(v)) \in \mathcal{E}(G)
    % \bigr\} \Bigr|.
    % \end{aligned}
    % \end{equation}
    We augment this homomorphism profile with global topological indices (BalabanJ, BertzCT, Kappa statistics) \citep{1982CPL....89..399B, Bertz1981}  to improve injectivity (and reduce degeneracy) with respect to branching patterns.

    \item \textbf{Thermodynamic Reconstruction (Energetics).}
    To capture thermodynamic signatures related to molecular cohesion and disorder, we employ the Joback group-contribution method~\citep{joback1987estimation}, which provides an estimate of the melting point $T_{m,\mathrm{pred}}$:        
    \begin{equation}
        \begin{aligned}
            T_{m,\mathrm{pred}}(G) &= 122.5 + \sum_{g \in \mathcal{G}} N_g \cdot \Delta T_g 
        \end{aligned}
    \end{equation}
    Complementing these energetic terms, we reconstruct the five \textbf{Abraham solvation parameters}~\citep{abraham1994hydrogen} $\mathbf{\alpha} = [A, B, E, S, V]^\top$ \textit{via} physicochemical proxies. ($A$) combines H-bond donor count over acidic groups, basicity ($B$) sums H-bond acceptors over basic functional groups, polarizability ($S$) is estimated from heteroatom counts and aromatic ring content, excess molar refraction ($E$) is derived from molecular refractivity ($\text{MolMR}/10$), and McGowan characteristic volume ($V$) is calculated from atomic volume increments.        
    % \item \textit{Physicochemical State.} Solvation depends critically on electrostatic interactions (dipole-dipole, ion-dipole) and dispersion forces (van der Waals contacts). These are determined by the molecular charge distribution and solvent-accessible surface area \citep{jorgensen2002prediction}.
    We also require descriptors that encode both electronic structure and surface topology. We concatenate electronic descriptors $\mathbf{e}$ ({EState} indices, partial charges), surface area distributions $\mathbf{s}$ ({PEOE\_VSA}, {TPSA}), and a drug-likeness index $\mathbf{d}$ (QED) to form the final state vector $\mathbf{x}_{\mathrm{phys}} = [\mathbf{e} \oplus \mathbf{s} \oplus \mathbf{d} \oplus \text{MolLogP}]$. These features encode electrostatic and dispersion (van der Waals surface) characteristics that are critical for solvation thermodynamics.
    \end{enumerate}
A complete specification of the feature hierarchy and individual descriptors is provided in Table~\ref{tab:feature_hierarchy} in Appendix~\ref{appendix_method}. The final feature vector has dimensionality $\mathbf{x} \in \mathbb{R}^{176}$.
\vspace{-0.1in}
\subsection{Regime I: Single-Solvent Prediction}

For fixed-solvent tasks (\textit{e.g.}, aqueous solubility), the solvent environment is constant, and solubility prediction reduces to a solute-centric regression problem. Each solute molecule is represented by the fixed descriptor vector generated by the featurizer, and solubility is predicted directly from this representation. 
% We formulate this as an optimisation problem over the hypothesis space of gradient-boosted decision trees:
% \begin{align}
% \hat{f}
% &= \argmin_{f \in \mathcal{F}}
% \sqrt{\frac{1}{N} \sum_{i=1}^{N} \left( f(\mathbf{x}_i) - y_i \right)^2},
% \label{eq:regime1_opt}
% \end{align}
% where $\mathcal{F}$ denotes the hypothesis space of CatBoost models. Specifically, each $f \in \mathcal{F}$ is an additive ensemble of $T$ oblivious decision trees:
% \begin{align}
% f(\mathbf{x}_\mathcal{A}) = \sum_{t=1}^{T} \alpha_t \cdot h_t(\mathbf{x}_\mathcal{A}), \quad h_t : \mathbb{R}^{176} \to \mathbb{R}, \label{eq:catboost_ensemble}
% \end{align}
% where $\alpha_t \in \mathbb{R}$ are the tree weights, and each $h_t$ is an oblivious tree with depth $d$ that maps the solute feature vector $\mathbf{x}_{\mathcal{A}} \in \mathbb{R}^{176}$ to a leaf value. The final prediction is $\hat{y} = \hat{f}(\mathbf{x}_{\mathcal{A}})$.
We model this relationship using a gradient-boosted decision tree ensemble trained to minimise RMSE. 
% Formally, the prediction is given by
% \begin{equation}
%     \hat{\Sigma} = \text{CatBoost}(\mathbf{x}_\mathcal{A}) 
% \end{equation}

% where $\mathbf{x}_\mathcal{A}$ denotes the feature vector of the solute $\mathcal{A}$.

\vspace{-0.1in}
\subsection{Regime II: Multi-Solvent Interaction Learning} \label{interaction-learning}

When both the solute and solvent vary, solubility prediction requires modelling solvent-dependent interactions in addition to intrinsic molecular properties \citep{boobier2020machine, yeyunchen2025relational}. From a thermodynamic perspective, this regime is governed by the enthalpy of mixing, \(\Delta H_{\mathrm{mix}}\),  which emerges from pairwise compatibilities between the solute and solvent functional groups as well as their associated configurational degrees of freedom. \citep{Mianowski2023EnthalpyEntropy} 

% Approaches that simply concatenate solute and solvent descriptors or latent descriptors are insufficient, as they fail to capture the conditional nature of solvation interactions as a function of their higher-order features. For example, \citet{lee2022novel} proposed a multi-input GCN model that processes solute and solvent SMILES through separate graph convolution layers before concatenating their activations in a merge layer. When training on a subset of the top 13 most frequent solvents in the curated dataset from \citet{ramani2024molmerger}, the model achieves a high training $R^2$ of 0.82, yet the performance drops precipitously to $R^2 = 0.35$ on test sets containing unseen solvents. This suggests that the simple concatenation of descriptors fails to capture the conditional, higher-order physicochemical interactions necessary for true transferability across diverse chemical spaces.

To address this, we use an \textit{Interaction Layer}, a mechanism to model solute-solvent interactions. The model treats the solute ($\mathcal{A}$) and solvent ($\mathcal{B}$) as distinct chemical entities and learns interaction-weighted representations that reflect the physical principle of `like dissolves like' \citep{burke1984solubility}.

\textbf{Refined Feature Set.} The direct application of rotations to the full descriptor space is computationally expensive ($\mathcal{O}(D^2)$). We thus define the \textit{Refined Feature Set}, a physicochemical representation that compacts the descriptor set to a few interaction-relevant features.

\begin{definition}[Refined Feature Set]\label{distilled}
    Let $\phi_{\text{raw}}: \mathcal{M} \rightarrow \mathbb{R}^{D}$ denote the molecular featurization. We define a deterministic \rvansh{feature-selection transformation} $\label{eq:council}\Psi: \mathbb{R}^{D} \rightarrow \mathbb{R}^{24}$ \rvansh{(a subset of $\phi_{\text{raw}}$, not a learned projection)} such that the resulting vector $\mathcal{C}$ is formed by concatenating physically interpretable descriptors: 
    $\mathcal{C} = \Psi(\phi_{raw}(\mathcal{M})) = [\mathbf{p}_{\text{global}} \parallel \mathbf{f}_{\text{acid}} \parallel \mathbf{f}_{\text{base}} \parallel \dots \parallel \mathbf{f}_{\text{arom}}]$,    
where $\mathbf{p}_{\text{global}}$ encodes bulk physicochemical and thermodynamic properties, and $\mathbf{f}_{i=\text{acid/base/.../arom}}$ represents aggregated functional groups such as acidic, basic, and aromatic moieties, providing a coarse representation of the molecule. The complete specification is shown in Table~\ref{tab:council}.
\end{definition}

% \newpage
\textbf{Interaction Layer.} \label{transformer} \rvansh{Each of the 24 entries in $\Psi(\mathbf{x})$ is treated as a token corresponding to a single macro-physical property (e.g.\ MolLogP, acidity, basicity, ring count; Table~\ref{tab:council}); the cross-attention block thus computes a $24\!\times\!24$ matrix whose entries directly quantify feature-level solute--solvent coupling.} Let the refined representations of the solute and solvent be $\Psi(\mathbf{x}_\mathcal{A})$ and $\Psi(\mathbf{x}_\mathcal{B})$, respectively. We define linear projection operators to map these representations into a $d$-dimensional relational space ($d = 32$) to get $\mathbf{E}_\mathcal{A}$ and $\mathbf{E}_\mathcal{B}$. Following standard scaled dot-product attention~\rvansh{\citep{vaswani2017attention}}, \name computes the Query ($\mathbf{Q}_{\mathcal{A}}$), Key ($\mathbf{K}_{\mathcal{B}}$), and Value ($\mathbf{V}_{\mathcal{B}}$) matrices through linearly learned transformations of the input vectors: $ \mathbf{Q}_{\mathcal{A}} = \mathbf{E}_\mathcal{A} \mathbf{W}^{Q}, \mathbf{K}_{\mathcal{B}}= \mathbf{E}_\mathcal{B} \mathbf{W}^{K}, \mathbf{V}_{\mathcal{B}} = \mathbf{E}_\mathcal{B} \mathbf{W}^{V},$ where $\mathbf{W}^{Q}, \mathbf{W}^{K}, \mathbf{W}^{V} \in \mathbb{R}^{d \times d_k}$ are learnable projection matrices. The parameters ($d = 32, d_k = 8$) in this interaction layer are small and explicitly displayed, and downstream transparency is achieved by exposing the interaction matrix as a sparse heatmap over distilled tokens.
% \begin{equation}
% \label{eq:transformer}
% \mathbf{Z}_{\mathcal{A} \rightarrow \mathcal{B}} =
% \text{softmax}\!\left(
% \frac{\mathbf{Q}_{\mathcal{A}} \mathbf{K}_{\mathcal{B}}^{\top}}{\sqrt{d_k}}
% \right)
% \mathbf{V}_{\mathcal{B}}.
% \end{equation}

% This operation yields an enriched solute representation in which each feature is contextualised by solvent-specific interactions. The resulting attention weights provide intrinsic interpretability by revealing physicochemical feature pairings that govern solubility across different solvent media.

This mechanism can be viewed as solving for a representation that maximises the ``solvation affinity'' between the solute's features and the solvent's properties. Formally, for a given solute $A$ and solvent $B$, the enriched representation $\mathbf{Z}_{\mathcal{A} \rightarrow \mathcal{B}}$ can be framed as:
\vspace{-0.12in}
\begin{equation}{
\mathbf{Z}_{\mathcal{A} \to \mathcal{B}} = \sigma \left( \frac{\mathbf{Q}_{\mathcal{A}} \mathbf{K}_{\mathcal{B}}^{\top}}{\sqrt{d_k}} \right) \mathbf{V}_{\mathcal{B}}
}\end{equation}

where $\sigma$ is the softmax normalization. This yields an interaction matrix that determines the influence of each solvent moiety on the solute's final representation. This provides an approach to dissect the specific strengths of solute-solvent interactions through a visual heatmap (Appendix~\ref{appendix_method}).

\begin{table}[b!]
\vspace{-0.2in}
\centering
\caption{Refined Feature Set ($\mathcal{C} \in \mathbb{R}^{24}$)}
\label{tab:council}
\resizebox{\columnwidth}{!}{%
\begin{tabular}{@{}ll@{}}
\toprule
\textbf{Category} & \textbf{Constituent Tokens (Descriptors)} \\ \midrule
\multirow{5}{*}{\textbf{Global Physics}} & \textbf{Physicochemical:} MolWt, LogP, TPSA, LabuteASA, \\
 & HallKierAlpha, HeavyAtomCount, AromaticRings, \\
 & Max/Min PartialCharge, NumRotatableBonds. \\
 & \textbf{H-Bonding:} NumHDonors, NumHAcceptors. \\
 & \textbf{Thermodynamic:} Pred\_$T_m$, Abraham A, B, S, E, V. \\ \midrule
\multirow{6}{*}{\textbf{Functional Super-Groups}} & \textbf{Acidic:} Carboxyls, Phosphates ($\Sigma$ \texttt{fr\_COO...}) \\
 & \textbf{Basic:} Amines, Pyridines, Amidines, Lactams \\
 & \textbf{Protic:} Alcohols, Phenols, Thiols \\
 & \textbf{Polar:} Ketones, Aldehydes, Esters, Nitriles, Sulfones \\
 & \textbf{Halogen:} F, Cl, Br, I, Alkyl Halides \\
 & \textbf{Aromatic:} Benzene, Heterocycles (Furan, Thiophene) \\ \bottomrule
\end{tabular}%
}
\end{table}

\textbf{Final Prediction.} To produce the final solubility estimate, we fuse the interaction-aware representation with thermodynamic state variables within a physics-constrained framework under an assumption of endothermic dissolution:

\textbf{Signed-Interaction Modulus.} Let $\mathbf{z}_k \in \mathbb{R}^d$ denote the $k$-th row of the matrix $\mathbf{Z}_{\mathcal{A}\rightarrow\mathcal{B}}$, corresponding to the weighted representation of the $k$-th refined feature token. To map these representations to physically interpretable energetic contributions, we define a signed-modulus interaction intensity for each token $k$.
\vspace{-0.1in}
\begin{equation}
\label{eq:intensity}
    I_k = \underbrace{\text{sgn}(\bar{z}_k)}_{\text{Direction}} \cdot \underbrace{||\mathbf{z}_k||_2}_{\text{Magnitude}} \cdot \underbrace{\frac{1}{T}}_{\text{Arrhenius Term}} 
\end{equation}
where $\bar{z}_k$ is the scalar mean of the elements in $\mathbf{z}_k$. The sign captures the directionality of the interaction (stabilising \textit{vs} destabilising), while the $L_2$ norm represents its total magnitude. The inverse-temperature scaling, where $T$ is the system temperature defined in Definition~\ref{task}, adopts a phenomenological form inspired by the van’t Hoff relation, $\ln S \propto -\Delta H / RT$, which allows $I_k$ to serve as a first-order heuristic for the enthalpy of mixing $\Delta H_{\text{mix}}$ by assuming a linear-reciprocal dependence on temperature.

\textbf{Thermodynamic State Augmentation.}
We explicitly incorporate thermodynamic state variables to anchor the predictions to physically meaningful regimes. These include the absolute temperature $T$, inverse temperature ($1/T$), the predicted solute melting point $T_{m}$ as a proxy for lattice energy, and the reduced temperature $T_r = T/T_{m,\text{pred}}$, derived from corresponding states theory \citep{Leland1968CorrespondingStates}. \rvansh{We have now all features for the model. Throughout the paper, the per-molecule descriptor count is $176$, and the final Regime-II CatBoost input after solute, solvent, interaction, and thermodynamic terms are included, has $380$ features.}

\textbf{Physics-aware monotonic constraints.}
\rvansh{A primary reason we adopt a gradient-boosted decision tree backbone is that it natively supports \emph{hard} per-feature monotonicity at split time, providing a structural guarantee that soft Sobolev penalties (used, e.g., by FastSolv~\citep{attia2025data}) only approximate at substantially higher computational cost (App.~\ref{app:monotonicity}).}
Solubility in general is expected to increase monotonically with absolute temperature $T$ and reduced temperature $T_r$, while exhibiting an inverse monotonic dependence on $1/T$ under a standard assumption of endothermic dissolution. \rvansh{Because tree ensembles are piecewise constant, the following constraints are enforced as coordinate-wise finite monotonicity rather than as differentiability assumptions:}
\begin{equation}\label{thermo-constraints}
    \frac{\partial \hat{\Sigma}}{\partial T} \geq 0, \quad
    \frac{\partial \hat{\Sigma}}{\partial (1/T)} \leq 0, \quad
    \frac{\partial \hat{\Sigma}}{\partial T_r} \geq 0.
\end{equation}

\begin{lemma}[\rvansh{Closure of Monotone Ensembles}]
\label{lemma:geometric_restriction}
\rvansh{Let $\mathcal{H}_{\mathcal{M}}$ denote the class of prediction functions that are coordinate-wise nondecreasing in $T$ and $T_r$ and coordinate-wise nonincreasing in $1/T$, holding all other descriptors fixed. If $f,g \in \mathcal{H}_{\mathcal{M}}$ and $\alpha,\beta \geq 0$, then $\alpha f + \beta g \in \mathcal{H}_{\mathcal{M}}$. Consequently, any boosted additive ensemble with nonnegative learning rates remains monotone whenever each base learner is monotone.} 

Proof: See Proof \ref{proof:lemma_geometric} in Appendix \ref{theorems-app}.
\end{lemma}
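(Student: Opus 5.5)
The proof is a direct verification from the definition of $\mathcal{H}_{\mathcal{M}}$, followed by induction over the number of boosting rounds.

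\textbf{Pointwise inequalities.} Fix all descriptors other than the thermodynamic coordinate under consideration. Take two inputs $\mathbf{x}, \mathbf{x}'$ that differ only in that coordinate.

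For the coordinate $T$ (and identically for $T_r$), suppose $\mathbf{x}'$ has the larger value. Membership in $\mathcal{H}_{\mathcal{M}}$ gives $f(\mathbf{x}') \ge f(\mathbf{x})$ and $g(\mathbf{x}') \ge g(\mathbf{x})$. Multiplying by $\alpha, \beta \ge 0$ preserves these inequalities, and adding them yields $(\alpha f + \beta g)(\mathbf{x}') \ge (\alpha f + \beta g)(\mathbf{x})$.

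For the coordinate $1/T$, the same argument applies with the inequalities reversed. So $\alpha f + \beta g$ satisfies all three coordinate-wise conditions in \eqref{thermo-constraints}, read as finite monotonicity rather than derivatives. This shows $\mathcal{H}_{\mathcal{M}}$ is a convex cone.

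\textbf{Induction over boosting rounds.} Write the ensemble as $F_M = c + \sum_{m=1}^{M} \eta_m h_m$, with $\eta_m \ge 0$ and each $h_m \in \mathcal{H}_{\mathcal{M}}$.

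The base case is the constant initializer $c$ (or $F_0 \equiv 0$). A constant is trivially monotone in every coordinate, so it lies in $\mathcal{H}_{\mathcal{M}}$.

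For the inductive step, $F_{m} = 1\cdot F_{m-1} + \eta_m h_m$. The first part applies with $\alpha = 1$ and $\beta = \eta_m$, so $F_m \in \mathcal{H}_{\mathcal{M}}$.

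\textbf{Main obstacle.} The algebra is trivial; the only real subtlety is definitional. The three constraints must be imposed jointly on independent coordinates, even though $T$, $1/T$ and $T_r$ are functionally linked in the data. The lemma treats them as separate input coordinates with the others held fixed, so no chain-rule interaction arises. I will state explicitly that the constraint is coordinate-wise on the feature vector passed to CatBoost.

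I will also note two further points. Piecewise-constant base trees need no differentiability, which is why the finite-difference formulation is used. Any leaf-value shrinkage or ordered-boosting averaging is a nonnegative combination of monotone functions, so it is again covered by the first part.
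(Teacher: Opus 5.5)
Your proposal is correct and follows the paper's own proof. Both arguments check pointwise that nonnegative combinations preserve each coordinate-wise inequality, with the $1/T$ case obtained by reversing the inequalities, and then induct over boosting rounds starting from a constant (hence monotone) initializer.
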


\begin{lemma}[Efficiency of Monotonic Optimization]
\label{lemma:computational_efficiency}
Consider training base learners $f_t \in \mathcal{H}_{\mathcal{M}}$ with oblivious trees of depth $L$, where $d_t$ features are subject to monotonic constraints on $N$ samples with $K$ features. Then:
\begin{enumerate}
    \item \textbf{Complexity}: Training incurs $\mathcal{O}(NKL)$ cost per tree for split search and $\mathcal{O}(d_t \cdot 2^L)$ for leaf adjustment. For realistic configurations ($L \leq 10$, $d_t \ll K$, $N \geq 10^4$), the adjustment cost is negligible: $\mathcal{O}(d_t \cdot 2^L) \ll \mathcal{O}(NKL)$.
    
    \item \textbf{\rvansh{Closure under boosting}}: The ensemble $F(x) = \sum_{t=1}^T \eta f_t(x)$ \rvansh{remains} in $\mathcal{H}_{\mathcal{M}}$ for learning rates $\eta \geq 0$, without post-hoc projection.
\end{enumerate}
Proof: See Proof \ref{proof:lemma_computational} in Appendix \ref{theorems-app}.
\end{lemma}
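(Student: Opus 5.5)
For part 2 (closure under boosting), I would invoke Lemma~\ref{lemma:geometric_restriction} directly and induct on the number of trees. The base case is the empty ensemble $F_0 \equiv 0$, or a constant initial prediction. A constant is trivially in $\mathcal{H}_{\mathcal{M}}$, since it is both nondecreasing and nonincreasing in every coordinate. For the inductive step, assume $F_{t-1}\in\mathcal{H}_{\mathcal{M}}$ and that the constrained split/leaf procedure returns $f_t\in\mathcal{H}_{\mathcal{M}}$. Then $F_t = 1\cdot F_{t-1} + \eta f_t$ with $\eta\ge 0$, so the lemma gives $F_t\in\mathcal{H}_{\mathcal{M}}$.

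The content that remains is to justify that each base learner really is monotone after leaf adjustment. For an oblivious tree of depth $L$, every level uses one (feature, threshold) pair. A leaf is indexed by a binary word $b\in\{0,1\}^L$. Fix a constrained coordinate, say $T$, and the levels $\ell_1,\dots,\ell_m$ that split on it. Increasing $T$ with all else fixed can only flip bits at those levels from $0$ to $1$, and it does so in threshold order. Monotonicity of the tree is therefore equivalent to a finite set of inequalities: leaf values must be ordered along these bit-flip chains. For $1/T$ the order is reversed. Because the constraint is imposed at split/leaf time and no projection is needed afterward, this yields $f_t\in\mathcal{H}_{\mathcal{M}}$.

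For part 1 (complexity), I would count the two phases separately. For split search, CatBoost quantizes features into a bounded number of borders. Computing gradient histograms for all $K$ features over $N$ samples at one level costs $\mathcal{O}(NK)$. Oblivious trees share one split per level, so $L$ levels give $\mathcal{O}(NKL)$. For leaf adjustment, there are $2^L$ leaves. For each of the $d_t$ constrained features, the inequalities above form chains that partition the leaves. Each chain can be enforced by a pool-adjacent-violators (isotonic) pass, which is linear in the chain length, so each feature costs $\mathcal{O}(2^L)$ and the total is $\mathcal{O}(d_t 2^L)$.

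The negligibility claim then follows from arithmetic. With $L\le 10$, $2^L\le 1024$. Since $d_t\ll K$ and $N\ge 10^4$, we get $d_t 2^L \le d_t\cdot 1024 \ll N K L$.

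I expect the main obstacle to be the leaf-adjustment step. The lemma has to be made precise about which adjustment procedure is used. Once several levels split on the same constrained feature, the order constraints form a product-like partial order rather than simple chains. With three constrained features they may interact, and then the cost is not obviously $\mathcal{O}(d_t 2^L)$. I would first handle this by treating each constrained feature's constraint set separately, as a disjoint union of chains. I would then argue that sequential per-feature isotonic projection preserves previously enforced orders. If that argument fails, I would fall back to stating the bound for the standard CatBoost heuristic, which adjusts leaves feature-by-feature in a single pass.
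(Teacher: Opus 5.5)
There is a genuine gap, and it sits in the leaf-adjustment step you flagged yourself.

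\textbf{What matches the paper.} Your closure argument (induction from a constant $F_0$ via Lemma~\ref{lemma:geometric_restriction}) matches the paper's proof. So does your split-search count: $\mathcal{O}(NK)$ histogram work per level and one shared split per level, hence $\mathcal{O}(NKL)$. Your bit-flip-chain characterization of a monotone oblivious tree usefully spells out what the paper leaves as ``by construction''. Your per-chain PAVA count is also correct when a tree splits on only one constrained feature.

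\textbf{Where the proposed patch fails.} Sequential per-feature isotonic projection does not preserve previously enforced orders once leaves carry unequal weights. They do carry such weights here: with squared loss, the paper's $\boldsymbol{\ell}^*$ is an isotonic regression weighted by leaf sample counts. Take a depth-$2$ tree that splits once on each of two positively constrained features $A,B$, with leaves $(a,b)\in\{0,1\}^2$. Give them raw values $v(0,0)=5$, $v(0,1)=0$, $v(1,0)=5$, $v(1,1)=1$ and weights $100,1,1,100$. This assignment is monotone in $A$. Weighted PAVA along the two $B$-chains pools them to $500/101\approx 4.95$ at $a=0$ and $105/101\approx 1.04$ at $a=1$, which destroys the $A$-order.

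With uniform weights your claim can be rescued. The min--max formula $\hat y_i=\max_{s\le i}\min_{t\ge i}\bar y_{s:t}$ makes isotonic regression order-preserving in the data, and chains of a product grid are index-aligned. But you then get only a feasible point, not the constrained minimizer. For values $0,2,3,1$ at $(0,0),(0,1),(1,0),(1,1)$, projecting along $A$ and then $B$ gives squared error $2.375$, whereas the joint projection $(0,2,2,2)$ gives $2$.

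Your fallback of a single feature-by-feature pass has the same defect. In the weighted case it can even leave $f_t\notin\mathcal{H}_{\mathcal{M}}$: the first example ends infeasible in either processing order. That would break part~2 as well as part~1.

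\textbf{What the paper does instead.} The paper never attempts this decomposition. Its proof poses leaf adjustment as the exact constrained argmin. It then states outright that projecting onto the partially ordered monotone cone cannot be split into $d_t$ independent one-dimensional isotonic regressions. It argues only that the cost of this finite-poset projection is a function of the number of leaves ($2^L$) rather than of $N$. Negligibility then rests on the same numerical dominance estimate you give. So your route aims at a sharper bound than the paper's proof actually delivers.

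\textbf{How to close your argument.} You have two options.
\begin{enumerate}
\item Adopt the paper's weaker claim, an $N$-independent leaf cost. To justify negligibility you should then pair it with an explicit bound in terms of $2^L$, since a generic poset projection on $1024$ nodes need not be small next to $NKL$.
\item Redefine leaf adjustment as a uniform-weight feasibility pass. Your sequential argument then does give jointly monotone leaves in $\mathcal{O}(d_t 2^L)$, at the price of no longer computing $\boldsymbol{\ell}^*$.
\end{enumerate}
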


\begin{theorem}[Thermodynamic Consistency with \rvansh{Capacity Restriction}]
\label{thm:thermodynamic_consistency}
The \name framework, when trained with monotonic constraints on temperature features $\{T, 1/T, T_r\}$, satisfies:
\begin{enumerate}
    \item \textbf{Physical Consistency}: \rvansh{Under the standard endothermic-dissolution regime,} the predicted solubility obeys the \rvansh{coordinate-wise} thermodynamic constraints in Equation~\ref{thermo-constraints}.
    
    \item \textbf{\rvansh{Capacity Restriction}}: \rvansh{The monotone ensemble class is a subset of the corresponding unconstrained ensemble class. Therefore, on any fixed sample, the monotone class cannot have larger empirical Rademacher complexity or greater ability to fit arbitrary noise than the unconstrained class.}
    
    \item \textbf{Computational Feasibility}: Training complexity remains $\mathcal{O}(TNKL)$ for $T$ trees.
\end{enumerate}

\textit{Proof}: See Proof~\ref{proof:theorem_main} in Appendix~\ref{theorems-app}.
\end{theorem}

The final solubility prediction produced by \name:
\begin{equation}
\label{eq:catboost}
    \hat{\Sigma} = \text{\name}\!\left(\mathbf{x}_{\mathcal{A}}, \mathbf{x}_{\mathcal{B}}, \mathbf{I}, f(T)\right),
\end{equation}
where the interaction vector $\mathbf{I} = \{I_1, \ldots, I_{24}\}$ encodes solvent-conditioned physicochemical couplings derived from the interaction layer, and $\mathbf{x}_\mathcal{A}$, $\mathbf{x}_\mathcal{B}$ are the full feature sets for the solute and solvent respectively. $f(T)$ are the four temperature terms mentioned: $T_m, T, T_r, \frac{1}{T}$.

\vspace{-0.1in}
\section{LLM-Augmented Post-Hoc Explainer}
\label{sec:llm_explainer}
 Although \name is fully transparent at the model level, a fundamental gap remains between statistical transparency and \emph{scientific utility}. For example, \name exposes explicit decision tree statistics (\textit{e.g.}, ``Shap contribution of TPSA  = 0.78 and rotation Between Solute\_LogP and Solvent\_T\_m = 0.98''), but such statistics are chemically abstract. Interpreting such rules requires manual translation by connecting features to the known solubility trend, thereby increasing the cognitive load on domain scientists without adding new insight. To bridge this gap, we use LLMs.
%\vspace{-0.1in}
% \subsection{The Six-Stage Pipeline}

Rather than eliciting a single, end-to-end explanation, the LLM is guided through a six-stage \emph{progressive prompting framework} (bottom ribbon in Fig.~\ref{fig:method}) in which information is first gathered, then interpreted, and subsequently synthesized into the final narrative. The first three stages are dedicated to structured evidence acquisition, while the final three stages focus on synthesis, validation, and communication. Critically, the LLM is \emph{blinded to prediction quality} until Stage~3. This design choice prevents confirmation bias during early interpretation and ensures that explanations are grounded in model evidence rather than retrospectively rationalizing errors. The individual stages are as follows:
\vspace{-0.1in}
\begin{itemize}
\item \textbf{Stage 1: Molecular Characterisation:}
The LLM, provided only with SMILES representations, characterizes molecular structures through solubility-relevant properties (\textit{e.g.}, polarity, hydrogen bonding, and aromaticity).
\item \textbf{Stage 2: Evidence Analysis:}
% In this stage, the LLM is supplied with the raw solute and solvent descriptors (structural features) used by CatBoost (Eq.~\ref{eq:catboost}), together with feature-level Shapley values that quantify the marginal contributions of each feature, including temperature, and aggregated feature-group-level contributions to the prediction. The LLM is tasked with identifying dominant drivers, interpreting solute-solvent compatibility, and translating this statistical evidence into chemically grounded explanations relevant to solubility.
The LLM identifies dominant drivers and interprets solute solvent compatibility by processing raw \name descriptors, feature level Shapley values (including temperature), and aggregated group level contributions. This stage translates statistical evidence specifically marginal feature contributions into chemically grounded narratives.
\item \textbf{Stage 3: Decision Logic:}
The LLM is exposed to the model’s explicit decision logic in the form of \name tree paths, together with interaction matrices from the interaction transformer, the experimental temperature, and the predicted solubility value ($\log S$). The LLM maps these symbolic rules and feature interactions to interpretations consistent with chemical intuition. 
% The LLM maps symbolic rules, including CatBoost decision paths, interaction transformer cross-attention weights, and predicted $\log S$ onto post-hoc interpretations. By integrating experimental temperature and feature interactions, it aligns the model’s explicit logic with chemical intuition.
\item \textbf{Stage 4: Integration:}
The LLM synthesizes all preceding outputs into a structured, chemically grounded explanation comprising four components: \textbf{(1)} \textbf{Prediction \& Drivers:} Reports the predicted $\log S$ and identifies the 2–3 most influential features. \textbf{(2)} \textbf{Compatibility:} Details solute–solvent matching based on interaction rotation patterns. \textbf{(3)} \textbf{Mechanism:} Categorises the dominant dissolution regime (e.g., crystal- vs. solvent-limited). (4) \textbf{Confidence:} Flags discrepancies between model logic and established literature.

\item \textbf{Stage 5: Validation:}
The LLM validates and cross-references generated explanations against previously used model data (\textit{e.g.}, SHAP values, decision paths) to ensure they remain faithful to our model. Any flaw found is identified and corrected. This attempts to detect and correct logical ``drifts'' found within the explanation layer.
\item \textbf{Stage 6: Condensation:}
The LLM condenses the detailed solubility explanation produced above into a concise, information-dense summary.
\end{itemize}
\vspace{-0.1in}
The full prompt architecture and stage-wise outputs are detailed in Appendix~\ref{appendix:llm_explainer}.

\vspace{-0.1in}
\section{Experiments} \label{results}
We evaluate \name across two distinct physical regimes: \textbf{Aqueous Solubility} (Regime I) and \textbf{Multi-Solvent Solubility} (Regime II). All baselines are trained and evaluated using identical splits to ensure fair comparisons. % This separation allows us to isolate the efficacy of our \textit{Physicochemical Featurizer} from the interaction modelling capabilities of the \textit{Interaction Transformer}.

% \paragraph{Computational Efficiency.}
% \name achieves state-of-the-art performance on a consumer-grade Apple M2 CPU in minutes, remaining competitive with baselines trained on T4 GPUs (Appendix~\ref{app:complexity}). This is a significant step forward in efficient machine learning for molecular property prediction.
\vspace{-0.1in}
\subsection{Datasets}
\textbf{Aqueous Solubility Datasets:} We evaluate fixed-solvent performance using ESOL ($N=1128$) and AqSolDB ($N=8311$). ESOL benchmarks low-data behaviour, while AqSolDB evaluates large-scale generalisation under experimental noise. See Appendix~\ref{app:curation} for curation procedures.

% \subsubsection{ESOL} 
% We utilize the foundational ESOL dataset \citep{delaney2004esol} containing 1,128 drug-like compounds with measured aqueous solubility. While small, ESOL serves as the standard benchmark for evaluating performance in low-data regimes. We primarily use this dataset to baseline our Graph Attention Network (GAT) and to validate the efficacy of our descriptor-based CatBoost approach.

% \subsubsection{AqSolDB}
% For large-scale aqueous solubility, we utilize AqSolDB \citep{sorkun2019aqsoldb}, the largest curated open-source aqueous set. It aggregates measurements from nine distinct sources, totalling 9,982 raw entries. Starting with the raw set, we applied a minimal curation strategy (see Appendix \ref{app:curation}) to yield a final dataset of 8,311 compounds. We utilise this to evaluate our singular-solvent engine under realistic conditions.

\textbf{Multi-Solvent Datasets and External Validation:} We introduce from \citet{ramani2024molmerger} a benchmark for out-of-distribution (OOD) generalisation using a solvent-holdout split, in which the models are trained on frequent solvents and evaluated on rare solvent environments. We evaluate this using BigSolDB 1.0 and 2.0, which contain solubility measurements of more than 200 organic solvents. We report blind external validation results on the \textbf{Second Solubility Challenge dataset} \citep{sc2} and the \textbf{Leeds solubility dataset} \citep{boobier2020machine}. These datasets are independent of the training set and test robustness to distribution shift and external variability.

% \subsubsection{BigSolDB}
% To extend our framework beyond aqueous media, we utilise BigSolDB \citep{krasnov2023bigsoldb} and BigSolDB 2.0 \citep{krasnov2025bigsoldb}, large compendia of organic solubility measurements.

% \textbf{BigSolDB 1.0} consists of 54,273 experimental solubility values within a temperature range of 243.15 to 403.15 K across 138 solvents.

% \textbf{BigSolDB 2.0} consists of 103,944 experimental solubility measurements of 1,448 organic compounds in 213 solvents.

% These datasets are central to our analysis of transfer learning, in which we evaluate the model's ability to transfer learned representations from source to target solvents to identify the driving factors of solubility.

\begin{table*}[!ht]
\centering
\caption{\textbf{Performance Benchmarking.} Regime I focuses on aqueous datasets. Regime II evaluates generalization across multi-solvent environments. Reported mean $\pm$ std over 5 seeds. Dark green represents the best model (s), and light green represents the second-best model(s).}
\label{tab:main_results}
\vspace{-0.2in}
% --- SUBTABLE A: REGIME I ---
\begin{subtable}{\textwidth}
\centering
\caption{Regime I Results (Aqueous Only)}
\label{tab:aqueous_results}
\resizebox{0.9\textwidth}{!}{%
\begin{tabular}{l cc cc cc}
\toprule
\multirow{2}{*}{\textbf{Method}} &
\multicolumn{2}{c}{\textbf{ESOL}} &
\multicolumn{2}{c}{\textbf{AqSolDB}} &
\multicolumn{2}{c}{\textbf{Second Solubility Challenge}} \\
& RMSE & $R^2$ & RMSE & $R^2$ & RMSE & $R^2$ \\
\midrule
GSE\citep{yalkowskey1980solubility} & 2.2623 & -0.1669 & 2.5816 & -0.3779 & 2.7283 & -3.6450 \\
ESOL Model\citep{delaney2004esol} & 0.9802 & 0.7809 & 1.3161 & 0.6419 & 1.0445 & 0.3192 \\
\midrule
\textsc{Chemprop} \citep{heid2024chemprop} & {0.5749{\scriptsize$\pm0.0202$}} & {0.9245{\scriptsize$\pm0.0054$}} & 0.8833{\scriptsize$\pm0.0234$} & 0.8386{\scriptsize$\pm0.0086$} & 1.0192{\scriptsize$\pm0.0316$} & 0.3511{\scriptsize$\pm0.0400$} \\
\textsc{FastProp} \citep{burns2024fastprop} & 0.5485{\scriptsize$\pm0.0157$} & 0.9314{\scriptsize$\pm0.0039$} & 1.0514{\scriptsize$\pm0.0711$} & 0.7704{\scriptsize$\pm0.0314$} & 1.0329{\scriptsize$\pm0.0700$} & 0.3312{\scriptsize$\pm0.0919$} \\
\textsc{AqSolPred} \citep{sorkun2021aqsolpred} & 0.5033{\scriptsize$\pm0.0035$} & 0.9422{\scriptsize$\pm0.0008$} & \second{0.8270{\scriptsize$\pm0.0060$}} & \second{0.8437{\scriptsize$\pm0.0023$}} & 0.9490{\scriptsize$\pm0.0269$} & 0.4351{\scriptsize$\pm0.0322$} \\
\textsc{SolubNet} \citep{chen2023solubnet} & 0.7749{\scriptsize$\pm0.0253$} & 0.8629{\scriptsize$\pm0.0090$} & 1.1802{\scriptsize$\pm0.0044$} & 0.7120{\scriptsize$\pm0.0022$} & 0.9412{\scriptsize$\pm0.0100$} & 0.4471{\scriptsize$\pm0.0117$} \\
\citet{ulrich2025prediction} & 0.9823{\scriptsize$\pm0.0996$} & 0.7777{\scriptsize$\pm0.0439$} & 1.1933{\scriptsize$\pm0.0516$} & 0.7051{\scriptsize$\pm0.0252$} & 1.2129{\scriptsize$\pm0.1050$} & 0.0751{\scriptsize$\pm0.1612$} \\
\citet{tayyebi2023prediction} & 0.6926{\scriptsize$\pm0.0053$} & 0.8906{\scriptsize$\pm0.0017$} & 0.9064{\scriptsize$\pm0.0026$} & 0.8301{\scriptsize$\pm0.0010$} & \second{0.9102{\scriptsize$\pm0.0063$}} & \second{0.4829{\scriptsize$\pm0.0071$}} \\
\midrule
Random Forest \citep{breiman2001random} & 0.5429{\scriptsize$\pm0.0006$} & 0.9328{\scriptsize$\pm0.0001$} & 0.8902{\scriptsize$\pm0.0002$} & 0.8362{\scriptsize$\pm0.0001$} & \best{0.8784{\scriptsize$\pm0.0006$}} & \best{0.5184{\scriptsize$\pm0.0007$}} \\
XGBoost \citep{chen2016xgboost} & 0.5005{\scriptsize$\pm0.0056$} & 0.9429{\scriptsize$\pm0.0013$} & \second{0.8311{\scriptsize$\pm0.0029$}} & \second{0.8572{\scriptsize$\pm0.0010$}} & \second{0.9014{\scriptsize$\pm0.0171$}} & \second{0.4928{\scriptsize$\pm0.0193$}} \\
LightGBM \citep{lightgbm} & \second{0.4773{\scriptsize$\pm0.0104$}} & \second{0.9480{\scriptsize$\pm0.0022$}} & 0.8457{\scriptsize$\pm0.0037$} & 0.8522{\scriptsize$\pm0.0013$} & 0.9338{\scriptsize$\pm0.0118$} & 0.4558{\scriptsize$\pm0.0137$} \\
Artificial Neural Network & 0.8221{\scriptsize$\pm0.0818$} & 0.8444{\scriptsize$\pm0.0304$} & 1.1322{\scriptsize$\pm0.0639$} & 0.7341{\scriptsize$0.0293$} & 1.0403{\scriptsize$\pm0.0550$} & 0.3227{\scriptsize$\pm0.0733$} \\
\midrule[1.2pt]
\textbf{\name} & \best{0.4631{\scriptsize$\pm0.0021$}} & \best{0.9511{\scriptsize$\pm0.0007$}} & \best{0.8158{\scriptsize$\pm0.0030$}} & \best{0.8624{\scriptsize$\pm0.0010$}} & \second{0.9012{\scriptsize$\pm0.0033$}} & \second{0.4932{\scriptsize$\pm0.0037$}} \\
\bottomrule
\end{tabular}
}
\end{subtable}

% --- SUBTABLE B: REGIME II ---
\begin{subtable}{\textwidth}
\centering
\caption{Regime II Results (Multi-Solvent)}
\label{tab:multisolvent_results}
\resizebox{0.9\textwidth}{!}{%
\begin{tabular}{l cc cc cc}
\toprule
\multirow{2}{*}{\textbf{Method}} &
\multicolumn{2}{c}{\makecell{\textbf{BigSolDB 1.0} \scriptsize (Top-19 solvents)}} &
\multicolumn{2}{c}{\makecell{\textbf{BigSolDB 2.0} \scriptsize (Top-21 solvents)}} &
\multicolumn{2}{c}{\makecell{\textbf{Leeds Dataset} \scriptsize (Solute Holdout)}} \\
& RMSE & $R^2$ & RMSE & $R^2$ & RMSE & $R^2$ \\
\midrule
\textsc{FastSolv} \citep{attia2025data} & 0.7799{\scriptsize$\pm0.0243$} & 0.5449{\scriptsize$\pm0.0283$} & 0.6938{\scriptsize$\pm0.0041$} & 0.5844{\scriptsize$\pm0.0049$} & \second{0.8868{\scriptsize$\pm0.0065$}} & \second{0.3427{\scriptsize$\pm0.0096$}} \\
\textsc{RILOOD} \citep{yeyunchen2025relational} & 0.8814{\scriptsize$\pm0.0509$} & 0.4201{\scriptsize$\pm0.0675$} & 0.8791{\scriptsize$\pm0.0458$} & 0.3346{\scriptsize$\pm0.0710$} & 1.1566{\scriptsize$\pm0.0163$} & -0.1191{\scriptsize$\pm0.0316$} \\
\midrule
Random Forest \citep{breiman2001random} & \second{0.7276{\scriptsize$\pm0.0036$}} & \second{0.6060{\scriptsize$\pm0.0039$}} & 0.7257{\scriptsize$\pm0.0030$} & 0.5477{\scriptsize$\pm0.0038$} & 1.0479{\scriptsize$\pm0.0011$} & 0.0816{\scriptsize$\pm0.0019$} \\
XGBoost \citep{chen2016xgboost} & \second{0.7289{\scriptsize$\pm0.0051$}} & \second{0.6047{\scriptsize$\pm0.0055$}} & \second{0.6416{\scriptsize$\pm0.0019$}} & \second{0.6465{\scriptsize$\pm0.0021$}} & 0.9442{\scriptsize$\pm0.0110$} & 0.2542{\scriptsize$\pm0.0176$} \\
LightGBM \citep{lightgbm} & \second{0.7109{\scriptsize$\pm0.0065$}} & \second{0.6239{\scriptsize$\pm0.0069$}} & {0.6505{\scriptsize$\pm0.0036$}} & {0.6366{\scriptsize$\pm0.0040$}} & 0.9071{\scriptsize$\pm0.0051$} & 0.3117{\scriptsize$\pm0.0077$}\\
Artificial Neural Network & 0.8447{\scriptsize$\pm0.0129$} & 0.4689{\scriptsize$\pm0.0163$} & 1.0646{\scriptsize$\pm0.0295$} & 0.0259{\scriptsize$0.0539$} & 6.3069{\scriptsize$\pm1.7099$} & -34.7137{\scriptsize$\pm18.025$} \\
\midrule
GAT \citep{velickovic2018graph} & \second{0.7285{\scriptsize$\pm0.0387$}} & \second{0.6040{\scriptsize$\pm0.0415$}} & 0.6714{\scriptsize$\pm0.0087$} & {0.6128{\scriptsize$\pm0.0100$}} & 1.0851{\scriptsize$\pm0.0368$} & 0.0141{\scriptsize$\pm0.0662$} \\
GIN \citep{xu2018how} & 0.7711{\scriptsize$\pm0.0279$} & 0.5570{\scriptsize$\pm0.0323$} & 0.7398{\scriptsize$\pm0.0233$} & 0.5295{\scriptsize$\pm0.0296$} & 1.0168{\scriptsize$\pm0.0346$} & 0.1344{\scriptsize$\pm0.0592$} \\
GCN \citep{kipf2017gcn} & 0.8180{\scriptsize$\pm0.0157$} & 0.5019{\scriptsize$\pm0.0190$} & 0.7463{\scriptsize$\pm0.0258$} & 0.5211{\scriptsize$\pm0.0327$} & 1.2049{\scriptsize$\pm0.0806$} & -0.2196{\scriptsize$\pm0.1633$} \\
\midrule[1.2pt]
\textbf{\name} & \best{0.6783{\scriptsize$\pm0.0023$}} & \best{0.6576{\scriptsize$\pm0.0023$}} & \best{0.6305{\scriptsize$\pm0.0013$}} & \best{0.6587{\scriptsize$\pm0.0014$}} & \best{0.8798{\scriptsize$\pm0.0012$}} & \best{0.3527{\scriptsize$\pm0.0018$}} \\
\bottomrule
\end{tabular}
}
\end{subtable}
\vspace{-0.2in}
\end{table*}

\vspace{-0.1in}
\subsection{Benchmarking Protocol}
\textbf{Data Splits and Metrics.} We employ regime-specific splitting strategies designed to rigorously test both interpolation and extrapolation. For aqueous regimes, we perform a random train-test split; for multisolvent regimes, we train on the Top-$K$ solvents and test on the remaining solvents, to access generalisation to data-poor environments. We also test on the Leeds dataset, which contains entirely novel solutes from the training corpus. We report both RMSE and the $R^2$, with results averaged over five seeds and reported as mean $\pm$ standard deviation. For details of splits, see Appendix~\ref{app:curation}.

\textbf{Baselines.} %We benchmark our framework against a comprehensive suite of general and domain-specific models. 
First, to establish architectural baselines across both regimes, we evaluate standard tree-based learners (Decision Trees, Random Forests, and LightGBM), fundamental GNNs (MPNN, GCN, GAT, and GIN), and a simple ANN. Second, for aqueous solubility (Regime I), we compare against established physicochemical baselines, including the General Solubility Equation and the ESOL model, as well as modern deep learning-based architectures: Chemprop, Fastprop, SolTranNet \citep{francoeur2021soltrannet}, and SolubNet. We also include recent descriptor-based models by \citet{tayyebi2023prediction} and tautomer-enhanced augmentation learning by \citet{ulrich2025prediction}. Third, for multi-solvent prediction (Regime II), we evaluate against FastSolv , Chemprop, and OOD generalisation frameworks such as RILOOD. Complete results for all baselines are provided in Appendix~\ref{complete-baseline-app}. Tree-based models and ANNs were provided the same features as \name, and graph-based models utilised the GraphConvMol Featurizer.

\vspace{-0.1in}
\subsection{Regime I: Aqueous Solubility}
\name outperforms all baselines on AqSolDB and maintains strong generalisation on the Second Solubility Challenge, indicating robustness to distribution shift (Table~\ref{tab:aqueous_results}). \rvansh{A wall-clock breakdown against FastSolv is reported in App.~\ref{app:parameter-and-efficiency}, and the eight most influential SHAP features on ESOL are listed in App.~\ref{app:features}.}

\vspace{-0.1in}
\subsection{Regime II: Multi-Solvent Solubility}
\textbf{Generalisation Across Solvents.}
Training on frequent solvents and evaluating on rare or previously unseen solvents constitutes a strict OOD-learning task. \name consistently outperforms baselines across all datasets, demonstrating strong generalisation against variability (Table~\ref{tab:multisolvent_results}). % Performance on the Leeds dataset confirms robustness to inter-laboratory noise and dataset shift, as these measurements were excluded from BigSolDB during model construction.

While latent-based models (\textit{e.g.}, \citet{yeyunchen2025relational}) attempt to learn environment-independent representations, we show that \name's structural invariance arises from its structural descriptor set derived from fundamental chemical laws, remaining invariant under distribution shifts. The results in Table~\ref{tab:multisolvent_results} verify that latent disentanglement in molecular relational learning often captures spurious noise rather than invariant chemical signals. \name's superior performance indicates that explicit featurization is a more reliable substrate for OOD generalization.

\textbf{Physical Consistency with the Apelblat Equation.} We assessed the model's ability to reproduce temperature-dependent trends modeled by the empirical Apelblat equation, $\ln S = A + B/T + C \ln T$, \rvansh{on BigSolDB~1.0 (representative parity plots in App.~\ref{app:persolvent})}. Predicted curve shapes demonstrated high physical fidelity, with Pearson correlations of 0.9886 (train) / 0.9844 (test) and Cosine similarities of 0.9398 (train) / 0.8201 (test). These results indicate that the model effectively captures underlying thermodynamic behaviors, even when absolute magnitudes vary.

\rvansh{\textbf{Hard monotonic constraints.} Beyond the small mean-RMSE improvement, a 5-seed ablation shows the constraint cuts seed-to-seed prediction variance by $44$--$72\%$ and contributes $\Delta\text{RMSE}\!=\!+0.0205$ on the $18.3\%$ of samples whose solutes have conflicting multi-source training data (vs.\ $+0.0018$ on clean data), acting as a structural regulariser where unconstrained trees catastrophically fail to extrapolate. Full ablation, stratified analysis, and comparison to soft Sobolev penalties are in App.~\ref{app:monotonicity}.}

\vspace{-0.1in}
\subsection{Ablation Studies}

\paragraph{Ablation Analysis: Task-Specific Chemical Logic.}
To validate the necessity of each component in our descriptor hierarchy, we conducted an ablation study across two distinct challenges: aqueous interpolation (AqSolDB, Table~\ref{tab:ablation-aqsoldb}) and solute extrapolation (Leeds, Table~\ref{tab:ablation-leeds}). In the aqueous regime (Regime I), performance is dominated by bulk {Physicochemical} descriptors (\textit{e.g.}, MolLogP and TPSA). All feature classes contribute significantly.

The solute-extrapolation task in Regime II (Leeds; Table~\ref{tab:ablation-leeds}) shows that the model can no longer rely on bulk properties of known scaffolds and instead must prioritize structural identity and solute–solvent interactions. Notably, removing {Interaction Terms} ($\mathbf{I}$) causes a degradation nearly five times the baseline experimental variance, indicating their importance for out-of-distribution generalization. Increased reliance on {Compositional} and {Topological} features further suggests that the model leverages local structural motifs to handle molecular novelty. Together, these results motivate the \name architecture: although different descriptor layers dominate across tasks, the complete descriptor space is required for consistent performance across chemical regimes. \rvansh{Replacing the cross-attention Interaction Block with direct concatenation of the full solute--solvent descriptors triples the feature dimension, increases training cost by $\sim\!9\!\times$, and degrades OOD RMSE (App.~\ref{app:concat_ablation}).} For complexity analysis, see Appendix~\ref{app:complexity}. A comprehensive SHAP mechanistic analysis of the global thermodynamic drivers and interaction fine-tuning is provided in Appendix~\ref{app:features}.

\begin{table}[t!]
    \centering
    \caption{\textbf{Feature Ablation on Leeds Dataset (Regime II).} Impact of removing or isolating descriptor groups on a solute extrapolation task. $\sigma_{\text{RMSE}} = 0.0024$ from 5-seed baseline.}
    \label{tab:ablation-leeds}
    \resizebox{\columnwidth}{!}{%
    \begin{tabular}{lccc}
        \toprule
        \textbf{Experiment} & \textbf{R$^2$} & \textbf{RMSE} & \textbf{$\Delta$RMSE} \\
        \midrule
        \textbf{Full Model (380 features)}   & \textbf{0.3576} & \textbf{0.8764} & -- \\
        \midrule
        REMOVE: $\mathbf{x}_\mathcal{A}, \mathbf{x}_\mathcal{B}$ Entirely & -0.0121 & 1.1001 & +0.2237 \\
        REMOVE: Category COMPOSITIONAL       & 0.3301 & 0.8950 & +0.0186 \\
        REMOVE: Category TOPOLOGICAL         & 0.3329 & 0.8931 & +0.0167 \\
        REMOVE: Category ENERGETIC           & 0.3482 & 0.8828 & +0.0064 \\
        REMOVE: Category PHYSICOCHEMICAL     & 0.3563 & 0.8773 & +0.0009 \\
        REMOVE: $\mathbf{I}$ (Interaction Terms) & 0.3411 & 0.8876 & +0.0112 \\
        REMOVE: $f(\mathbf{T})$ (Engineered State) & 0.3467 & 0.8838 & +0.0074 \\
        \midrule
        ONLY: Category COMPOSITIONAL         & 0.3456 & 0.8846 & +0.0082 \\
        ONLY: Category TOPOLOGICAL           & 0.1347 & 1.0172 & +0.1408 \\
        ONLY: Category ENERGETIC             & 0.2500 & 0.9470 & +0.0706 \\
        ONLY: Category PHYSICOCHEMICAL       & 0.2666 & 0.9364 & +0.0600 \\
        ONLY: $\mathbf{I}$ (Interaction Terms) & -0.0990 & 1.1463 & +0.2699 \\
        ONLY: $f(\mathbf{T})$ (Engineered State) & -0.0974 & 1.1455 & +0.2691 \\
        \bottomrule
    \end{tabular}%
    }
    \vspace{-0.1in}
\end{table}

\begin{table}[t!]
    \centering
    \caption{\textbf{Feature Ablation on AqSolDB (Regime I).} Single-solvent aqueous ablation. $\sigma_{\text{RMSE}} = 0.0030$ from 5-seed baseline.}
    \label{tab:ablation-aqsoldb}
    \resizebox{\columnwidth}{!}{%
    \begin{tabular}{lccc}
        \toprule
        \textbf{Experiment} & \textbf{R$^2$} & \textbf{RMSE} & \textbf{$\Delta$RMSE} \\
        \midrule
        \textbf{Full Model (176 features)}  & \textbf{0.8640} & \textbf{0.8110} & -- \\
        \midrule
        REMOVE: COMPOSITIONAL           & 0.8614 & 0.8187 & +0.0076 \\
        REMOVE: TOPOLOGICAL             & 0.8606 & 0.8211 & +0.0101 \\
        REMOVE: ENERGETIC               & 0.8623 & 0.8163 & +0.0052 \\
        REMOVE: PHYSICOCHEMICAL         & 0.8344 & 0.8950 & +0.0839 \\
        \midrule
        ONLY: COMPOSITIONAL             & 0.8283 & 0.9114 & +0.1004 \\
        ONLY: TOPOLOGICAL               & 0.7359 & 1.1303 & +0.3193 \\
        ONLY: ENERGETIC                 & 0.7914 & 1.0044 & +0.1934 \\
        ONLY: PHYSICOCHEMICAL           & 0.8577 & 0.8296 & +0.0186 \\
        \bottomrule
    \end{tabular}%
    }
    \vspace{-0.1in}
\end{table}
%\vspace{-0.1in}
\subsection{Human Expert Evaluation of Explanation Quality}
\label{survey_verif}

To validate the utility of our generated explanations, we conducted a blind survey with 7 Professors, 12 PhDs and 3 Masters-level chemists across 15 diverse solubility prediction tasks (330 total observations). The study assessed three critical dimensions: task difficulty, explanation quality, and trust enhancement. Results are summarized in Figure~\ref{fig:survey_heatmap} and Table~\ref{tab:survey_summary}. A detailed case study examining a representative prediction is provided in Appendix~\ref{case_study}.

\textbf{Task Difficulty.} In a zero-shot setting, human experts achieved an accuracy of only 17.9\% in classifying solubility into the correct log-bin. This performance is statistically indistinguishable from a random baseline of 20.0\% (one-sample $t$-test: $t = -0.61$, $p = 0.55$), underscoring the cognitive intractability of solubility prediction and the necessity of computational aids for this domain.

\textbf{Explanation Quality.} Despite the complexity of the underlying model, experts rated the generated chemical narratives highly, with a mean quality score of $3.75 \pm 1.10$ (SD) on a 5-point scale. This is significantly above the neutral threshold ($t = 12.41$, $p < 0.001$), with 60.9\% of ratings at 4-5, confirms the system produces chemically coherent rationales. \rvansh{A \emph{trained-vs.-untrained} ablation -- identical prompts run on a trained \name and a randomly-initialised copy, scored by an LLM-as-judge~\citep{zheng2023judging} -- yields mean agreement $1.83\!\pm\!0.42$ on a scale of $1$--$5$ ($n\!=\!30$ samples), indicating that the generated narratives are sensitive to model evidence rather than generic chemical priors (App.~\ref{app:llm_alignment}).} \rvansh{An end-to-end worked example tracing a single prediction from raw features to the final LLM narrative is given in App.~\ref{app:e2e_example}.}\looseness=-1

\textbf{Trust Enhancement.} Reading explanations resulted in a statistically significant net increase in expert agreement with the model's prediction ($\Delta = +0.17$, paired $t$-test: $t = 4.28$, $p < 0.001$, Cohen's $d = 0.235$). Trust increased in 23.6\% of cases while decreasing in only 10.3\%, demonstrating that the explanations are persuasive, bridging the gap between ``black box'' predictions and expert acceptance.\looseness=-1

\begin{figure}[t]
    \centering
    \includegraphics[width=0.99\columnwidth]{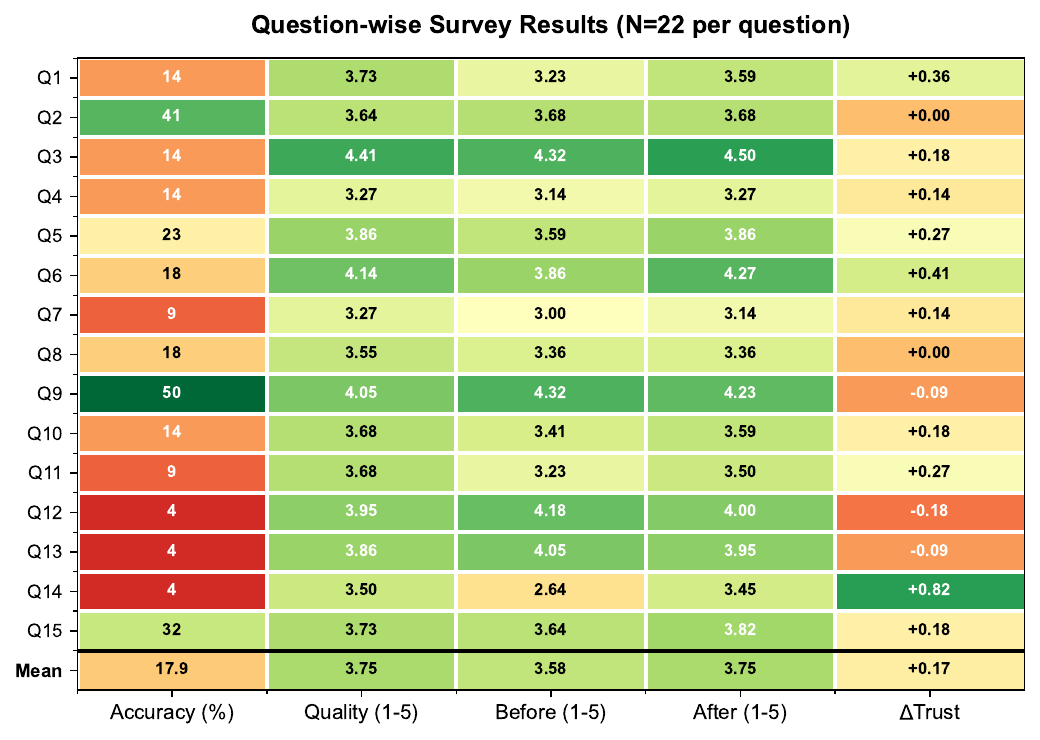}
    \caption{Question-wise survey results across 15 solubility prediction tasks (N=22 respondents per question). Colors indicate relative performance: green (high/positive), yellow (moderate), red (low/negative). The Mean row summarizes overall performance.}
    \label{fig:survey_heatmap}
    \vspace{-0.1in}
\end{figure}

\begin{table}[t]
\centering
\caption{Summary of Expert Survey Results}
\label{tab:survey_summary}
\resizebox{0.97\columnwidth}{!}{%
\begin{tabular}{lcc}
\toprule
\textbf{Metric} & \textbf{Value} & \textbf{Statistical Test} \\
\midrule
Respondents/Observations & 22 / 330 & -- \\
\midrule
Human Accuracy & 17.9\% & $t = -0.61$, $p = 0.55$ \\
Random Baseline & 20.0\% & (one-sample $t$-test) \\
\midrule
Explanation Quality (1-5) & $3.75 \pm 1.10$ & $t = 12.41$, $p < 0.001$ \\
\midrule
Trust Before Explanation & $3.58 \pm 1.05$ & -- \\
Trust After Explanation & $3.75 \pm 1.02$ & -- \\
Trust Change ($\Delta$) & $+0.17 \pm 0.72$ & $t = 4.28$, $p < 0.001$ \\
Effect Size (Cohen's $d$) & 0.235 & (paired $t$-test) \\
\bottomrule
\end{tabular}%
}
\vspace{-0.1in} % Adjusted slightly as resizebox can affect perceived spacing
\end{table}

\vspace{-0.1in}
\section{Conclusion}
This work introduces \name, a transparent framework for solubility prediction that also establishes a reproducible benchmark enabling fair comparisons across datasets and OOD regimes. By explicitly encoding moietal identity, topology, and phase-change energetics, the model achieves competitive performance approaching the empirical aleatoric limit ($\sim$0.6--0.8 $\log_{10}(S)$), suggesting that current performance is increasingly constrained by experimental noise rather than model depth. To improve scientific usability, we further introduce an LLM-assisted post-hoc explainer that translates symbolic model evidence into chemically grounded narratives, helping domain experts interpret predictions in practice.
Future work should focus on incorporating inexpensive quantum descriptors for electronically delicate systems and employing active learning to better explore underrepresented physicochemical regimes.

\rvansh{\vspace{-0.1in}
\section{Limitations}
\label{sec:limitations}
\name targets organic small molecules and therefore inherits three principled limitations. (i) \textbf{Endothermic-dissolution assumption.} Our monotonic temperature constraints encode the standard van't Hoff expectation that solubility increases with $T$. A source-wise audit shows this holds for $>96\%$ of solute--solvent pairs in BigSolDB~1.0/2.0 (see Appendix~\ref{app:endothermic_audit}, but truly exothermic systems (e.g.\ $\text{Ce}_2(\text{SO}_4)_3$, certain inorganic salts) lie outside the model's hypothesis class. (ii) \textbf{2D / stereochemical blindness.} Because the featurizer is intentionally lightweight (CPU-deployable, 176 descriptors), it does not encode stereochemistry or 3D conformation; \name will return identical predictions for stereoisomers, which is a hard limit for 3D-sensitive drug compounds. (iii) \textbf{Inorganic salts are out of scope.} The training corpora (AqSolDB, BigSolDB, Leeds) are dominated by organic small molecules, and the Abraham parameter reconstruction is calibrated for that class; we make no claim of validity on ionic or polymeric systems.}

\section*{Reproducibility Statement}
To support the reproducibility of our work, we provide several resources in the paper. The source code for our models is available in our \code{GitHub Repo}.

\section*{Impact Statement}

This paper presents work that prioritises interpretability and theory-grounded efficiency, lowering computational barriers and advancing progress through transparent, chemically insightful predictions for solubility.

\section*{Acknowledgements}

This work was partially supported by both the ``CSE Research Acceleration Fund" of the Indian Institute of Technology Delhi, and "gradCapital Advisors Private Limited".

% -------------------------------------------------
% Impact Statement
% -------------------------------------------------

% -------------------------------------------------
% Bibliography
% -------------------------------------------------

\bibliography{ref}

@article{delaney2004esol,
	author = {Delaney, John S. },
	date = {2004/05/01},
	doi = {10.1021/ci034243x},
	isbn = {0095-2338},
	journal = {Journal of Chemical Information and Computer Sciences},
	journal1 = {Journal of Chemical Information and Computer Sciences},
	journal2 = {J. Chem. Inf. Comput. Sci.},
	month = {05},
	number = {3},
	pages = {1000--1005},
	publisher = {American Chemical Society},
	title = {ESOL:  Estimating Aqueous Solubility Directly from Molecular Structure},
	type = {doi: 10.1021/ci034243x},
	url = {https://doi.org/10.1021/ci034243x},
	volume = {44},
	year = {2004},
	year1 = {2004}}

@article{wu2018moleculenet,
	author = {Wu, Zhenqin and Ramsundar, Bharath and Feinberg, Evan N. and Gomes, Joseph and Geniesse, Caleb and Pappu, Aneesh S. and Leswing, Karl and Pande, Vijay},
	doi = {10.1039/C7SC02664A},
	issue = {2},
	journal = {Chem. Sci.},
	pages = {513-530},
	publisher = {The Royal Society of Chemistry},
	title = {MoleculeNet: a benchmark for molecular machine learning},
	url = {http://dx.doi.org/10.1039/C7SC02664A},
	volume = {9},
	year = {2018}}

@article{sorkun2019aqsoldb,
	author = {Sorkun, Murat Cihan and Khetan, Abhishek and Er, S{\"u}leyman},
	date = {2019/08/08},
	doi = {10.1038/s41597-019-0151-1},
	id = {Sorkun2019},
	isbn = {2052-4463},
	journal = {Scientific Data},
	number = {1},
	pages = {143},
	title = {AqSolDB, a curated reference set of aqueous solubility and 2D descriptors for a diverse set of compounds},
	url = {https://doi.org/10.1038/s41597-019-0151-1},
	volume = {6},
	year = {2019}}

@article{krasnov2025bigsoldb,
	author = {Krasnov, Lev and Malikov, Dmitry and Kiseleva, Marina and Tatarin, Sergei and Sosnin, Sergey and Bezzubov, Stanislav},
	date = {2025/07/15},
	doi = {10.1038/s41597-025-05559-8},
	id = {Krasnov2025},
	isbn = {2052-4463},
	journal = {Scientific Data},
	number = {1},
	pages = {1236},
	title = {BigSolDB 2.0, dataset of solubility values for organic compounds in different solvents at various temperatures},
	url = {https://doi.org/10.1038/s41597-025-05559-8},
	volume = {12},
	year = {2025}}

@inproceedings{lightgbm,
	author = {Ke, Guolin and Meng, Qi and Finley, Thomas and Wang, Taifeng and Chen, Wei and Ma, Weidong and Ye, Qiwei and Liu, Tie-Yan},
	booktitle = {Advances in Neural Information Processing Systems},
	editor = {I. Guyon and U. Von Luxburg and S. Bengio and H. Wallach and R. Fergus and S. Vishwanathan and R. Garnett},
	publisher = {Curran Associates, Inc.},
	title = {LightGBM: A Highly Efficient Gradient Boosting Decision Tree},
	url = {https://proceedings.neurips.cc/paper_files/paper/2017/file/6449f44a102fde848669bdd9eb6b76fa-Paper.pdf},
	volume = {30},
	year = {2017}}

@InProceedings{gilmer2017Neural,
  title = 	 {Neural Message Passing for Quantum Chemistry},
  author =       {Justin Gilmer and Samuel S. Schoenholz and Patrick F. Riley and Oriol Vinyals and George E. Dahl},
  booktitle = 	 {Proceedings of the 34th International Conference on Machine Learning},
  pages = 	 {1263--1272},
  year = 	 {2017},
  editor = 	 {Precup, Doina and Teh, Yee Whye},
  volume = 	 {70},
  series = 	 {Proceedings of Machine Learning Research},
  month = 	 {06--11 Aug},
  publisher =    {PMLR},
  url = 	 {https://proceedings.mlr.press/v70/gilmer17a.html}
}

@inproceedings{kipf2017gcn,
  author    = {Kipf, Thomas N. and Welling, Max},
  title     = {Semi-Supervised Classification with Graph Convolutional Networks},
  booktitle = {International Conference on Learning Representations (ICLR)},
  year      = {2017},
  url       = {https://openreview.net/forum?id=SJU4ayYgl}
}

@article{velickovic2018graph,
  title={Graph attention networks},
  author={Veli{\v{c}}kovi{\'c}, Petar and Cucurull, Guillem and Casanova, Arantxa and Romero, Adriana and Li{\`o}, Pietro and Bengio, Yoshua},
  journal={International Conference on Learning Representations},
  year={2018}
}

@inproceedings{
xu2018how,
title={How Powerful are Graph Neural Networks?},
author={Keyulu Xu and Weihua Hu and Jure Leskovec and Stefanie Jegelka},
booktitle={International Conference on Learning Representations},
year={2019},
url={https://openreview.net/forum?id=ryGs6iA5Km},
}

@article{tang2020self,
	author = {Tang, Bowen and Kramer, Skyler T. and Fang, Meijuan and Qiu, Yingkun and Wu, Zhen and Xu, Dong},
	date = {2020/02/21},
	doi = {10.1186/s13321-020-0414-z},
	id = {Tang2020},
	isbn = {1758-2946},
	journal = {Journal of Cheminformatics},
	number = {1},
	pages = {15},
	title = {A self-attention based message passing neural network for predicting molecular lipophilicity and aqueous solubility},
	url = {https://doi.org/10.1186/s13321-020-0414-z},
	volume = {12},
	year = {2020}}

@article{francoeur2021soltrannet,
	author = {Francoeur, Paul G. and Koes, David R.},
	date = {2021/06/28},
	doi = {10.1021/acs.jcim.1c00331},
	isbn = {1549-9596},
	journal = {Journal of Chemical Information and Modeling},
	journal1 = {Journal of Chemical Information and Modeling},
	journal2 = {J. Chem. Inf. Model.},
	month = {06},
	number = {6},
	pages = {2530--2536},
	publisher = {American Chemical Society},
	title = {SolTranNet--A Machine Learning Tool for Fast Aqueous Solubility Prediction},
	type = {doi: 10.1021/acs.jcim.1c00331},
	url = {https://doi.org/10.1021/acs.jcim.1c00331},
	volume = {61},
	year = {2021},
	year1 = {2021}}

@article{ulrich2025prediction,
	author = {Ulrich, Nadin and Voigt, Karsten and Kudria, Anton and B{\"o}hme, Alexander and Ebert, Ralf-Uwe},
	date = {2025/04/21},
	doi = {10.1186/s13321-025-01000-9},
	id = {Ulrich2025},
	isbn = {1758-2946},
	journal = {Journal of Cheminformatics},
	number = {1},
	pages = {55},
	title = {Prediction of the water solubility by a graph convolutional-based neural network on a highly curated dataset},
	url = {https://doi.org/10.1186/s13321-025-01000-9},
	volume = {17},
	year = {2025}}

@article{boobier2020machine,
	author = {Boobier, Samuel and Hose, David R. J. and Blacker, A. John and Nguyen, Bao N.},
	date = {2020/11/13},
	doi = {10.1038/s41467-020-19594-z},
	id = {Boobier2020},
	isbn = {2041-1723},
	journal = {Nature Communications},
	number = {1},
	pages = {5753},
	title = {Machine learning with physicochemical relationships: solubility prediction in organic solvents and water},
	url = {https://doi.org/10.1038/s41467-020-19594-z},
	volume = {11},
	year = {2020}
}

@article{lee2023mogat,
	author = {Lee, Sangho and Park, Hyunwoo and Choi, Chihyeon and Kim, Wonjoon and Kim, Ki Kang and Han, Young-Kyu and Kang, Joohoon and Kang, Chang-Jong and Son, Youngdoo},
	date = {2023/03/02},
	doi = {10.1038/s41598-022-25701-5},
	id = {Lee2023},
	isbn = {2045-2322},
	journal = {Scientific Reports},
	number = {1},
	pages = {957},
	title = {Multi-order graph attention network for water solubility prediction and interpretation},
	url = {https://doi.org/10.1038/s41598-022-25701-5},
	volume = {13},
	year = {2023}}

@article{tayyebi2023prediction,
	author = {Tayyebi, Arash and Alshami, Ali S. and Rabiei, Zeinab and Yu, Xue and Ismail, Nadhem and Talukder, Musabbir Jahan and Power, Jason},
	date = {2023/10/18},
	doi = {10.1186/s13321-023-00752-6},
	id = {Tayyebi2023},
	isbn = {1758-2946},
	journal = {Journal of Cheminformatics},
	number = {1},
	pages = {99},
	title = {Prediction of organic compound aqueous solubility using machine learning: a comparison study of descriptor-based and fingerprints-based models},
	url = {https://doi.org/10.1186/s13321-023-00752-6},
	volume = {15},
	year = {2023}}

@article{boothroyd2018solubility,
	author = {Boothroyd, Simon and Kerridge, Andy and Broo, Anders and Buttar, David and Anwar, Jamshed},
	doi = {10.1039/C8CP01786G},
	issue = {32},
	journal = {Phys. Chem. Chem. Phys.},
	pages = {20981-20987},
	publisher = {The Royal Society of Chemistry},
	title = {Solubility prediction from first principles: a density of states approach},
	url = {http://dx.doi.org/10.1039/C8CP01786G},
	volume = {20},
	year = {2018}}

@article{breiman2001random,
	author = {Breiman, Leo},
	date = {2001/10/01},
	doi = {10.1023/A:1010933404324},
	id = {Breiman2001},
	isbn = {1573-0565},
	journal = {Machine Learning},
	number = {1},
	pages = {5--32},
	title = {Random Forests},
	url = {https://doi.org/10.1023/A:1010933404324},
	volume = {45},
	year = {2001}}

@article{llompart2024solved,
	author = {Llompart, P. and Minoletti, C. and Baybekov, S. and Horvath, D. and Marcou, G. and Varnek, A.},
	date = {2024/03/18},
	doi = {10.1038/s41597-024-03105-6},
	id = {Llompart2024},
	isbn = {2052-4463},
	journal = {Scientific Data},
	number = {1},
	pages = {303},
	title = {Will we ever be able to accurately predict solubility?},
	url = {https://doi.org/10.1038/s41597-024-03105-6},
	volume = {11},
	year = {2024}}

@article{attia2025data,
	author = {Attia, Lucas and Burns, Jackson W. and Doyle, Patrick S. and Green, William H.},
	date = {2025/08/19},
	doi = {10.1038/s41467-025-62717-7},
	id = {Attia2025},
	isbn = {2041-1723},
	journal = {Nature Communications},
	number = {1},
	pages = {7497},
	title = {Data-driven organic solubility prediction at the limit of aleatoric uncertainty},
	url = {https://doi.org/10.1038/s41467-025-62717-7},
	volume = {16},
	year = {2025}
}

@article{ramani2024molmerger,
	author = {Ramani, Vansh and Karmakar, Tarak},
	date = {2024/08/13},
	doi = {10.1021/acs.jctc.4c00382},
	isbn = {1549-9618},
	journal = {Journal of Chemical Theory and Computation},
	journal1 = {Journal of Chemical Theory and Computation},
	journal2 = {J. Chem. Theory Comput.},
	month = {08},
	number = {15},
	pages = {6549--6558},
	publisher = {American Chemical Society},
	title = {Graph Neural Networks for Predicting Solubility in Diverse Solvents Using MolMerger Incorporating Solute--Solvent Interactions},
	type = {doi: 10.1021/acs.jctc.4c00382},
	url = {https://doi.org/10.1021/acs.jctc.4c00382},
	volume = {20},
	year = {2024},
	year1 = {2024}}

@misc{wang2024yzs,
      title={YZS-model: A Predictive Model for Organic Drug Solubility Based on Graph Convolutional Networks and Transformer-Attention}, 
      author={Chenxu Wang and Haowei Ming and Jian He and Yao Lu and Junhong Chen},
      year={2024},
      eprint={2406.19136},
      archivePrefix={arXiv},
      primaryClass={cs.LG},
      url={https://arxiv.org/abs/2406.19136}, 
}

@article{sorkun2021aqsolpred,
  title        = {Pushing the limits of solubility prediction via quality-oriented data selection},
  author       = {Sorkun, Murat Cihan and Koelman, J.~M. Vianney A. and Er, S{\"u}leyman},
  journal      = {iScience},
  volume       = {24},
  number       = {1},
  pages        = {101961},
  year         = {2021},
  publisher    = {Cell Press},
  doi          = {10.1016/j.isci.2020.101961},
  url          = {https://doi.org/10.1016/j.isci.2020.101961}
}

@article{panapitiya2022evaluation,
	author = {Panapitiya, Gihan and Girard, Michael and Hollas, Aaron and Sepulveda, Jonathan and Murugesan, Vijayakumar and Wang, Wei and Saldanha, Emily},
	date = {2022/05/10},
	doi = {10.1021/acsomega.2c00642},
	journal = {ACS Omega},
	journal1 = {ACS Omega},
	journal2 = {ACS Omega},
	month = {05},
	number = {18},
	pages = {15695--15710},
	publisher = {American Chemical Society},
	title = {Evaluation of Deep Learning Architectures for Aqueous Solubility Prediction},
	type = {doi: 10.1021/acsomega.2c00642},
	url = {https://doi.org/10.1021/acsomega.2c00642},
	volume = {7},
	year = {2022},
	year1 = {2022}}

@article{chen2023solubnet,
	author = {Qiufen Chen and Yuewei Zhang and Peng Gao and Jun Zhang},
	doi = {https://doi.org/10.1016/j.aichem.2023.100010},
	issn = {2949-7477},
	journal = {Artificial Intelligence Chemistry},
	number = {2},
	pages = {100010},
	title = {An interpretable graph representation learning model for accurate predictions of drugs aqueous solubility},
	url = {https://www.sciencedirect.com/science/article/pii/S2949747723000106},
	volume = {1},
	year = {2023}}

@article{sc1,
	author = {Hopfinger, Anton J. and Esposito, Emilio Xavier and Llin{\`a}s, A. and Glen, R. C. and Goodman, J. M.},
	date = {2009/01/26},
	doi = {10.1021/ci800436c},
	isbn = {1549-9596},
	journal = {Journal of Chemical Information and Modeling},
	journal1 = {Journal of Chemical Information and Modeling},
	journal2 = {J. Chem. Inf. Model.},
	month = {01},
	number = {1},
	pages = {1--5},
	publisher = {American Chemical Society},
	title = {Findings of the Challenge To Predict Aqueous Solubility},
	type = {doi: 10.1021/ci800436c},
	url = {https://doi.org/10.1021/ci800436c},
	volume = {49},
	year = {2009},
	year1 = {2009}}

@article{sc2,
	author = {Llinas, Antonio and Oprisiu, Ioana and Avdeef, Alex},
	date = {2020/10/26},
	doi = {10.1021/acs.jcim.0c00701},
	isbn = {1549-9596},
	journal = {Journal of Chemical Information and Modeling},
	journal1 = {Journal of Chemical Information and Modeling},
	journal2 = {J. Chem. Inf. Model.},
	month = {10},
	number = {10},
	pages = {4791--4803},
	publisher = {American Chemical Society},
	title = {Findings of the Second Challenge to Predict Aqueous Solubility},
	type = {doi: 10.1021/acs.jcim.0c00701},
	url = {https://doi.org/10.1021/acs.jcim.0c00701},
	volume = {60},
	year = {2020},
	year1 = {2020}}

@article{Savjani2012DrugSolubility,
	author = {Savjani, Ketan T. and Gajjar, Anuradha K. and Savjani, Jignasa K.},
	doi = {https://doi.org/10.5402/2012/195727},
	eprint = {https://onlinelibrary.wiley.com/doi/pdf/10.5402/2012/195727},
	journal = {International Scholarly Research Notices},
	number = {1},
	pages = {195727},
	title = {Drug Solubility: Importance and Enhancement Techniques},
	url = {https://onlinelibrary.wiley.com/doi/abs/10.5402/2012/195727},
	volume = {2012},
	year = {2012}}

@article{Kumari2023AdvancementSolubilization,
	article-number = {1099},
	author = {Kumari, Lakshmi and Choudhari, Yash and Patel, Preeti and Gupta, Ghanshyam Das and Singh, Dilpreet and Rosenholm, Jessica M. and Bansal, Kuldeep Kumar and Kurmi, Balak Das},
	doi = {10.3390/life13051099},
	issn = {2075-1729},
	journal = {Life},
	number = {5},
	title = {Advancement in Solubilization Approaches: A Step towards Bioavailability Enhancement of Poorly Soluble Drugs},
	url = {https://www.mdpi.com/2075-1729/13/5/1099},
	volume = {13},
	year = {2023}}

@incollection{NRC2014PhysicochemFate,
  author    = {{National Research Council}},
  title     = {Physicochemical Properties and Environmental Fate},
  booktitle = {A Framework to Guide Selection of Chemical Alternatives},
  chapter   = {5},
  pages     = {135--188},
  publisher = {National Academies Press},
  address   = {Washington, DC},
  year      = {2014},
  isbn      = {978-0-309-29647-4},
  doi       = {10.17226/18872},
  url       = {https://www.ncbi.nlm.nih.gov/books/NBK253956/}
}

@article{Murdande2011AqueousSolubility,
	author = {Sharad B. Murdande and Michael J. Pikal and Ravi M. Shanker and Robin H. Bogner},
	doi = {10.3109/10837451003774377},
	eprint = {https://doi.org/10.3109/10837451003774377},
	journal = {Pharmaceutical Development and Technology},
	note = {PMID: 20429826},
	number = {3},
	pages = {187--200},
	publisher = {Taylor \& Francis},
	title = {Aqueous solubility of crystalline and amorphous drugs: Challenges in measurement},
	url = {https://doi.org/10.3109/10837451003774377},
	volume = {16},
	year = {2011}}

@article{yalkowskey1980solubility,
	author = {Yalkowsky, Samuel H. and Valvani, Shri C.},
	doi = {https://doi.org/10.1002/jps.2600690814},
	eprint = {https://onlinelibrary.wiley.com/doi/pdf/10.1002/jps.2600690814},
	journal = {Journal of Pharmaceutical Sciences},
	number = {8},
	pages = {912-922},
	title = {Solubility and partitioning I: Solubility of nonelectrolytes in water},
	url = {https://onlinelibrary.wiley.com/doi/abs/10.1002/jps.2600690814},
	volume = {69},
	year = {1980}}

@article{Broadbent2025Solvaformer,
  title   = {Solvaformer: an SE(3)-equivariant graph transformer for small molecule solubility prediction},
  author  = {Broadbent, Jonathan and Bailey, Michael and Li, Mingxuan and Paul, Abhishek and De Lescure, Louis and Chauvin, Paul and Kogler-Anele, Lorenzo and Jangjou, Yasser and Jager, Sven},
  journal = {arXiv preprint arXiv:2511.09774},
  year    = {2025},
  doi     = {10.48550/arXiv.2511.09774},
  archivePrefix = {arXiv},
  primaryClass  = {physics.chem-ph}
}

@inproceedings{catboost,
	author = {Prokhorenkova, Liudmila and Gusev, Gleb and Vorobev, Aleksandr and Dorogush, Anna Veronika and Gulin, Andrey},
	booktitle = {Advances in Neural Information Processing Systems},
	editor = {S. Bengio and H. Wallach and H. Larochelle and K. Grauman and N. Cesa-Bianchi and R. Garnett},
	publisher = {Curran Associates, Inc.},
	title = {CatBoost: unbiased boosting with categorical features},
	url = {https://proceedings.neurips.cc/paper_files/paper/2018/file/14491b756b3a51daac41c24863285549-Paper.pdf},
	volume = {31},
	year = {2018}}

@article{krasnov2023bigsoldb,
  title        = {{BigSolDB}: Solubility Dataset of Compounds in Organic Solvents and Water in a Wide Range of Temperatures},
  author       = {Krasnov, Lev and Mikhailov, Simon and Fedorov, Maxim and Sosnin, Sergey},
  journal      = {ChemRxiv},
  howpublished = {ChemRxiv preprint},
  year         = {2023},
  doi          = {10.26434/chemrxiv-2023-qqslt},
  url          = {https://doi.org/10.26434/chemrxiv-2023-qqslt},
  note         = {Preprint}
}

@inproceedings{zheng2023judging,
  title     = {Judging {LLM-as-a-Judge} with {MT-Bench} and Chatbot Arena},
  author    = {Zheng, Lianmin and Chiang, Wei-Lin and Sheng, Ying and Zhuang, Siyuan and Wu, Zhanghao and Zhuang, Yonghao and Lin, Zi and Li, Zhuohan and Li, Dacheng and Xing, Eric P. and Zhang, Hao and Gonzalez, Joseph E. and Stoica, Ion},
  booktitle = {Advances in Neural Information Processing Systems (NeurIPS) -- Datasets and Benchmarks Track},
  year      = {2023}
}

@article{kalepu2015insoluble,
	author = {Sandeep Kalepu and Vijaykumar Nekkanti},
	doi = {https://doi.org/10.1016/j.apsb.2015.07.003},
	issn = {2211-3835},
	journal = {Acta Pharmaceutica Sinica B},
	number = {5},
	pages = {442-453},
	title = {Insoluble drug delivery strategies: review of recent advances and business prospects},
	url = {https://www.sciencedirect.com/science/article/pii/S2211383515001069},
	volume = {5},
	year = {2015}}

@article{joback1987estimation,
	author = {K.G. Joback and R.C. Reid},
	doi = {10.1080/00986448708960487},
	eprint = {https://doi.org/10.1080/00986448708960487},
	journal = {Chemical Engineering Communications},
	number = {1-6},
	pages = {233--243},
	publisher = {Taylor \& Francis},
	title = {ESTIMATION OF PURE-COMPONENT PROPERTIES FROM GROUP-CONTRIBUTIONS},
	url = {https://doi.org/10.1080/00986448708960487},
	volume = {57},
	year = {1987}}

@article{abraham1994hydrogen,
	author = {Abraham, Michael H.},
	doi = {10.1039/CS9932200073},
	issue = {2},
	journal = {Chem. Soc. Rev.},
	pages = {73-83},
	publisher = {The Royal Society of Chemistry},
	title = {Scales of solute hydrogen-bonding: their construction and application to physicochemical and biochemical processes},
	url = {http://dx.doi.org/10.1039/CS9932200073},
	volume = {22},
	year = {1993}
}

@inproceedings{
bao2025homomorphism,
title={Homomorphism Counts as Structural Encodings for Graph Learning},
author={Linus Bao and Emily Jin and Michael M. Bronstein and Ismail Ilkan Ceylan and Matthias Lanzinger},
booktitle={The Thirteenth International Conference on Learning Representations},
year={2025},
url={https://openreview.net/forum?id=qFw2RFJS5g}
}

@inproceedings{vaswani2017attention,
  title={Attention is All you Need},
  author={Ashish Vaswani and Noam Shazeer and Niki Parmar and Jakob Uszkoreit and Llion Jones and Aidan N. Gomez and Lukasz Kaiser and Illia Polosukhin},
  booktitle={Advances in Neural Information Processing Systems},
  volume={30},
  year={2017},
  url={https://proceedings.neurips.cc/paper/2017/file/3f5ee243547dee91fbd053c1c4a845aa-Paper.pdf}
}

@article{heid2024chemprop,
	author = {Heid, Esther and Greenman, Kevin P. and Chung, Yunsie and Li, Shih-Cheng and Graff, David E. and Vermeire, Florence H. and Wu, Haoyang and Green, William H. and McGill, Charles J.},
	date = {2024/01/08},
	doi = {10.1021/acs.jcim.3c01250},
	isbn = {1549-9596},
	journal = {Journal of Chemical Information and Modeling},
	journal1 = {Journal of Chemical Information and Modeling},
	journal2 = {J. Chem. Inf. Model.},
	month = {01},
	number = {1},
	pages = {9--17},
	publisher = {American Chemical Society},
	title = {Chemprop: A Machine Learning Package for Chemical Property Prediction},
	type = {doi: 10.1021/acs.jcim.3c01250},
	url = {https://doi.org/10.1021/acs.jcim.3c01250},
	volume = {64},
	year = {2024},
	year1 = {2024}}

@article{burns2024fastprop,
  author    = {Burns, Jackson W. and Green, William H.},
  title     = {Generalizable, fast, and accurate DeepQSPR with fastprop},
  journal   = {Journal of Cheminformatics},
  year      = {2025},
  volume    = {17},
  number    = {1},
  pages     = {73},
  publisher = {Springer Nature},
  doi       = {10.1186/s13321-025-01013-4},
  url       = {https://link.springer.com/article/10.1186/s13321-025-01013-4}
}

@article{katritzky1998qspr,
	author = {Katritzky, Alan R. and Wang, Yilin and Sild, Sulev and Tamm, Tarmo and Karelson, Mati},
	date = {1998/07/01},
	doi = {10.1021/ci980022t},
	isbn = {0095-2338},
	journal = {Journal of Chemical Information and Computer Sciences},
	journal1 = {Journal of Chemical Information and Computer Sciences},
	journal2 = {J. Chem. Inf. Comput. Sci.},
	month = {07},
	number = {4},
	pages = {720--725},
	publisher = {American Chemical Society},
	title = {QSPR Studies on Vapor Pressure, Aqueous Solubility, and the Prediction of Water-Air Partition Coefficients},
	type = {doi: 10.1021/ci980022t},
	url = {https://doi.org/10.1021/ci980022t},
	volume = {38},
	year = {1998},
	year1 = {1998}}

@article{palmer2014experimental,
	author = {Palmer, David S. and Mitchell, John B. O.},
	date = {2014/08/04},
	doi = {10.1021/mp500103r},
	isbn = {1543-8384},
	journal = {Molecular Pharmaceutics},
	journal1 = {Molecular Pharmaceutics},
	journal2 = {Mol. Pharmaceutics},
	month = {08},
	number = {8},
	pages = {2962--2972},
	publisher = {American Chemical Society},
	title = {Is Experimental Data Quality the Limiting Factor in Predicting the Aqueous Solubility of Druglike Molecules?},
	type = {doi: 10.1021/mp500103r},
	url = {https://doi.org/10.1021/mp500103r},
	volume = {11},
	year = {2014},
	year1 = {2014}}

@article{hughes2008properties,
	author = {Hughes, Laura D. and Palmer, David S. and Nigsch, Florian and Mitchell, John B. O.},
	date = {2008/01/01},
	doi = {10.1021/ci700307p},
	isbn = {1549-9596},
	journal = {Journal of Chemical Information and Modeling},
	journal1 = {Journal of Chemical Information and Modeling},
	journal2 = {J. Chem. Inf. Model.},
	month = {01},
	number = {1},
	pages = {220--232},
	publisher = {American Chemical Society},
	title = {Why Are Some Properties More Difficult To Predict than Others? A Study of QSPR Models of Solubility, Melting Point, and Log P},
	type = {doi: 10.1021/ci700307p},
	url = {https://doi.org/10.1021/ci700307p},
	volume = {48},
	year = {2008},
	year1 = {2008}}

@article{jorgensen2002prediction,
	author = {William L. Jorgensen and Erin M. Duffy},
	doi = {https://doi.org/10.1016/S0169-409X(02)00008-X},
	issn = {0169-409X},
	journal = {Advanced Drug Delivery Reviews},
	note = {Computational Methods for the Prediction of ADME and Toxicity},
	number = {3},
	pages = {355-366},
	title = {Prediction of drug solubility from structure},
	url = {https://www.sciencedirect.com/science/article/pii/S0169409X0200008X},
	volume = {54},
	year = {2002}}

@article{andersson2016interlaboratory,
	author = {Sara B.E. Andersson and Caroline Alvebratt and Jan Bevernage and Damien Bonneau and Claudia {da Costa Mathews} and Rikesh Dattani and Khadijah Edueng and Yan He and Ren{\'e} Holm and Cecilie Madsen and Thomas M{\"u}ller and Uwe Muenster and Anette M{\"u}llertz and Krista Ojala and Thomas Rades and Peter Sieger and Christel A.S. Bergstr{\"o}m},
	doi = {https://doi.org/10.1016/j.xphs.2016.03.010},
	issn = {0022-3549},
	journal = {Journal of Pharmaceutical Sciences},
	number = {9},
	pages = {2864-2872},
	title = {Interlaboratory Validation of Small-Scale Solubility and Dissolution Measurements of Poorly Water-Soluble Drugs},
	url = {https://www.sciencedirect.com/science/article/pii/S002235491600438X},
	volume = {105},
	year = {2016}
}

@article{klamt2010cosmo,
  author       = {Klamt, Andreas and Eckert, Frank and Arlt, Wolfgang},
  title        = {COSMO-RS: An Alternative to Simulation for Calculating Thermodynamic Properties of Liquid Mixtures},
  journal      = {Annual Review of Chemical and Biomolecular Engineering},
  year         = {2010},
  volume       = {1},
  pages        = {101--122},
  doi          = {10.1146/annurev-chembioeng-073009-100903},
  url          = {https://doi.org/10.1146/annurev-chembioeng-073009-100903}
}

@article{cordova2024conformer,
	author = {Cordova, Isabella W. and Teixeira, Gabriel and Ribeiro-Claro, Paulo J. A. and Abranches, Dinis O. and Pinho, Sim{\~a}o P. and Ferreira, Olga and Coutinho, Jo{\~a}o A. P.},
	date = {2024/05/29},
	doi = {10.1021/acs.iecr.4c00652},
	isbn = {0888-5885},
	journal = {Industrial \& Engineering Chemistry Research},
	journal1 = {Industrial \& Engineering Chemistry Research},
	journal2 = {Ind. Eng. Chem. Res.},
	month = {05},
	number = {21},
	pages = {9565--9575},
	publisher = {American Chemical Society},
	title = {Using Molecular Conformers in COSMO-RS to Predict Drug Solubility in Mixed Solvents},
	type = {doi: 10.1021/acs.iecr.4c00652},
	url = {https://doi.org/10.1021/acs.iecr.4c00652},
	volume = {63},
	year = {2024},
	year1 = {2024}}

@article{zhang2024attention,
	author = {Zhang, Yang and Liu, Caiqi and Liu, Mujiexin and Liu, Tianyuan and Lin, Hao and Huang, Cheng-Bing and Ning, Lin},
	journal = {Briefings in Bioinformatics},
	month = {01},
	number = {1},
	pages = {bbad467},
	title = {Attention is all you need: utilizing attention in AI-enabled drug discovery},
	volume = {25},
	year = {2024}}

@article{moriwaki2018mordred,
	author = {Moriwaki, Hirotomo and Tian, Yu-Shi and Kawashita, Norihito and Takagi, Tatsuya},
	date = {2018/02/06},
	doi = {10.1186/s13321-018-0258-y},
	id = {Moriwaki2018},
	isbn = {1758-2946},
	journal = {Journal of Cheminformatics},
	number = {1},
	pages = {4},
	title = {Mordred: a molecular descriptor calculator},
	url = {https://doi.org/10.1186/s13321-018-0258-y},
	volume = {10},
	year = {2018}}

@article{chithrananda2020chemberta,
  title   = {{ChemBERTa}: Large-Scale Self-Supervised Pretraining for Molecular Property Prediction},
  author  = {Chithrananda, Seyone and Grand, Gabriel and Ramsundar, Bharath},
  journal = {arXiv preprint arXiv:2010.09885},
  year    = {2020},
  url     = {https://arxiv.org/abs/2010.09885},
  note    = {Presented at the Machine Learning for Molecules Workshop @ NeurIPS 2020, \url{https://ml4molecules.github.io/papers2020/ML4Molecules_2020_paper_67.pdf}}
}

@article{wang2024visnet,
	author = {Wang, Yusong and Wang, Tong and Li, Shaoning and He, Xinheng and Li, Mingyu and Wang, Zun and Zheng, Nanning and Shao, Bin and Liu, Tie-Yan},
	date = {2024/01/05},
	doi = {10.1038/s41467-023-43720-2},
	id = {Wang2024},
	isbn = {2041-1723},
	journal = {Nature Communications},
	number = {1},
	pages = {313},
	title = {Enhancing geometric representations for molecules with equivariant vector-scalar interactive message passing},
	url = {https://doi.org/10.1038/s41467-023-43720-2},
	volume = {15},
	year = {2024}}

@article{qiao2022orbnet,
	author = {Zhuoran Qiao and Anders S. Christensen and Matthew Welborn and Frederick R. Manby and Anima Anandkumar and Thomas F. Miller},
	doi = {10.1073/pnas.2205221119},
	eprint = {https://www.pnas.org/doi/pdf/10.1073/pnas.2205221119},
	journal = {Proceedings of the National Academy of Sciences},
	number = {31},
	pages = {e2205221119},
	title = {Informing geometric deep learning with electronic interactions to accelerate quantum chemistry},
	url = {https://www.pnas.org/doi/abs/10.1073/pnas.2205221119},
	volume = {119},
	year = {2022}}

@article{Khaleeq2020FateToxicityPharmaceuticals,
	author = {Hudda Khaleeq Khan and Muhammad Yasir Abdur Rehman and Riffat Naseem Malik},
	doi = {https://doi.org/10.1016/j.jenvman.2020.111030},
	issn = {0301-4797},
	journal = {Journal of Environmental Management},
	pages = {111030},
	title = {Fate and toxicity of pharmaceuticals in water environment: An insight on their occurrence in South Asia},
	url = {https://www.sciencedirect.com/science/article/pii/S0301479720309580},
	volume = {271},
	year = {2020}}

@inproceedings{binder2016layer,
	address = {Cham},
	author = {Binder, Alexander and Montavon, Gr{\'e}goire and Lapuschkin, Sebastian and M{\"u}ller, Klaus-Robert and Samek, Wojciech},
	booktitle = {Artificial Neural Networks and Machine Learning -- ICANN 2016},
	editor = {Villa, Alessandro E.P. and Masulli, Paolo and Pons Rivero, Antonio Javier},
	isbn = {978-3-319-44781-0},
	pages = {63--71},
	publisher = {Springer International Publishing},
	title = {Layer-Wise Relevance Propagation for Neural Networks with Local Renormalization Layers},
	year = {2016}
}

@article{burke1984solubility,
	author = {Bilin Zhuang and Gabriele Ramanauskaite and Zhao Yuan Koa and Zhen-Gang Wang},
	doi = {10.1126/sciadv.abe7275},
	eprint = {https://www.science.org/doi/pdf/10.1126/sciadv.abe7275},
	journal = {Science Advances},
	number = {7},
	pages = {eabe7275},
	title = {Like dissolves like: A first-principles theory for predicting liquid miscibility and mixture dielectric constant},
	url = {https://www.science.org/doi/abs/10.1126/sciadv.abe7275},
	volume = {7},
	year = {2021}}

@article{lundberg2019explainableaitreeslocal,
	author = {Lundberg, Scott M. and Erion, Gabriel and Chen, Hugh and DeGrave, Alex and Prutkin, Jordan M. and Nair, Bala and Katz, Ronit and Himmelfarb, Jonathan and Bansal, Nisha and Lee, Su-In},
	date = {2020/01/01},
	doi = {10.1038/s42256-019-0138-9},
	id = {Lundberg2020},
	isbn = {2522-5839},
	journal = {Nature Machine Intelligence},
	number = {1},
	pages = {56--67},
	title = {From local explanations to global understanding with explainable AI for trees},
	url = {https://doi.org/10.1038/s42256-019-0138-9},
	volume = {2},
	year = {2020}}

@article{xiong2020attentivefp,
	author = {Xiong, Zhaoping and Wang, Dingyan and Liu, Xiaohong and Zhong, Feisheng and Wan, Xiaozhe and Li, Xutong and Li, Zhaojun and Luo, Xiaomin and Chen, Kaixian and Jiang, Hualiang and Zheng, Mingyue},
	date = {2020/08/27},
	doi = {10.1021/acs.jmedchem.9b00959},
	isbn = {0022-2623},
	journal = {Journal of Medicinal Chemistry},
	journal1 = {Journal of Medicinal Chemistry},
	journal2 = {J. Med. Chem.},
	month = {08},
	number = {16},
	pages = {8749--8760},
	publisher = {American Chemical Society},
	title = {Pushing the Boundaries of Molecular Representation for Drug Discovery with the Graph Attention Mechanism},
	type = {doi: 10.1021/acs.jmedchem.9b00959},
	url = {https://doi.org/10.1021/acs.jmedchem.9b00959},
	volume = {63},
	year = {2020},
	year1 = {2020}}

@inproceedings{
yeyunchen2025relational,
title={Relational Invariant Learning for Robust Solvation Free Energy Prediction},
author={yeyunchen},
booktitle={Forty-second International Conference on Machine Learning},
year={2025},
url={https://openreview.net/forum?id=xVBfdltHST}
}

@article{Mianowski2023EnthalpyEntropy,
	article-number = {55},
	author = {Mianowski, Andrzej and {\L}abojko, Grzegorz},
	doi = {10.3390/e25010055},
	issn = {1099-4300},
	journal = {Entropy},
	number = {1},
	pubmedid = {36673196},
	title = {Enthalpy-Entropy Compensation Effect in Saturated Solutions on an Example of Polynuclear Aromatics According to Thermodynamics at Melting Temperature},
	url = {https://www.mdpi.com/1099-4300/25/1/55},
	volume = {25},
	year = {2023}}

@article{Leland1968CorrespondingStates,
	author = {Leland, Thomas W. and Chappelear, Patsy S.},
	date = {1968/07/01},
	doi = {10.1021/ie50703a005},
	isbn = {0019-7866},
	journal = {Industrial \& Engineering Chemistry},
	journal1 = {Industrial \& Engineering Chemistry},
	journal2 = {Ind. Eng. Chem.},
	month = {07},
	number = {7},
	pages = {15--43},
	publisher = {American Chemical Society},
	title = {THE CORRESPONDING STATES PRINCIPLE---A REVIEW OF CURRENT THEORY AND PRACTICE},
	type = {doi: 10.1021/ie50703a005},
	url = {https://doi.org/10.1021/ie50703a005},
	volume = {60},
	year = {1968},
	year1 = {1968}}

@ARTICLE{1982CPL....89..399B,
       author = {{Balaban}, Alexandru T.},
        title = "{Highly discriminating distance-based topological index}",
      journal = {Chemical Physics Letters},
         year = 1982,
        month = jul,
       volume = {89},
       number = {5},
        pages = {399-404},
          doi = {10.1016/0009-2614(82)80009-2},
       adsurl = {https://ui.adsabs.harvard.edu/abs/1982CPL....89..399B}
}

@article{Bertz1981,
  author = {Bertz, Steven H.},
  title = {The first general index of molecular complexity},
  journal = {Journal of the American Chemical Society},
  volume = {103},
  number = {12},
  pages = {3599--3601},
  year = {1981},
  doi = {10.1021/ja00402a071},
  url = {https://doi.org/10.1021/ja00402a071},
}

@inproceedings{chen2016xgboost,
author = {Chen, Tianqi and Guestrin, Carlos},
title = {XGBoost: A Scalable Tree Boosting System},
year = {2016},
isbn = {9781450342322},
publisher = {Association for Computing Machinery},
address = {New York, NY, USA},
url = {https://doi.org/10.1145/2939672.2939785},
doi = {10.1145/2939672.2939785},
booktitle = {Proceedings of the 22nd ACM SIGKDD International Conference on Knowledge Discovery and Data Mining},
pages = {785–794},
numpages = {10},
location = {San Francisco, California, USA},
series = {KDD '16}
}

@article{staels1998mechanism,
author = {Bart Staels and Jean Dallongeville and Johan Auwerx and Kristina Schoonjans and Eran Leitersdorf and Jean-Charles Fruchart},
title = {Mechanism of Action of Fibrates on Lipid and Lipoprotein Metabolism},
journal = {Circulation},
volume = {98},
number = {19},
pages = {2088--2093},
year = {1998},
doi = {10.1161/01.CIR.98.19.2088}
}

@article{zhu2024novel,
author = {Xingyi Zhu and Quan Xin and Shurui Wang and Ling Liang and Kangjie Wen and Tingyu Jin and Yunfen Hua},
title = {Novel Cocrystal Inclusion Compound of Bezafibrate with L-Proline: Preparation, Characterization and in Vitro/In Vivo Evaluation},
journal = {SSRN Electronic Journal},
year = {2024},
doi = {10.2139/ssrn.4862669}
}

@article{sun2020electrosprayed,
author = {Ru Sun and Chengwu Shen and Shumaila Shafique and Omer Mustapha and Talib Hussain and Ikram Ullah Khan and Yasir Mehmood and Khaleeq Anwer and Yasser Shahzad and Abid Mehmood Yousaf},
title = {Electrosprayed Polymeric Nanospheres for Enhanced Solubility, Dissolution Rate, Oral Bioavailability and Antihyperlipidemic Activity of Bezafibrate},
journal = {International Journal of Nanomedicine},
volume = {15},
pages = {705--715},
year = {2020},
doi = {10.2147/IJN.S235146}
}

@article{liu2020solubility,
author = {Liu, Mengmeng and Wang, Shui and Qu, Chao and Zhang, Zhichao and Qu, Yixin},
title = {Solubility Determination and Thermodynamic Properties of Bezafibrate in Pure and Binary Mixed Solvents},
journal = {Journal of Chemical \& Engineering Data},
volume = {65},
number = {4},
pages = {2156-2169},
year = {2020},
doi = {10.1021/acs.jced.0c00025},

URL = {https://doi.org/10.1021/acs.jced.0c00025},
eprint = {https://doi.org/10.1021/acs.jced.0c00025}

}
\bibliographystyle{icml2026}

\newpage
\appendix
\onecolumn

\section{Related Work} \label{app:relatedwork}

The prediction of molecular solubility has evolved from thermodynamic heuristics to high-dimensional representation learning. We trace this progression through four distinct phases: classical physicochemical modeling, data-driven representation learning, physics-informed hybrid architectures, and the recent identification of aleatoric limits that challenge the efficacy of further architectural complexity.

\subsection{Physicochemical Foundations and Descriptor-Based Learning}
The early solubility prediction was based on parametric thermodynamic relationships. The General Solubility Equation (GSE) \citep{yalkowskey1980solubility} established a foundational baseline linking solubility to melting point and $\log P$ as $\log S = - \log P - 0.01\,(M_{\mathrm{pt}} - 25) + 0.5$, where $\log P$ represents the octanol-water partition coefficient, and $M_{pt}$ is the melting point. However, this definition neglects complex entropic contributions and solvation-dependent conformational changes of the solutes. More rigorous physics-based methods, such as COSMO-RS \citep{klamt2010cosmo}, utilize quantum chemical calculations to estimate chemical potentials in solution. Although thermodynamically robust, COSMO-RS is computationally prohibitive for large-scale screening and is critically dependent on computationally expensive conformer sampling.\citep{cordova2024conformer}

The transition to data-driven methods for solubility prediction began with linear regression on explicit molecular descriptors, exemplified by Delaney's ESOL \citep{delaney2004esol}, which achieved an RMSE of 0.97 $\log S$ units, using only four 2D molecular descriptors: calculated $\log P$, molecular weight, rotatable bond count, and aromatic ring fraction, applied to 2,874 compounds. The subsequent approach \citet{moriwaki2018mordred} scaled this strategy using a high-dimensional feature set with 1800 descriptors, which demonstrated outperformance over fingerprint-based methods when combined with Random Forests \citep{tayyebi2023prediction}.  Crucially, \citet{boobier2020machine} showed that manually curated physicochemical descriptors grounded in thermodynamic and quantum theory matched or exceeded many neural network-based approaches, yielding a simple, interpretable model.

\subsection{The Deep Learning Era: Graphs, Sequences, and Ensembles} 

The dominant paradigm shifted towards end-to-end representation learning with the introduction of MoleculeNet \citep{wu2018moleculenet}, where Graph Convolutional Networks (GCNs) learning directly from topology on the publicly available ESOL dataset achieved an RMSE of $\sim$0.58. \citet{ulrich2025prediction} employed a similar approach by using GCNs combined with tautomer-based data augmentation to predict aqueous solubility. However, since GCNs are locality-biased, \textit{i.e.} they primarily capture the first and second-order neighbour information; they cannot entirely capture the long-range electrostatic and global molecular properties which significantly influence solubility behaviour. This was thus refined by using Message Passing Neural Networks (MPNNs) by including learned message and update functions alongside a permutation-invariant readout phase to aggregate local node features into a global molecular representation \citep{gilmer2017neural}, and by introducing a self-attention-based MPNN, \citet{tang2020self} achieved an RMSE of 0.635 on ESOL. \citet{lee2023mogat} developed Multi-Order Graph Attention Networks (MoGAT), which improved accuracy and provided atomic importance scores consistent with chemical intuition.

The shift toward learned representations was popularised by the Chemprop framework \citep{heid2024chemprop}, which implemented a Directed Message-Passing Neural Network (D-MPNN). Unlike many earlier GCNs that aggregate messages from all neighbours, the D-MPNN architecture utilizes directed edges to prevent `tottering' (cyclical passing of information back to a source node) so it learns more distinct and refined molecular embeddings. Chemprop demonstrates improved performance for chemical property prediction tasks, but it remains largely a `black-box' approach that prioritizes latent feature optimization over physical interpretability. This limitation extends to attention-based mechanisms more broadly. Although attention has increased model transparency by highlighting salient atoms or substructures, \citet{zhang2024attention} note that it remains difficult to infer a model's underlying decision-making logic solely from attention weights. In particular, while attention weights indicate where a model is ``focusing," they do not inherently explain the model's decision-making logic or prove that the identified structures are the causal drivers of the predicted property.

In parallel to graph-based methods, sequence-based architectures adapt natural language processing techniques to chemical notation. SolTranNet \citep{francoeur2021soltrannet} applied a lightweight Molecular Attention Transformer with only 3393 parameters to SMILES strings, counter-intuitively outperforming larger graph-based architectures on certain splits, challenging assumptions about model complexity. ChemBERTa \citep{chithrananda2020chemberta} utilized Transformers pre-trained on a curated corpus of 77 million SMILES strings from PubChem. By utilizing self-supervised masked language modeling, the framework enables transfer learning in which the model first learns a general latent grammar of molecular structures before fine-tuning for downstream tasks, achieving an RMSE of $\sim$0.61 on the ESOL dataset. Despite their scalability, SMILES-based approaches are inherently lossy, as linearized string representations do not explicitly encode 3D geometry and stereochemistry, crucial factors known to influence solubility.\citep{panapitiya2022evaluation} To further improve predictive performance, recent works have employed stacked ensemble strategies, such as the YZS-Model \citep{wang2024yzs} combining GCNs, Transformers, and LSTM networks, with ablation studies identifying the transformer module as the primary driver for accuracy. Similarly, AqSolPred \citep{sorkun2021aqsolpred} also employs a similar approach, combining neural networks, Random Forests, and XGBoost. However, such ensembles trade interpretability for accuracy gains and incur both computational costs and loss of explainability.

\subsection{Interpretability and Physics-Informed Architectures} 
Although attention mechanisms provide a simple post-hoc interpretation of model focus, Layer-Wise Relevance Propagation (LRP) \citep{binder2016layer} provides a more rigorous framework, back-propagating relevance scores layer-by-layer to assign each input feature a contribution score. \citet{chen2023solubnet} introduced SolubNet, a graph-based framework utilizing Topology Adaptive Graph Convolutional Networks (TAGCN) for aqueous solubility prediction, using LRP to decompose model predictions into atomic-level importance scores. This identified that the model correctly assigned high relevance to polar regions (e.g. as hydroxyl and amine groups) aligning with the chemical intuition that these moieties drive solubility.

Recent work attempts to reconcile data-driven learning with physical chemistry. MolMerger \citep{ramani2024molmerger} introduces a physics-informed graph representation explicitly encoding solute-solvent interactions. Rather than representing molecules as isolated graphs, MolMerger adds virtual edges between the most polar and least polar atoms of each molecule, respectively, and then employs AttentiveFP \citep{xiong2020attentivefp}. This approach, however, assumes that Gasteiger charges adequately represent all solute-solvent interactions. RILOOD \citep{yeyunchen2025relational} introduces a relational invariant learning framework that models multigranularity solute–solvent interactions and latent environments via a mixup-enhanced conditional VAE and context-aware refinement to improve out-of-distribution robustness in solvation free energy prediction across varying solvents and scaffolds. Solvaformer \citep{Broadbent2025Solvaformer} combines COSMO-RS calculations with machine learning corrections, achieving state-of-the-art performance on BigSolDB 2.0 \citep{krasnov2025bigsoldb}, with the caveat of computational overhead worth days for training. Geometric deep learning models, such as ViSNet \citep{wang2024visnet} and OrbNet-Equi \citep{qiao2022orbnet}. VisNet captures complex geometric information (e.g. angles and dihedral angles) by utilising vector-scalar interactive message passing. OrbNet-Equi \citep{qiao2022orbnet}, informed by electronic structure (orbital basis functions), ensures equivariance to isometric basis transformations. While theoretically superior since they explicitly encode physical constraints, these methods introduce dependencies on expensive 3D conformer generation and are computationally expensive.

\subsection{The Aleatoric Limit and Evaluation Crisis}

A critical reassessment of the field suggests that architectural complexity often masks fundamental data limitations. \citet{attia2025data} and \citet{palmer2014experimental} document that the experimental noise floor (aleatoric uncertainty) for solubility lies between 0.5–1.0 log units due to various factors as outlined in Appendix~\ref{app:aleatoric}. This gives a fundamental upper bound on the accuracy of models - and thus models claiming RMSE values below this threshold would most likely rely on aggressive dataset curation, potentially inadvertently removing ``hard" cases, but creating an illusion of progress; \citet{llompart2024solved} also warns that non-standardised curation might bias models against real-world variability, finding that data leakage and non-standardized curation are pervasive issues within the research community.

\section{Dataset Curation and Split Details}
\label{app:curation}

This appendix provides complete details on the dataset curation procedures implemented to ensure data quality and prevent leakage. It also contains information about splits used to test OOD generalisation tasks.

All Regime-I datasets test solute extrapolation with moderate difficulty, where test solutes show a mean maximum Tanimoto similarity of 0.58-0.64 to their nearest training neighbours, with 10-19\% of test molecules sharing less than 0.4 similarity to all training compounds. In Regime-II, the BigSol 1.0 and 2.0 datasets evaluate solvent extrapolation under extreme distribution shift, with 100\% novel solvents in the test set (mean maximum similarity $\sim$0.39 to training solvents) and 60\% of test solvents having similarity below 0.4 to any training solvent. The Leeds dataset tests solute extrapolation in a fixed solvent space, with zero overlap between the training (709 solutes) and test (1,440 solutes) sets, while maintaining consistent solvents across splits (2 of 3 test solvents present in the training set), yielding a mean maximum solute similarity of 0.48.

\subsection{AqSolDB Curation}
To generate the robust training set for Regime I, we processed the raw AqSolDB CSV using the following pipeline:

\begin{enumerate}
    \item \textbf{Standardization:} All SMILES strings were canonicalized using RDKit . Isomeric information was discarded to focus on robust 2D topological descriptors.
    \item \textbf{Mixture Removal:} We filtered out entries containing the ``." character in the SMILES string, effectively removing salts, ionic liquids, and multi-component mixtures which introduce non-additive solubility effects.
    \item \textbf{Physical Constraints:} We applied rigorous thresholds:
    \begin{itemize}
        \item \textbf{Solubility Range:} $-13.5 \le \log S \le 0.5$. Values outside this range were discarded as extremely soluble or extremely unsoluble.
        \item \textbf{Molecular Weight:} $\text{MW} \le 900$ Da, restricting the domain to drug-like small molecules.
    \end{itemize}
    \item \textbf{Anti-Leakage Protocol:} We explicitly identified all molecules present in Set 1 (an estimated inter-laboratory reproducibility of $\sim$0.17 log unit.) of the \textit{Second Solubility Challenge} (SC2) dataset ($N_{SC2} = 100$) and removed them from the training set. This ensures our external validation on SC2 is strictly blind.
    \item \textbf{Final Aggregation:} Duplicate entries were aggregated by mean solubility. The final curated dataset contains \textbf{8,311} unique molecules (reduced from 9,982).
\end{enumerate}

For aqueous solubility (Regime I), we adhere to a standard 90/10 random split for the large-scale AqSolDB dataset. For the substantially smaller ESOL benchmark, we adopt a more conservative 80/20 split to ensure the test set is sufficiently large to yield statistically stable performance metrics.

\subsection{BigSolDB Curation}
For the multi-solvent Regime II, we utilized BigSolDB 2.0. The curation pipeline involved:

\begin{enumerate}
    \item \textbf{Solvent Standardization:} We utilized a strict alias map to normalize solvent names (e.g., "THF" $\to$ "tetrahydrofuran", "DMS" $\to$ "methylthiomethane") and verify them against the \texttt{thermo} library database.
    \item \textbf{Polymer Exclusion:} We explicitly excluded polymeric solvents with variable molecular weights (e.g., PEG-400, Span 80, PEGDME 250) as their molar volume is ill-defined, making accurate unit conversion impossible.
    \item \textbf{Tautomer Standardization:} We employed \texttt{rdMolStandardize.TautomerEnumerator} to canonicalize solutes. This ensures that different tautomeric forms of the same drug (e.g., keto-enol variants) reported by different labs are treated as identical entities.
    \item \textbf{Quality Control:} We grouped replicates by (Solute, Solvent, Temperature) and calculated the inter-laboratory standard deviation. Groups with $\sigma > 0.7$ LogS units were discarded as unreliable.
    \item \textbf{External Validation Integrity:} To ensure strictly blind evaluation on the external Leeds Dataset, we cross-referenced the training corpus against the Leeds test set and removed all overlapping solutes prior to model training when attempting to evaluate on Leeds, thereby preventing data leakage.
\end{enumerate}

We train on the Top-$K$ most frequent solvents and evaluate on the remaining rare solvents, utilising a calibrated $K$ to produce an approximate 80:20 partition. This creates a strict transfer task that tests generalisation to data-poor environments.

\rvansh{\paragraph{BigSolDB version-to-split mapping.} For Regime~II we use BigSolDB~2.0 as the primary multi-solvent corpus (Top-21 solvent split) and BigSolDB~1.0 as a smaller cross-check (Top-19 solvent split). The Leeds dataset is used as a strictly held-out solute-extrapolation benchmark, with overlapping solutes removed from the BigSolDB training corpus prior to training (cf.\ ``External Validation Integrity'' above). Identical curation, alias maps and quality-control thresholds are applied to both BigSolDB versions.}

\rvansh{
\subsection{Baseline Featurization}
\label{app:baseline_featurization}

To ensure a like-for-like comparison, all generic tree-based baselines (Decision Tree, Random Forest, XGBoost, LightGBM, CatBoost-from-scratch) and the shallow ANN were given exactly the same $176$-dimensional physicochemical descriptor vector that \name uses (Table~\ref{tab:feature_hierarchy}). Graph baselines (GCN, GIN, GAT, MPNN) consumed the molecular graph through DeepChem's \texttt{GraphConvMol} featurizer with default atom and bond descriptors, matching the standard practice used in their respective original publications. Specialised baselines (Chemprop, FastProp, SolTranNet, AqSolPred, SolubNet, FastSolv, RILOOD) were run with the featurizers prescribed by their authors. This protocol isolates the contribution of \name's architecture: any improvement over the generic tree-based baselines is attributable to the Interaction Block and monotonic constraints rather than to additional input information.
}

\subsection{Density Calculations for LogS Conversion}
Machine learning models require a consistent target variable. While BigSolDB reports solubility in mole fraction ($x$), predictive tasks require molarity ($\log S$, mol/L). We performed this conversion dynamically using the \texttt{thermo} library:

\begin{equation}
    \log S (\text{mol/L}) = \log_{10} \left( x \cdot \frac{\rho_{\text{solvent}}(T)}{\text{MW}_{\text{solvent}}} \right)
\end{equation}

where $\rho_{\text{solvent}}(T)$ is the density of the pure solvent (g/L) at the specific experimental temperature $T$, and $\text{MW}_{\text{solvent}}$ is the molecular weight (g/mol). By caching these density lookups, we ensured accurate conversion even for temperature-dependent expansion, rather than assuming standard density at 298K.

\section{Complete Baseline Results}
\label{complete-baseline-app}
Refer to Tables~\ref{tab:complete_aqueous_results} and~\ref{tab:complete_bigsol_results} for complete baseline results.

\begin{table*}[h]
\centering
\caption{\textbf{Regime I Results (Aqueous Only).}  Comparison on fixed-solvent datasets. \name demonstrates consistent performance across small and large data regimes, outperforming deep learning baselines. Reported mean $\pm$ std over seeds. Dark Green represents the best model, and lighter greens represent the second-best model (within variance). Generic tree-based models were provided the same features as \name, and graph-based models utilised the GraphConvMol Featurizer.}
\label{tab:complete_aqueous_results}
\resizebox{\textwidth}{!}{%
\begin{tabular}{l cc cc cc}
\toprule
\multirow{2}{*}{\textbf{Method}} &
\multicolumn{2}{c}{\textbf{ESOL}} &
\multicolumn{2}{c}{\textbf{AqSolDB}} &
\multicolumn{2}{c}{\textbf{2nd Sol. Challenge}} \\
& RMSE & $R^2$ & RMSE & $R^2$ & RMSE & $R^2$ \\
\midrule
GSE &
2.2623 & -0.1669 &
2.5816 & -0.3779 &
2.7283 & -3.6450 \\

ESOL Model &
0.9802 & 0.7809 &
1.3161 & 0.6419 &
1.0445 & 0.3192 \\

\midrule

\textsc{Chemprop} &
{0.5749{\scriptsize$\pm0.0202$}} &
{0.9245{\scriptsize$\pm0.0054$}} &
0.8833{\scriptsize$\pm0.0234$} &
0.8386{\scriptsize$\pm0.0086$} &
1.0192{\scriptsize$\pm0.0316$} &
0.3511{\scriptsize$\pm0.0400$} \\

\textsc{FastProp} &
0.5485{\scriptsize$\pm0.0157$} &
0.9314{\scriptsize$\pm0.0039$} &
1.0514{\scriptsize$\pm0.0711$} &
0.7704{\scriptsize$\pm0.0.0314$} &
1.0329{\scriptsize$\pm0.0700$} &
0.3312{\scriptsize$\pm0.0919$} \\

\textsc{SolTranNet} &
0.8330{\scriptsize$\pm0.0732$} &
0.8406{\scriptsize$\pm0.0274$} &
1.5717{\scriptsize$\pm0.0170$} &
0.4892{\scriptsize$\pm0.0111$} &
1.1852{\scriptsize$\pm0.1232$} &
0.1140{\scriptsize$\pm0.1864$} \\

\textsc{AqSolPred} &
0.5033{\scriptsize$\pm0.0035$} &
0.9422{\scriptsize$\pm0.0008$} &
\second{0.8270{\scriptsize$\pm0.0060$}} &
\second{0.8437{\scriptsize$\pm0.0023$}} &
0.9490{\scriptsize$\pm0.0269$} &
0.4351{\scriptsize$\pm0.0322$} \\

\textsc{SolubNet} &
0.7749{\scriptsize$\pm0.0253$} &
0.8629{\scriptsize$\pm0.0090$} &
1.1802{\scriptsize$\pm0.0044$} &
0.7120{\scriptsize$\pm0.0022$} &
0.9412{\scriptsize$\pm0.0100$} &
0.4471{\scriptsize$\pm0.0117$} \\

\citet{ulrich2025prediction} &
0.9823{\scriptsize$\pm0.0996$} &
0.7777{\scriptsize$\pm0.0439$} &
1.1933{\scriptsize$\pm0.0516$} &
0.7051{\scriptsize$\pm0.0252$} &
1.2129{\scriptsize$\pm0.1050$} &
0.0751{\scriptsize$\pm0.1612$} \\

\citet{tayyebi2023prediction} &
0.6926{\scriptsize$\pm0.0053$} &
0.8906{\scriptsize$\pm0.0017$} &
0.9064{\scriptsize$\pm0.0026$} &
0.8301{\scriptsize$\pm0.0010$} &
\second{0.9102{\scriptsize$\pm0.0063$}} &
\second{0.4829{\scriptsize$\pm0.0071$}} \\

\midrule

Decision Tree &
0.7745{\scriptsize$\pm0.0205$} &
0.8631{\scriptsize$\pm0.0074$} &
1.1002{\scriptsize$\pm0.0097$} &
0.7498{\scriptsize$\pm0.0044$} &
1.1375{\scriptsize$\pm0.0115$} &
0.1924{\scriptsize$\pm0.0164$} \\

Random Forest &
0.5429{\scriptsize$\pm0.0006$} &
0.9328{\scriptsize$\pm0.0001$} &
0.8902{\scriptsize$\pm0.0002$} &
0.8362{\scriptsize$\pm0.0001$} &
\best{0.8784{\scriptsize$\pm0.0006$}} &
\best{0.5184{\scriptsize$\pm0.0007$}} \\

XGBoost &
0.5005{\scriptsize$\pm0.0056$} &
0.9429{\scriptsize$\pm0.0013$} &
\second{0.8311{\scriptsize$\pm0.0029$}} &
\second{0.8572{\scriptsize$\pm0.0010$}} &
\second{0.9014{\scriptsize$\pm0.0171$}} &
\second{0.4928{\scriptsize$\pm0.0193$}} \\

LightGBM &
\second{0.4773{\scriptsize$\pm0.0104$}} &
\second{0.9480{\scriptsize$\pm0.0022$}} &
0.8457{\scriptsize$\pm0.0037$} &
0.8522{\scriptsize$\pm0.0013$} &
0.9338{\scriptsize$\pm0.0118$} &
0.4558{\scriptsize$\pm0.0137$} \\

ANN &
0.8221{\scriptsize$\pm0.0818$} &
0.8444{\scriptsize$\pm0.0304$} &
1.1322{\scriptsize$\pm0.0639$} &
0.7341{\scriptsize$\pm0.0293$} &
1.0403{\scriptsize$\pm0.0550$} &
0.3227{\scriptsize$\pm0.0733$} \\

\midrule

GIN & 
0.6512{\scriptsize$\pm0.0415$} & 
0.9029{\scriptsize$\pm0.0127$} & 
0.9093{\scriptsize$\pm0.0191$} & 
0.8284{\scriptsize$\pm0.0072$} & 
1.0326{\scriptsize$\pm0.0837$} & 
0.3303{\scriptsize$\pm0.1063$} \\

GCN & 
0.8551{\scriptsize$\pm0.0806$} & 
0.8318{\scriptsize$\pm0.0323$} & 
1.0103{\scriptsize$\pm0.0198$} & 
0.7882{\scriptsize$\pm0.0083$} & 
1.2728{\scriptsize$\pm0.0816$} & 
-0.0151{\scriptsize$\pm0.1325$} \\

GAT \citep{velickovic2018graph}& 
0.7877{\scriptsize$\pm0.0369$} & 
0.8583{\scriptsize$\pm0.0132$} & 
1.0118{\scriptsize$\pm0.0277$} & 
0.7875{\scriptsize$\pm0.0118$} & 
1.3543{\scriptsize$\pm0.0960$} & 
-0.1504{\scriptsize$\pm0.1597$} \\

MPNN & 
0.7788{\scriptsize$\pm0.0621$} & 
0.8608{\scriptsize$\pm0.0225$} & 
1.0448{\scriptsize$\pm0.0205$} & 
0.7735{\scriptsize$\pm0.0089$} & 
1.3772{\scriptsize$\pm0.1464$} & 
-0.1970{\scriptsize$\pm0.2533$} \\

\midrule[1.2pt]
\textbf{\name} &
\best{0.4631{\scriptsize$\pm0.0021$}} &
\best{0.9511{\scriptsize$\pm0.0007$}} &
\best{0.8158{\scriptsize$\pm0.0030$}} &
\best{0.8624{\scriptsize$\pm0.0010$}} &
\second{0.9012{\scriptsize$\pm0.0033$}} &
\second{0.4932{\scriptsize$\pm0.0037$}} \\
\bottomrule
\end{tabular}
}
\end{table*}

\begin{table*}[ht]
\centering
\caption{\textbf{Regime II Results (Multi-Solvent).} Performance under solvent holdout splits and external validation. \name demonstrates superior generalisation in multi-solvent environments compared to traditional GNN and tree-based baselines. Reported mean $\pm$ std over seeds. Dark Green represents the best model, and lighter greens represent the second-best model (within variance). Generic tree-based models were provided the same features as \name, and graph-based models utilised the GraphConvMol Featurizer.}
\label{tab:complete_bigsol_results}
\resizebox{\textwidth}{!}{%
\begin{tabular}{l cc cc cc}
\toprule
\multirow{2}{*}{\textbf{Method}} &
\multicolumn{2}{c}{\makecell{\textbf{BigSolDB 1.0} \\ \footnotesize (Top-19 solvents)}} &
\multicolumn{2}{c}{\makecell{\textbf{BigSolDB 2.0} \\ \footnotesize (Top-21 solvents)}} &
\multicolumn{2}{c}{\makecell{\textbf{Leeds Dataset} \\ \footnotesize (Solute Holdout)}} \\
& RMSE & $R^2$ & RMSE & $R^2$ & RMSE & $R^2$ \\
\midrule
\textsc{FastSolv} &
0.7799{\scriptsize$\pm0.0243$} &
0.5449{\scriptsize$\pm0.0283$} &
0.6938{\scriptsize$\pm0.0041$} &
0.5844{\scriptsize$\pm0.0049$} &
\second{0.8868{\scriptsize$\pm0.0065$}} &
\second{0.3427{\scriptsize$\pm0.0096$}} \\

\textsc{RILOOD} &
0.8814{\scriptsize$\pm0.0509$} &
0.4201{\scriptsize$\pm0.0675$} &
0.8791{\scriptsize$\pm0.0458$} &
0.3346{\scriptsize$\pm0.0710$} &
1.1566{\scriptsize$\pm0.0163$} &
-0.1191{\scriptsize$\pm0.0316$} \\

\midrule

Decision Tree &
0.9465{\scriptsize$\pm0.0362$} &
0.3324{\scriptsize$\pm0.0507$} &
0.9978{\scriptsize$\pm0.0335$} &
0.1441{\scriptsize$\pm0.0575$} &
1.4257{\scriptsize$\pm0.0230$} &
-0.7004{\scriptsize$\pm0.0553$} \\

Random Forest &
\second{0.7276{\scriptsize$\pm0.0036$}} &
\second{0.6060{\scriptsize$\pm0.0039$}} &
0.7257{\scriptsize$\pm0.0030$} &
0.5477{\scriptsize$\pm0.0038$} &
1.0479{\scriptsize$\pm0.0011$} &
0.0816{\scriptsize$\pm0.0019$} \\

XGBoost &
\second{0.7289{\scriptsize$\pm0.0051$}} &
\second{0.6047{\scriptsize$\pm0.0055$}} &
\second{0.6416{\scriptsize$\pm0.0019$}} &
\second{0.6465{\scriptsize$\pm0.0021$}} &
0.9442{\scriptsize$\pm0.0110$} &
0.2542{\scriptsize$\pm0.0176$} \\

LightGBM &
\second{0.7109{\scriptsize$\pm0.0065$}} &
\second{0.6239{\scriptsize$\pm0.0069$}} &
{0.6505{\scriptsize$\pm0.0036$}} &
{0.6366{\scriptsize$\pm0.0040$}} &
0.9071{\scriptsize$\pm0.0051$} &
0.3117{\scriptsize$\pm0.0077$}\\

ANN &
0.8447{\scriptsize$\pm0.0129$} &
0.4689{\scriptsize$\pm0.0163$} &
1.0646{\scriptsize$\pm0.0295$} &
0.0259{\scriptsize$\pm0.0539$} &
6.3069{\scriptsize$\pm1.7099$} &
-34.7137{\scriptsize$\pm18.025$} \\

\midrule

GAT &
\second{0.7285{\scriptsize$\pm0.0387$}} &
\second{0.6040{\scriptsize$\pm0.0415$}} &
0.6714{\scriptsize$\pm0.0087$} &
{0.6128{\scriptsize$\pm0.0100$}} &
1.0851{\scriptsize$\pm0.0368$} &
0.0141{\scriptsize$\pm0.0662$} \\

GIN &
0.7711{\scriptsize$\pm0.0279$} &
0.5570{\scriptsize$\pm0.0323$} &
0.7398{\scriptsize$\pm0.0233$} &
0.5295{\scriptsize$\pm0.0296$} &
1.0168{\scriptsize$\pm0.0346$} &
0.1344{\scriptsize$\pm0.0592$} \\

MPNN &
0.7723{\scriptsize$\pm0.0635$} &
0.5532{\scriptsize$\pm0.0759$} &
0.7382{\scriptsize$\pm0.0333$} &
0.5311{\scriptsize$\pm0.0436$} &
1.0860{\scriptsize$\pm0.0293$} &
0.0129{\scriptsize$\pm0.0527$} \\

GCN &
0.8180{\scriptsize$\pm0.0157$} &
0.5019{\scriptsize$\pm0.0190$} &
0.7463{\scriptsize$\pm0.0258$} &
0.5211{\scriptsize$\pm0.0327$} &
1.2049{\scriptsize$\pm0.0806$} &
-0.2196{\scriptsize$\pm0.1633$} \\

\midrule[1.2pt]

\textbf{\name} &
\best{0.6783{\scriptsize$\pm0.0023$}} &
\best{0.6576{\scriptsize$\pm0.0023$}} &
\best{0.6305{\scriptsize$\pm0.0013$}} &
\best{0.6587{\scriptsize$\pm0.0014$}} &
\best{0.8798{\scriptsize$\pm0.0012$}} &
\best{0.3527{\scriptsize$\pm0.0018$}} \\

\bottomrule
\end{tabular}
}
\end{table*}

\section{Aleatoric Limit Analysis}
\label{app:aleatoric}

To contextualize model performance and establish a fundamental lower bound on achievable prediction accuracy, we analyze the irreducible error (aleatoric uncertainty) inherent to experimental solubility measurements.

\subsection{Formal Derivation of the Aleatoric Limit}\label{formal-aleatoric}

To contextualize model performance and establish a fundamental lower bound on achievable prediction accuracy, we analyze the irreducible error (aleatoric uncertainty) inherent to experimental solubility measurements. Let the experimentally observed solubility $Y$ for a chemical system $\mathbf{x}$ be modeled as a deterministic physical relationship $f(\mathbf{x})$ perturbed by experimental random noise $\delta$:
\begin{equation}\label{eq-10}
    Y = f(\mathbf{x}) + \delta, \quad \delta \sim \mathcal{N}(0, \epsilon_{\text{noise}}^2)
\end{equation}
where $\epsilon_{\text{noise}}^2$ denotes the variance arising from experimental uncertainty and inter-laboratory disagreement. Under this formulation, the expected Mean Squared Error (MSE) of any predictive model $M(\mathbf{x})$ admits the standard bias--variance--noise decomposition:
\begin{equation}
    \mathbb{E}[(Y - M(\mathbf{x}))^2] = \text{Bias}[M(\mathbf{x})]^2 + \text{Var}[M(\mathbf{x})] + \epsilon_{\text{noise}}^2
\end{equation}
Even in the idealized limit of a perfectly specified model with vanishing bias and variance, prediction error is lower-bounded by the conditional variance:
\begin{equation}
    \mathbb{E}_{\mathbf{x},Y}[(Y - M(\mathbf{x}))^2] \ge \mathbb{E}_{\mathbf{x}}[\text{Var}(Y|\mathbf{x})] = \epsilon_{\text{noise}}^2
\end{equation}
We therefore define the standard deviation $\epsilon_{\text{noise}}$ as the \textit{aleatoric limit} of solubility prediction. This root-mean-square error (RMSE) lower bound has been estimated to lie between $\sim 0.6$ and $0.8 \log_{10}(S)$ units (this metric, however, is dataset-dependent). Any model reporting $\text{RMSE} < \epsilon_{\text{noise}}$ should be interpreted cautiously and may indicate either improved data harmonization (lower experimental variance) or potential leakage/correlated measurement noise.

We note that Equation~\ref{eq-10} models the experimental uncertainty as a homoscedastic perturbation with a constant variance $\epsilon_{\text{noise}}^2$. Although true solubility measurement noise is heteroscedastic across different compounds and experimental conditions, this single-constant approximation remains mathematically valid for establishing the expected lower bound of the conditional variance over the entire dataset distribution.

\subsection{Noise Benchmarks}
Accurate measurement of solubility is intrinsically challenging, and reported values are well known to exhibit substantial experimental uncertainty. Unlike many physicochemical properties, solubility measurements are highly sensitive to subtle variations in experimental protocol, sample preparation, and material form. In practice, organic compounds are frequently measured as amorphous solids, hydrates, polymorphs, solvates, or impure cocrystals rather than as a single well-defined crystalline phase, leading to systematic discrepancies between nominally identical experiments \citep{jorgensen2002prediction}. Additional sources of variability arise from differences in equilibration time, temperature control, analytical technique, and post-processing or data interpretation.

As a result, solubility datasets compiled from the literature inherently reflect inter-laboratory disagreement rather than a single well-defined ground truth. Multiple large-scale studies have quantified this variability and consistently report substantial experimental noise floors:
\begin{itemize}
    \item \citet{katritzky1998qspr} analyzed solubility measurements for 411 compounds and reported an average inter-laboratory standard deviation of approximately \textbf{0.58} $\log S$ units.
    \item Independent assessments by \citet{palmer2014experimental} and \citet{hughes2008properties} observed experimental standard deviations consistently in the range of \textbf{0.60--0.70} $\log S$ units across diverse compound sets.
    \item In the Second Solubility Challenge, \citet{sc2} demonstrated that inter-laboratory measurements of the same solute--solvent system could differ by as much as \textbf{0.86 to 1.56} $\log S$ units, highlighting the magnitude of real-world experimental variability.
    \item \citet{andersson2016interlaboratory} showed that differences in data analysis and interpretation alone can induce high inter-laboratory variability (with CVs exceeding 100\% for poorly soluble compounds like aprepitant), which can be significantly reduced through standardized data processing protocols.
\end{itemize}

Taken together, these findings consistently indicate that the aleatoric limit for solubility prediction lies in the range of \textbf{0.6--0.8 $\log S$ units}. This range represents a practical hard ceiling on achievable predictive accuracy when training on heterogeneous literature data; improvements beyond this threshold would require higher-fidelity experimental measurements rather than more sophisticated modeling techniques.

\rvansh{\paragraph{Inter- vs.\ intra-laboratory variance: an upper bound.}
We emphasise that the $0.6$--$0.8\,\log S$ range cited above is a measure of \emph{inter-laboratory} disagreement: it aggregates across protocol, sample preparation, equilibration, and analytical-method differences between labs. Single-lab (\emph{intra-laboratory}) reproducibility is typically substantially smaller -- often $\lesssim 0.2\,\log S$ for well-controlled crystalline phases~\citep{andersson2016interlaboratory,palmer2014experimental} -- so the irreducible noise floor faced by any model trained on harmonised single-lab data is correspondingly lower. Consequently, the $0.6$--$0.8$ figure should be interpreted as an \emph{upper bound} on the true aleatoric limit relevant to literature-aggregated benchmarks like BigSolDB and AqSolDB, rather than a fundamental lower bound on every solubility-prediction problem.}

\subsection{Model Evaluation in the Context of the Aleatoric Limit}
This experimental context is essential for interpreting \name's performance. When trained on BigSolDB~1.0 and evaluated on the external Leeds dataset for solute extrapolation, our model achieves a test RMSE of $\sim$\textbf{0.90} $\log S$ units.

Given the literature-established experimental noise floor of $\sigma_{\text{exp}} \approx 0.6-0.8$, the remaining gap between observed performance and the theoretical minimum is relatively small. This suggests that \name captures the majority of the learnable signal in the data and is approaching the limits imposed by experimental reproducibility. Further improvements in RMSE are therefore more likely to arise from advances in dataset curation and experimental standardization (as exemplified by newer resources such as BigSolDB~2.0) rather than from increased model complexity.

\section{Computational Complexity Analysis}
\label{app:complexity}
We analyse the computational efficiency of the \name framework from a theoretical and empirical perspective. Our goal is to clarify why the proposed architecture enables fast training on commodity hardware without relying on specialised accelerators, and is competitive with deep learning architectures trained on GPU accelerators.

\subsection{Theoretical Complexity}
\name is built around Gradient Boosted Decision Trees (GBDTs) operating on fixed-length molecular descriptors. This design choice yields favourable scaling properties in both training and inference.
\begin{itemize}
    \item \textbf{Training Complexity.}
    The training cost of a GBDT model scales as $\mathcal{O}(T \cdot N \cdot K \cdot L)$, where $T$ is the number of trees, $K$ is the number of input features, $N$ is the number of training samples, and $L$ is the tree depth. CatBoost employs histogram-based split finding with oblivious trees, reducing the effective cost of tree construction to linear scaling in $N$ rather than $N \log N$. As a result, training remains efficient even for large solubility datasets with $N > 10^5$ samples.
    \item \textbf{Inference Complexity.}
    For a trained ensemble, inference for a single input scales as $\mathcal{O}(T \cdot L)$, where $L$ denotes the tree depth. Importantly, this cost is independent of the molecular size or graph structure. Once molecular descriptors are computed, prediction latency is effectively constant-time with respect to the number of atoms and bonds.
\end{itemize}

Taken together, these theoretical considerations explain why \name exhibits substantially lower computational overhead than contemporary deep geometric learning approaches. The empirical training times and inference throughputs reported in Tables~\ref{tab:efficiency_regime1_a}, ~\ref{tab:efficiency_regime1_b}, and ~\ref{tab:efficiency_regime2} are a direct consequence of this architectural simplicity, enabling efficient deployment on standard consumer-grade CPUs. The model AqSolPred \citep{sorkun2021aqsolpred} is primarily a combination of tree-based models and a shallow MLP, and thus it does not benefit from GPU acceleration. The model introduced by \citet{tayyebi2023prediction} cannot be run on GPUs because it is tree-based. All generic baselines (tree-based and GNN-based models) were run on commodity CPUs by choice.

\subsection{Empirical Benchmarks}\label{app:parameter-and-efficiency}

\rvansh{
\paragraph{Parameter count and inference throughput.} 
As detailed in Table~\ref{tab:inference_throughput}, the full \name pipeline (Regime~II Interaction Block plus the final CatBoost regressor) operates with an $\sim\!81\!\times$ reduction in trainable parameters compared to FastSolv~\citep{attia2025data}. Consequently, the complete pipeline runs end-to-end in under one second on commodity laptops without GPU acceleration, achieving up to a $7.6\times$ speedup during inference.

\begin{table}[ht]
    \centering
    \caption{Comparison of parameter counts and end-to-end inference throughput on an Apple M2 CPU (mean over five runs). \name achieves an $81\times$ reduction in parameters and sub-second inference across all evaluated datasets.}
    \label{tab:inference_throughput}
    \begin{tabular}{lrrrr}
        \toprule
        \multirow{2}{*}{\textbf{Model}} & \multirow{2}{*}{\textbf{Parameters}} & \multicolumn{3}{c}{\textbf{Inference Time (s)}} \\
        \cmidrule(lr){3-5}
        & & \textbf{BigSolDB 1.0} & \textbf{BigSolDB 2.0} & \textbf{Leeds} \\
        \midrule
        FastSolv & 74,760,004 & 2.169 & 4.115 & 0.393 \\
        \name (Ours) & \textbf{921,089} & \textbf{0.396} & \textbf{0.539} & \textbf{0.224} \\
        \midrule
        \textit{Improvement} & $81\times$ fewer & $5.5\times$ faster & $7.6\times$ faster & $1.7\times$ faster \\
        \bottomrule
    \end{tabular}
\end{table}
}

Experiments were conducted on a variety of hardware. Models were first run on commodity hardware (Apple M2 chip, 8GB RAM, 8 cores). However, some deep models were trained on T4 GPUs due to slow performance on commodity hardware. Tables~\ref{tab:efficiency_regime1_a}, ~\ref{tab:efficiency_regime1_b}, and ~\ref{tab:efficiency_regime2} compare performance for Regime I (aqueous solubility) and Regime II (multi-solvent prediction).

\begin{table*}[h!]
\centering
\caption{\textbf{Computational Efficiency (Regime I: AqSolDB).}
Training time in seconds per seed. (a) Classical, shallow, and graph-based models.}
\label{tab:efficiency_regime1_a}

\resizebox{\textwidth}{!}{%
\begin{tabular}{lccccccccc}
\toprule
\textbf{Metric} &
DT &
RF &
XGB &
LGBM &
ANN &
GCN &
GIN &
GAT &
MPNN \\
\midrule
Train Time (s) &
$\sim 0.5$ &
$\sim 415$ &
$\sim 23$ &
$\sim 8$ &
$\sim 13$ &
$\sim 48$ &
$\sim 58$ &
$\sim 63$ &
$\sim 141$ \\
Hardware &
M2 CPU &
M2 CPU &
M2 CPU &
M2 CPU &
M2 CPU &
M2 CPU &
M2 CPU &
M2 CPU &
M2 CPU \\
\bottomrule
\end{tabular}
}
\end{table*}

\begin{table*}[h!]
\centering
\caption{\textbf{Computational Efficiency (Regime I: AqSolDB).}
Training time in seconds per seed. (b) Deep chemistry and literature baselines.}
\label{tab:efficiency_regime1_b}

\resizebox{\textwidth}{!}{%
\begin{tabular}{lcccccccc}
\toprule
\textbf{Metric} &
Chemprop &
FastProp &
SolTranNet &
AqSolPred &
SolubNet &
\citet{ulrich2025prediction} &
\citet{tayyebi2023prediction} &
\textbf{\name} \\
\midrule
Train Time (s) &
$\sim 137$ &
$\sim 376$ &
$\sim 604$ &
$\sim 30$ &
$\sim 768$ &
$\sim 3332$ &
$\sim 322$ &
\textbf{$\sim 198$} \\
Hardware &
T4 GPU &
M2 CPU &
T4 GPU &
M2 CPU &
T4 GPU &
M2 CPU &
M2 CPU &
\textbf{M2 CPU} \\
\bottomrule
\end{tabular}
}
\end{table*}

\begin{table*}[h!]
\centering
\caption{\textbf{Computational Efficiency (Regime II: BigSolDB 1.0).}
Training time in seconds per seed.}
\label{tab:efficiency_regime2}

\resizebox{\textwidth}{!}{%
\begin{tabular}{lcccccccccccccc}
\toprule
\textbf{Metric} &
FastSolv &
RILOOD &
ChemProp &
DT &
RF &
XGB &
LGBM &
ANN &
GCN &
GIN &
GAT &
MPNN &
\textbf{\name} \\
\midrule
Train Time (s) &
$\sim 525$ &
$\sim 1658$ &
$\sim 1917$ &
$\sim 1.5$ &
$\sim 40$ &
$\sim 37$ &
$\sim 16$ &
$\sim 95$ &
$\sim 271$ &
$\sim 271$ &
$\sim 422$ &
$\sim 1034$ &
\textbf{$\sim 540$} \\
Hardware &
T4 GPU &
T4 GPU &
T4 GPU &
M2 CPU &
M2 CPU &
M2 CPU &
M2 CPU &
M2 CPU &
M2 CPU &
M2 CPU &
M2 CPU &
M2 CPU &
\textbf{M2 CPU} \\
\bottomrule
\end{tabular}
}
\end{table*}

\section{Theorems and Proofs}\label{theorems-app}

This appendix gives the formal support for Lemmata~\ref{lemma:geometric_restriction} and~\ref{lemma:computational_efficiency} and Theorem~\ref{thm:thermodynamic_consistency}. Because boosted trees are piecewise constant, the derivative notation in Equation~\ref{thermo-constraints} should be read as coordinate-wise monotonicity: increasing $T$ or $T_r$ while holding all other descriptors fixed cannot decrease the prediction, and increasing $1/T$ while holding all other descriptors fixed cannot increase it.

\begin{lemma}[Closure of Monotone Ensembles]
\label{lemma:geometric_restriction_app}
Let $\mathcal{H}_{\mathcal{M}}$ denote the class of prediction functions that are coordinate-wise nondecreasing in $T$ and $T_r$ and coordinate-wise nonincreasing in $1/T$, holding all other descriptors fixed. If $f,g \in \mathcal{H}_{\mathcal{M}}$ and $\alpha,\beta \geq 0$, then $\alpha f + \beta g \in \mathcal{H}_{\mathcal{M}}$. Consequently, any boosted additive ensemble with nonnegative learning rates remains monotone whenever each base learner is monotone.
\end{lemma}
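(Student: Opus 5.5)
The proof works directly from the definition of $\mathcal{H}_{\mathcal{M}}$ as coordinate-wise monotonicity and never uses derivatives. Write each input as $x=(T,\,1/T,\,T_r,\,u)$, where $u$ collects all remaining descriptors. Membership $f\in\mathcal{H}_{\mathcal{M}}$ then means three things. First, for any $x,x'$ that agree except in the $T$ coordinate with $T\le T'$, we have $f(x)\le f(x')$. The same holds for the $T_r$ coordinate. For the $1/T$ coordinate, increasing it gives $f(x)\ge f(x')$.

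Fix one coordinate and one such pair $x,x'$, say the $T$ case with $T\le T'$. By hypothesis $f(x)\le f(x')$ and $g(x)\le g(x')$. Multiplying by $\alpha\ge 0$ and $\beta\ge 0$ preserves both inequalities, and adding them gives $(\alpha f+\beta g)(x)\le(\alpha f+\beta g)(x')$. The $T_r$ case is identical. The $1/T$ case is the same argument with the inequalities reversed. Since the pair was arbitrary, $\alpha f+\beta g\in\mathcal{H}_{\mathcal{M}}$.

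Constant functions, such as an initial bias term $F_0$, are trivially in $\mathcal{H}_{\mathcal{M}}$. With this, the ensemble statement follows by induction on the number of trees. Take $F_0$ constant and $F_t=1\cdot F_{t-1}+\eta_t f_t$ with $\eta_t\ge 0$ and $f_t\in\mathcal{H}_{\mathcal{M}}$. The two-term closure with $\alpha=1$ and $\beta=\eta_t$ gives $F_t\in\mathcal{H}_{\mathcal{M}}$ at every step, and hence the final ensemble $F_T$ lies in $\mathcal{H}_{\mathcal{M}}$. In short, $\mathcal{H}_{\mathcal{M}}$ is a convex cone containing the constants.

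There is no real analytical obstacle here. The only point that needs care is interpretive: $T$, $1/T$ and $T_r$ are fed to the trees as separate features, and the monotonicity must be read coordinate-wise with all other inputs held fixed, as the appendix stipulates. That is exactly the notion preserved by nonnegative combinations, so the proof should state this reading explicitly. It should also note that nothing is claimed about joint variation of $T$ together with $1/T$. The statement does not guarantee monotonicity along the physical path where all three temperature features move together, and the proof should not be taken to show that.
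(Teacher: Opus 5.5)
Your proof is correct and follows the paper's argument step for step: compare two inputs differing in a single constrained coordinate (with the inequality reversed for $1/T$), use closure under nonnegative combinations, and induct from a constant $F_0$ via $F_t = F_{t-1} + \eta_t f_t$. One correction to your closing caveat: chaining the three coordinate-wise inequalities (raise $T$, then lower $1/T$, then raise $T_r$, everything else fixed) does give monotonicity when all three temperature features move together in the physical direction, so that motion is covered; what actually blocks end-to-end monotonicity in $T$ is that the unconstrained interaction features $I_k$ carry a factor $1/T$.
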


\subsection{Proof of Lemma~\ref{lemma:geometric_restriction}}
\label{proof:lemma_geometric}

We prove the claim for a positively constrained coordinate $z \in \{T,T_r\}$; the proof for the negatively constrained coordinate $1/T$ is identical after reversing the inequality. Let $x$ and $x'$ be two inputs that differ only in $z$, with $z(x') \geq z(x)$. Since $f,g \in \mathcal{H}_{\mathcal{M}}$,
\begin{equation}
    f(x') \geq f(x), \qquad g(x') \geq g(x).
\end{equation}
For $\alpha,\beta \geq 0$,
\begin{equation}
    (\alpha f+\beta g)(x')-(\alpha f+\beta g)(x)
    =
    \alpha\bigl(f(x')-f(x)\bigr)+
    \beta\bigl(g(x')-g(x)\bigr)
    \geq 0.
\end{equation}
Thus $\alpha f+\beta g$ is nondecreasing in $z$. For $1/T$, if $(1/T)(x') \geq (1/T)(x)$, then $f(x') \leq f(x)$ and $g(x') \leq g(x)$, so the same nonnegative linear combination is nonincreasing. Applying this argument to every constrained coordinate proves closure.

For a boosted ensemble $F_m=F_0+\sum_{t=1}^{m}\eta_t f_t$, the initialization $F_0$ is constant and hence monotone. If every $f_t \in \mathcal{H}_{\mathcal{M}}$ and every learning rate $\eta_t\geq0$, induction using the closure argument gives $F_m\in\mathcal{H}_{\mathcal{M}}$ for all $m$. \qed

\begin{proposition}[Capacity Restriction Under Monotonicity]
\label{prop:capacity_restriction}
Let $\mathcal{H}$ be an unconstrained tree-ensemble class and let $\mathcal{H}_{\mathcal{M}}=\mathcal{H}\cap\mathcal{M}$ be the subclass satisfying the monotonicity constraints above. For any fixed sample $S=\{x_i\}_{i=1}^{N}$,
\begin{equation}
    \widehat{\mathfrak R}_{S}(\mathcal{H}_{\mathcal{M}})
    \leq
    \widehat{\mathfrak R}_{S}(\mathcal{H}),
\end{equation}
where $\widehat{\mathfrak R}_{S}$ denotes empirical Rademacher complexity. Moreover, for any vector of residual noise $\epsilon\in\mathbb{R}^{N}$,
\begin{equation}
    \inf_{h\in\mathcal{H}_{\mathcal{M}}}
    \sum_{i=1}^{N}\bigl(\epsilon_i-h(x_i)\bigr)^2
    \geq
    \inf_{h\in\mathcal{H}}
    \sum_{i=1}^{N}\bigl(\epsilon_i-h(x_i)\bigr)^2.
\end{equation}
Thus monotonicity cannot increase the class's ability to fit arbitrary noise; it can only leave that ability unchanged or reduce it.
\end{proposition}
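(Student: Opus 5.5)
Both inequalities follow from the single fact that $\mathcal{H}_{\mathcal{M}}=\mathcal{H}\cap\mathcal{M}\subseteq\mathcal{H}$. The plan is to reduce each claim to monotonicity of a supremum or infimum under enlarging the index set. No structural property of trees or of the constraint set $\mathcal{M}$ is needed beyond the subset relation. In particular, Lemma~\ref{lemma:geometric_restriction} is not required here; it only guarantees that $\mathcal{H}_{\mathcal{M}}$ is nonempty and closed under boosting, so the quantities below are well defined.

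\textbf{Rademacher complexity.} Fix the sample $S=\{x_i\}_{i=1}^N$ and a sign vector $\sigma\in\{\pm1\}^N$. For every $h\in\mathcal{H}_{\mathcal{M}}$ we also have $h\in\mathcal{H}$, so
\begin{equation}
    \frac{1}{N}\sum_{i=1}^N\sigma_i h(x_i)\le\sup_{h'\in\mathcal{H}}\frac{1}{N}\sum_{i=1}^N\sigma_i h'(x_i).
\end{equation}
Taking the supremum over $h\in\mathcal{H}_{\mathcal{M}}$ preserves this pointwise inequality for each $\sigma$. Taking the expectation over uniform Rademacher $\sigma$ is monotone, which gives $\widehat{\mathfrak R}_S(\mathcal{H}_{\mathcal{M}})\le\widehat{\mathfrak R}_S(\mathcal{H})$.

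\textbf{Noise fitting.} The argument is symmetric. For fixed $\epsilon\in\mathbb{R}^N$, let $L(h)=\sum_i(\epsilon_i-h(x_i))^2$. Since every $h\in\mathcal{H}_{\mathcal{M}}$ lies in $\mathcal{H}$, we have $L(h)\ge\inf_{h'\in\mathcal{H}}L(h')$. Taking the infimum over $\mathcal{H}_{\mathcal{M}}$ then gives the second display. The closing sentence of the proposition is simply a restatement of these two inequalities.

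\textbf{Expected obstacle.} The only delicate point is measure-theoretic rather than combinatorial. If $\mathcal{H}$ is uncountable, the supremum inside the expectation must be measurable for $\widehat{\mathfrak R}_S$ to be defined. Because $S$ is finite and $\sigma$ takes only $2^N$ values, the expectation is a finite average, so no measurability issue arises. We also adopt the convention $\sup\emptyset=-\infty$ and $\inf\emptyset=+\infty$. This is moot in any case, since the constant initializer $F_0$ of the boosted ensemble belongs to $\mathcal{H}_{\mathcal{M}}$ by Lemma~\ref{lemma:geometric_restriction}, so the class is nonempty. Both values may be $+\infty$ if $\mathcal{H}$ is unbounded on $S$; the inequalities are interpreted in the extended reals.
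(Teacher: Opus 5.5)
Your argument is correct and is the same set-inclusion argument the paper uses. The paper bounds the supremum over $\mathcal{H}_{\mathcal{M}}\subseteq\mathcal{H}$ by the supremum over $\mathcal{H}$ for each $\sigma$, uses monotonicity of the expectation, and notes that the residual sum of squares minimised over the smaller class cannot be smaller. Your extra remarks are sound refinements that the paper leaves implicit: the expectation is a finite average over $2^N$ sign vectors, and the values are read in the extended reals.
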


\subsection{Proof of Proposition~\ref{prop:capacity_restriction}}

The first inequality follows directly from set inclusion. By definition,
\begin{equation}
    \widehat{\mathfrak R}_{S}(\mathcal{A})
    =
    \frac{1}{N}\mathbb{E}_{\sigma}
    \left[
        \sup_{h\in\mathcal{A}}
        \sum_{i=1}^{N}\sigma_i h(x_i)
    \right],
\end{equation}
where $\sigma_i$ are independent Rademacher signs. Since $\mathcal{H}_{\mathcal{M}}\subseteq\mathcal{H}$, the supremum over $\mathcal{H}_{\mathcal{M}}$ is bounded above by the supremum over $\mathcal{H}$ for every realization of $\sigma$, and the expectation preserves the inequality.

The squared-error statement is the same inclusion argument applied to empirical risk minimization. Optimizing over the smaller class $\mathcal{H}_{\mathcal{M}}$ cannot achieve a smaller residual sum of squares than optimizing over the larger class $\mathcal{H}$. This is the formal sense in which hard monotonic constraints restrict noise-fitting capacity. The strictness and practical magnitude of this restriction are data-dependent; in \name, they are measured empirically in Appendix~\ref{app:monotonicity}. \qed

\begin{lemma}[Efficiency of Monotonic Optimization]
\label{lemma:computational_efficiency_app}
Consider training monotone oblivious-tree base learners $f_t \in \mathcal{H}_{\mathcal{M}}$ of depth $L$, where $d_t$ features are subject to monotonic constraints on $N$ samples with $K$ features. Then:
\begin{enumerate}
    \item \textbf{Complexity}: Training incurs $O(NKL)$ cost per tree for split search and $O(d_t \cdot 2^L)$ for leaf adjustment. For realistic configurations ($L \leq 10$, $d_t \ll K$, $N \geq 10^4$), the adjustment cost is negligible: $O(d_t \cdot 2^L) \ll O(NKL)$.
    
    \item \textbf{Closure under boosting}: The ensemble $F(x) = \sum_{t=1}^T \eta f_t(x)$ remains in $\mathcal{H}_{\mathcal{M}}$ for learning rates $\eta \geq 0$, without post-hoc projection.
\end{enumerate}
\end{lemma}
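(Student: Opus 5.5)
The two parts need different tools: a cost count for the complexity claim, and the closure result already proved for the second. I would take them in order.

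\textbf{Part 1 (complexity).} I would make the oblivious-tree structure explicit. A depth-$L$ oblivious tree applies one (feature, threshold) split per level, shared by every node at that level.

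For the split search, I would use CatBoost's histogram quantization: each feature is pre-binned into a constant number $B$ of borders. At each of the $L$ levels, the algorithm does the following.
\begin{itemize}
\item It scans all $N$ samples once per feature, accumulating gradient statistics into (current leaf, bin) histograms. This costs $\mathcal{O}(NK)$.
\item It scores each candidate split from these histograms. This costs $\mathcal{O}(K B \cdot 2^{\ell})$ at level $\ell$. It is dominated by $\mathcal{O}(NK)$ whenever $B\,2^{L}\lesssim N$, which holds in the stated regime $L\le 10$, $N\ge 10^4$ with $B$ treated as a constant.
\end{itemize}
Summing over the $L$ levels gives $\mathcal{O}(NKL)$ per tree.

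For the leaf adjustment, the tree has $2^L$ leaves. Monotonicity in one constrained coordinate only relates leaves whose index bit-strings differ in the bits of levels that split on that coordinate. I would enforce it by a projection pass over the leaf values once per constrained feature. For example, use pool-adjacent-violators, or a clamp/averaging sweep along the chains of the leaf hypercube. Each such pass touches every leaf a constant number of times, so the total is $\mathcal{O}(d_t\cdot 2^L)$.

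The negligibility claim then follows by comparison. We have $d_t 2^L \le d_t\cdot 1024$, while $NKL \ge 10^4 K$, and $d_t \ll K$.

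\textbf{Part 2 (closure).} The main step is to verify that each adjusted tree $f_t$ lies in $\mathcal{H}_{\mathcal{M}}$. Consider two inputs that differ only in a constrained coordinate $z$. Moving $z$ upward can only flip routing bits at levels that split on $z$, and only from left to right. These are exactly the leaf pairs the projection orders. So monotonicity of the leaf values along those chains gives $f_t(x')\ge f_t(x)$; the inequality is reversed for $1/T$.

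Then I would apply Lemma~\ref{lemma:geometric_restriction_app} with $\alpha=\beta=\eta\ge0$ and induct on $t$, as in its proof. The constant initialization $F_0$ is trivially monotone. No post-hoc projection of $F$ is needed, because closure holds at every boosting step.

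\textbf{Main obstacle.} The hard part is pinning down the leaf-adjustment procedure so that the $\mathcal{O}(d_t 2^L)$ bound and the per-tree monotonicity are proved for the same procedure. When a coordinate appears at several levels, the order constraints form a product-like poset on the leaves rather than a single chain. I would handle this by processing one coordinate at a time, treating the other bits as fixed, so that each pass is a chain projection. I would then argue that later passes do not break earlier orderings, since averaging order-preserving chains preserves order. If that argument fails, I would fall back to citing the constrained split-scoring rule, which rejects violating splits, as the mechanism and count its cost inside the $\mathcal{O}(NKL)$ term.
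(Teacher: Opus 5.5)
Your split-search count and your closure induction agree with the paper's proof; your histogram-scoring bound is more careful than the paper's. The gap is in the leaf-adjustment step, which both parts of the lemma rest on. You split the adjustment into one averaging/PAV pass per constrained coordinate, and justify it by ``averaging order-preserving chains preserves order.'' That is true only when the two parallel chains carry identical (or proportional) weights. The min--max formula for isotonic regression is monotone in the data for \emph{fixed} weights, not across different weights. A loss-minimizing leaf projection, in the sense of the paper's constrained minimization over leaf values, must weight each leaf by its sample count or Hessian sum, and with such weights the claim fails. Take two nondecreasing constrained coordinates $z_1,z_2$, each split once, with leaf values $v_{ab}$ ($a$ the $z_1$-bit, $b$ the $z_2$-bit). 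Let $v_{00}=v_{10}=1$ and $v_{01}=v_{11}=0$ after the $z_1$ pass, so the $z_1$-order holds. Take weights $w_{00}=w_{11}=100$ and $w_{01}=w_{10}=1$. The $z_2$ pass then pools row $a=0$ to $100/101$ and row $a=1$ to $1/101$. The adjusted tree is now strictly decreasing in $z_1$, so $f_t\notin\mathcal{H}_{\mathcal{M}}$. Hence the procedure as proposed establishes neither the $O(d_t 2^L)$ bound nor the per-tree monotonicity that Part~2 needs. Your fallback does not close the gap either. Rejecting splits whose children are misordered at scoring time does not control the final leaf values of an oblivious tree. Those values are fixed only after all $L$ levels, including later splits on other features, have refined the cells.

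The paper avoids this by not decomposing at all. It treats the adjustment as the exact projection onto the monotone cone of the induced partial order. It says explicitly that this does \emph{not} split into $d_t$ one-dimensional isotonic regressions, and argues only that the cost is some function of the $2^L$ leaves, independent of $N$. It never fixes a specific algorithm, so it does not actually prove a linear $O(d_t 2^L)$ bound. If you want one, commit to a weight-free, order-preserving operator. For each constrained coordinate in turn, replace every leaf value by the running maximum along its chain from the low end (running minimum for $1/T$), or use unweighted PAV. Such operators are monotone in their input, so the pointwise inequalities between parallel chains created by earlier passes survive later passes. After $d_t$ passes of cost $O(2^L)$ each, the leaf vector is monotone in every constrained coordinate. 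This is a feasibility adjustment rather than the loss-minimizing projection, but it is all the lemma as stated requires. A single coordinate split at $m$ levels is not the difficulty. Its thresholds are nested, so only $m+1$ of the $2^m$ patterns of its bits are reachable, and with the other bits fixed these already form a chain. The non-chain structure comes only from distinct constrained coordinates. One small slip: the boosting step is $F_t=F_{t-1}+\eta f_t$, so the closure lemma is applied with $\alpha=1$, $\beta=\eta$, not $\alpha=\beta=\eta$.
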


\subsection{Proof of Lemma~\ref{lemma:computational_efficiency}}
\label{proof:lemma_computational}
    
\textbf{Split Search Complexity:} CatBoost employs \textbf{oblivious decision trees}, where all nodes at depth $\ell$ split on the same feature. The split-search phase proceeds level-by-level.
    
At each level $\ell \in \{0, \ldots, L-1\}$:
\begin{itemize}
    \item Evaluate all $K$ candidate features via histogram aggregation: $O(NK)$.
    \item For constrained features $x_j \in \mathcal{C}$, monotonicity checking adds constant-time feasibility checks during gain computation.
    \item Select the split maximizing the constrained gain criterion.
\end{itemize}
    
Total split-search cost is $\mathcal{O}(NKL)$ per tree, matching the unconstrained order.
    
\textbf{Leaf Value Adjustment:} After fixing the tree structure, leaf values are computed subject to the monotone ordering induced by constrained splits:
\begin{equation}
    \boldsymbol{\ell}^* = \argmin_{\boldsymbol{\ell} \in \mathbb{R}^{2^L}} \sum_{i=1}^N \mathcal{L}(y_i, F_{t-1}(x_i) + \ell_{r(x_i)}),
\end{equation}
where $r(x_i) \in \{1, \ldots, 2^L\}$ is the leaf index for sample $x_i$. The monotonic constraints impose partial-order inequalities on the $2^L$ leaves. For a constrained coordinate, any two leaves that differ by increasing that coordinate must have ordered leaf values.
    
The adjustment is over the leaves of a single tree, not over all $N$ samples. Enforcing multiple interacting monotonic constraints requires projecting the leaf values onto a partially ordered monotone cone. While this cannot be decomposed into $d_t$ independent one-dimensional isotonic regressions, the complexity of this finite-poset projection depends purely on the number of leaves (a function of $2^{L}$) rather than the number of training samples $N$. Thus the additional constrained-optimization cost is lower order relative to split search for shallow trees.
    
\textbf{Dominance Analysis:} For typical solubility prediction tasks:
\begin{itemize}
    \item Tree depth: $L \in [6, 10] \Rightarrow 2^L \in [64, 1024]$.
    \item Constrained features: $d_t \leq 4$ (e.g., $T$, $1/T$, $T_r$, $T_m$).
    \item Dataset size: $N \geq 10^4$ (AqSolDB), $N \geq 10^5$ (BigSolDB).
    \item Total features: $K \in [176, 380]$.
\end{itemize}
    
The representative cost ratio is:
\begin{equation}
    \frac{d_t \cdot 2^L}{NKL} \approx \frac{4 \times 1024}{10^4 \times 176 \times 8} \approx 3 \times 10^{-4} < 0.03\%.
\end{equation}
Therefore, the leaf-adjustment overhead is negligible in this regime, and the dominant term remains $\mathcal{O}(TNKL)$ for an ensemble of $T$ trees.
    
\textbf{Closure under Boosting:} By construction, each base learner $f_t$ satisfies the specified monotonicity constraints. Lemma~\ref{lemma:geometric_restriction_app} shows that a nonnegative additive combination of monotone learners is monotone. Since the boosting update has the form
\begin{equation}
    F_{t+1}(x) = F_t(x) + \eta f_t(x), \quad \eta > 0,
\end{equation}
and $F_0$ is constant, induction gives $F_T \in \mathcal{H}_{\mathcal{M}}$ for all $T$. This avoids any need for post-hoc projection of the final ensemble. \qed

\begin{theorem}[Thermodynamically Monotone and Capacity-Restricted Prediction]
\label{thm:thermodynamic_consistency_app}
The \name framework, when trained with monotonic constraints on temperature features $\{T, 1/T, T_r\}$, satisfies:
\begin{enumerate}
    \item \textbf{Physical Consistency}: Under the standard endothermic-dissolution regime, the predicted solubility obeys the coordinate-wise thermodynamic constraints in Equation~\ref{thermo-constraints}.
    
    \item \textbf{Capacity Restriction}: The monotone ensemble class is a subset of the corresponding unconstrained ensemble class. Therefore, on any fixed sample, the monotone class cannot have larger empirical Rademacher complexity or greater ability to fit arbitrary noise than the unconstrained class.
    
    \item \textbf{Computational Feasibility}: Training complexity remains $\mathcal{O}(TNKL)$ for $T$ trees in the shallow-tree regime used by \name.
\end{enumerate}
\end{theorem}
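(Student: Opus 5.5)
The theorem follows by assembling the three earlier results; nothing new is needed beyond checking that each one applies to the model actually trained. I would take the three items in order.

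\textbf{Physical consistency.} First I would fix the setting. Each CatBoost base learner $f_t$ is an oblivious tree trained with hard monotone constraints on the coordinates $T$, $T_r$ (nondecreasing) and $1/T$ (nonincreasing). CatBoost enforces these at split and leaf-adjustment time, so each $f_t$ lies in $\mathcal{H}_{\mathcal{M}}$.

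The final predictor is $F = F_0 + \sum_t \eta f_t$, with $F_0$ constant and $\eta>0$. Lemma~\ref{lemma:geometric_restriction} (equivalently, the closure part of Lemma~\ref{lemma:computational_efficiency}) then gives $F\in\mathcal{H}_{\mathcal{M}}$ by induction on the number of trees. Under the reading of Equation~\ref{thermo-constraints} as coordinate-wise finite monotonicity, this is exactly item 1. The endothermic-dissolution hypothesis only fixes the sign convention of the constraints; it plays no role in the proof itself.

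There is one subtlety. $T$, $1/T$ and $T_r$ are functionally coupled, so the tempting claim is that the prediction is monotone along the physical path where $T$ increases. That does follow here: increasing $T$ raises $T$ and $T_r$ and lowers $1/T$, so all three constraints push the prediction in the same direction. I would treat the interaction features $I_k\propto 1/T$ as unconstrained, and state explicitly that the guarantee is coordinate-wise, holding the other descriptors, including $\mathbf{I}$, fixed. This is the main point to handle carefully.

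\textbf{Capacity restriction.} Item 2 follows from Proposition~\ref{prop:capacity_restriction}. Take $\mathcal{H}$ to be the class of depth-$L$, $T$-tree oblivious ensembles on the same inputs, and $\mathcal{H}_{\mathcal{M}}=\mathcal{H}\cap\mathcal{M}$. Set inclusion gives both the Rademacher inequality (a supremum over a subset is no larger, pointwise in $\sigma$) and the residual-sum-of-squares inequality for fitting an arbitrary noise vector $\epsilon$.

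\textbf{Computational feasibility.} For item 3, Lemma~\ref{lemma:computational_efficiency} gives a per-tree cost of $\mathcal{O}(NKL)+\mathcal{O}(d_t 2^L)$. In the shallow regime with $d_t 2^L\ll NKL$, the per-tree cost is $\mathcal{O}(NKL)$. Summing over the $T$ trees then gives $\mathcal{O}(TNKL)$.
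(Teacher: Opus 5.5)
Your proposal is correct and follows the paper's proof essentially step for step. Item~1 uses closure of nonnegative combinations (Lemma~\ref{lemma:geometric_restriction}) plus induction over boosting rounds, item~2 uses set inclusion via Proposition~\ref{prop:capacity_restriction}, and item~3 uses the lower-order leaf-adjustment cost from Lemma~\ref{lemma:computational_efficiency} summed over the trees. Your remark on the coupled temperature features goes beyond the paper, which never discusses it, and it holds only in the restricted form you end with: chaining the three coordinate-wise constraints gives monotonicity when $T$, $1/T$ and $T_r$ move together with $\mathbf{I}$ held fixed, but along the true physical temperature path the unconstrained $I_k$ (which carry a factor $1/T$) also change, so monotonicity there is not guaranteed — which is harmless, since the theorem claims only the coordinate-wise version.
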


\subsection{Proof of Theorem~\ref{thm:thermodynamic_consistency}}
\label{proof:theorem_main}

\textbf{Physical Consistency:} 
The monotonic constraints are enforced at the base-learner level. By Lemma~\ref{lemma:geometric_restriction_app}, the boosted additive ensemble preserves coordinate-wise monotonicity for nonnegative learning rates. Thus, within the endothermic-dissolution regime,
\begin{equation}
    \hat{\Sigma}(T') \geq \hat{\Sigma}(T) \text{ for } T'\geq T,\qquad
    \hat{\Sigma}(T_r') \geq \hat{\Sigma}(T_r) \text{ for } T_r'\geq T_r,
\end{equation}
with all other descriptors fixed, and
\begin{equation}
    \hat{\Sigma}((1/T)') \leq \hat{\Sigma}(1/T) \text{ for } (1/T)'\geq 1/T.
\end{equation}
This is the finite-difference version of Equation~\ref{thermo-constraints}, appropriate for tree ensembles.

\textbf{Capacity Restriction:} 
The constrained learner is optimized over $\mathcal{H}_{\mathcal{M}}=\mathcal{H}\cap\mathcal{M}$, a subset of the unconstrained ensemble class $\mathcal{H}$. Proposition~\ref{prop:capacity_restriction} therefore implies that the constrained class cannot have higher empirical Rademacher complexity or lower best-fit squared error on arbitrary noise than the unconstrained class. The theorem does not require a universal isotonic-regression rate such as $O(N^{1/3})$; the observed reduction in seed variance and failure-mode suppression is reported empirically in Appendix~\ref{app:monotonicity}.

\textbf{Computational Feasibility:} 
The monotonicity constraints are enforced during tree construction and leaf-value adjustment. Lemma~\ref{lemma:computational_efficiency_app} shows that the additional leaf-level cost is lower order for the shallow trees and large sample sizes used in \name, so the ensemble training complexity remains $\mathcal{O}(TNKL)$. \qed

\section{\name Methodology}
\label{appendix_method}

\subsection{Physicochemical Featurizer}

Table~\ref{tab:feature_hierarchy} enumerates the complete set of physicochemical descriptors used by the \name featurizer. The features are organized by abstraction level, from explicit compositional counts to higher-order topological motifs and thermodynamic proxies. For each feature group, we report the specific descriptors and their intended physical interpretation, providing a precise accounting of the model’s input space.

\begin{table*}[h!]
\centering
\caption{Complete \name Feature Inventory (176 features per molecule)}
\label{tab:feature_hierarchy}
\resizebox{0.9\textwidth}{!}{%
\begin{tabular}{@{}llp{11cm}@{}}
\toprule
\textbf{Category} & \textbf{Feature Group} & \textbf{Specific Descriptors \& Rationale} \\ \midrule

\multirow{8}{*}{\textbf{1. Compositional (109)}} & \textit{Molecular Weight} & \texttt{MolWt}: Total molecular weight as a size descriptor. \\ \cmidrule(l){2-3} 

 & \multirow{3}{*}{\textit{Atom/Ring Counts}} & \textbf{Structural Census (23):} \texttt{HeavyAtomCount}, \texttt{NumHeteroatoms}, \texttt{NumValenceElectrons}, \\
 & & \texttt{RingCount}, \texttt{NumAromaticRings}, \texttt{NumAliphaticRings}, \texttt{NumSaturatedRings}, \texttt{NumHAcceptors}, \\
 & & \texttt{NumHDonors}, \texttt{NumRotatableBonds}, \texttt{NHOHCount}, \texttt{NOCount}, plus 11 additional RDKit ring/atom counts. \\ \cmidrule(l){2-3} 

 & \multirow{4}{*}{\textit{Functional Fragments}} & \textbf{RDKit Fragment Counts (All 85 \texttt{fr\_*} features):} Census of functional groups including: \\
 & & \textbullet\ \texttt{fr\_ketone}, \texttt{fr\_aldehyde}, \texttt{fr\_ester}, \texttt{fr\_ether} (Polarity) \\
 & & \textbullet\ \texttt{fr\_aniline}, \texttt{fr\_sulfonamd}, \texttt{fr\_amide}, \texttt{fr\_phenol} (H-Bonding) \\
 & & \textbullet\ \texttt{fr\_halogen}, \texttt{fr\_benzene}, \texttt{fr\_pyridine} (Lipophilicity/Aromaticity) \\ \midrule

\multirow{6}{*}{\textbf{2. Topological (19)}} & \multirow{2}{*}{\textit{Graph Indices (6)}} & \textbf{Connectivity Descriptors:} \\
 & & \texttt{BalabanJ}, \texttt{BertzCT}, \texttt{Kappa1}, \texttt{Kappa2}, \texttt{HallKierAlpha}, \texttt{Phi}. \\ \cmidrule(l){2-3} 

 & \multirow{4}{*}{\textit{MoSE Motifs (13)}} & \textbf{Steric \& Packing Constraints:} \\
 & & \textbullet\ \textbf{Cycles:} Ring sizes \texttt{mose\_cyc3} through \texttt{mose\_cyc8}, \texttt{mose\_benzene}. \\
 & & \textbullet\ \textbf{Complexity:} Fused rings (\texttt{mose\_fused}), branching (\texttt{mose\_branched\_4}, \texttt{mose\_star\_5}). \\
 & & \textbullet\ \textbf{Paths:} \texttt{mose\_path3}, \texttt{mose\_path4}, \texttt{mose\_path5}. \\ \midrule

\multirow{6}{*}{\textbf{3. Energetic (6)}} & \multirow{2}{*}{\textit{Lattice Energy Proxy}} & \textbf{Joback Melting Point (\texttt{pred\_Tm}):} \\
 & & Calculated via group contribution ($T_m \approx 122.5 + \sum N_i T_{c,i}$) to proxy $\Delta G_{\mathrm{fus}}$. \\ \cmidrule(l){2-3} 

 & \multirow{4}{*}{\textit{Abraham Proxies (5)}} & \textbf{Solvation Parameters:} \\
 & & \textbullet\ \texttt{abraham\_A} (Acidity), \texttt{abraham\_B} (Basicity): H-bond donor/acceptor. \\
 & & \textbullet\ \texttt{abraham\_S} (Polarity): Heteroatom and aromatic density. \\
 & & \textbullet\ \texttt{abraham\_E} (Polarizability), \texttt{abraham\_V} (McGowan Volume). \\ \midrule

\multirow{6}{*}{\textbf{4. Physicochemical (42)}} & \multirow{2}{*}{\textit{Electronic State (8)}} & \textbf{Charge Distribution (8):} \texttt{MaxEStateIndex}, \texttt{MinEStateIndex}, \texttt{MaxPartialCharge}, \\
 & & \texttt{MinPartialCharge}, and their 4 absolute counterparts (\texttt{MaxAbs...}). \\ \cmidrule(l){2-3} 

 & \multirow{2}{*}{\textit{Surface Area (28)}} & \textbf{VSA Distributions (28):} \texttt{EState\_VSA1--11}, \texttt{PEOE\_VSA1--14}, \\
 & & \texttt{LabuteASA}, \texttt{TPSA} (Polar Surface Area), plus 1 dynamically extracted surface proxy. \\ \cmidrule(l){2-3} 

 & \multirow{2}{*}{\textit{Global Properties (6)}} & \textbf{Global Properties (6):} \texttt{MolLogP} (Lipophilicity), \texttt{FractionCSP3}, \texttt{qed} (Drug-likeness), \\
 & & \texttt{AvgIpc}, \texttt{SPS}, and remaining dynamically extracted bulk properties. \\ \bottomrule
\end{tabular}%
}
\end{table*}

\subsection{Regime I: Single-Solvent Prediction}

For Regime~I experiments, solubility was predicted using a CatBoost regression model operating on the full physicochemical feature vector described in Table~\ref{tab:feature_hierarchy}. All descriptors were deterministically pre-computed prior to training and held fixed throughout optimization.

The model was trained to minimize the root mean squared error,
\begin{equation}
\mathcal{L} = \sqrt{\frac{1}{N} \sum_{i=1}^{N} (\Sigma_i - \hat{\Sigma}_i)^2},
\end{equation}
with predictions given by
\begin{equation}
\hat{\Sigma} = \text{CatBoost}(\mathbf{x}_{\mathcal{A}}).
\end{equation}

Training employed a slow-learning configuration (learning rate $\eta = 0.02$, 10{,}000 boosting iterations, tree depth 8) to stabilize optimization in the high-dimensional feature space. All runs were executed on an Apple M2 CPU over 5 seeds for reproducibility, and results are reported in Table~\ref{tab:complete_aqueous_results}.

\subsection{Regime II: Multi-Solvent Interaction Learning (Implementation Details)}

This section documents the concrete instantiation of the Interaction Layer used for multi-solvent solubility prediction. All conceptual motivation and high-level design choices are described in the main text; here, we specify the exact representations, transformations, and optimization procedures.

\subsubsection{Feature Tokenization}

Each refined representational vector (Table \ref{tab:council}) $\mathbf{c} \in \mathbb{R}^{24}$ is converted into a sequence of feature tokens. For feature index $i$, the embedding is defined as
\begin{equation}
\mathbf{e}_i = (c_i \cdot \mathbf{w}_i + \mathbf{b}_i) + \mathbf{t}_i,
\end{equation}
where $c_i$ is the scalar feature value, $\mathbf{w}_i \in \mathbb{R}^{d}$ is a learnable projection vector, $\mathbf{b}_i$ is a bias term, and $\mathbf{t}_i$ is a fixed type embedding. This produces sequence tensors
\[
\mathbf{E}_{\mathcal{A}}, \mathbf{E}_{\mathcal{B}} \in \mathbb{R}^{24 \times d},
\]
with $d=32$, for the solute and solvent, respectively.

\subsubsection{Cross-Coupling Interaction Layer}
\label{cross-attention-details}

Interactions are modeled using multi-head coupling, with the solute sequence querying the solvent sequence. For each head $h \in \{1,\dots,H\}$ (with $H=4$), linear projections are applied:
\begin{align}
\mathbf{Q}_h &= \mathbf{E}_{\mathcal{A}} \mathbf{W}_h^{Q}, \\
\mathbf{K}_h &= \mathbf{E}_{\mathcal{B}} \mathbf{W}_h^{K}, \\
\mathbf{V}_h &= \mathbf{E}_{\mathcal{B}} \mathbf{W}_h^{V},
\end{align}
where $\mathbf{W}_h^{Q}, \mathbf{W}_h^{K}, \mathbf{W}_h^{V} \in \mathbb{R}^{d \times d_k}$ and $d_k = 8$.

Outputs are computed as
\begin{equation}
\text{head}_h =
\text{softmax}\!\left(
\frac{\mathbf{Q}_h \mathbf{K}_h^\top}{\sqrt{d_k}}
\right) \mathbf{V}_h.
\end{equation}
The heads are concatenated and linearly projected to form the interaction-aware representation
\begin{equation}
\mathbf{Z}_{\mathcal{A} \rightarrow \mathcal{B}} =
\text{Concat}(\text{head}_1,\dots,\text{head}_H)\mathbf{W}^{O}.
\end{equation}

\textbf{Training \& Explainability:} The model was trained using 5-Fold Cross-Validation to minimize Mean Squared Error (MSE) Loss and prevent data leakage: \begin{equation} \mathcal{L} = \frac{1}{N} \sum_{i=1}^{N} (\Sigma_i - \hat{\Sigma}_i)^2 \end{equation} We utilize the Adam optimizer ($LR=1\times10^{-3}$, batch size 64) for 20 epochs per fold. Crucially, the weights matrix $\mathbf{A} \in \mathbb{R}^{24 \times 24}$ provides intrinsic explainability. By averaging these weights (Figure \ref{fig:heatmap}), we reveal which solute-solvent feature pairs drive prediction, offering a direct window into the learned chemical logic.

\subsubsection{Final Prediction and Constraints}

Solubility is predicted using a CatBoost regressor:
\begin{equation}
\hat{\Sigma} =
\text{CatBoost}
\left[
\mathbf{x}_{\mathcal{A}}^{\text{full}}
\parallel
\mathbf{x}_{\mathcal{B}}^{\text{full}}
\parallel
\mathbf{I}
\parallel
f(T)
\right],
\end{equation}
where $f(T)$ includes temperature, inverse temperature, predicted melting point, and reduced temperature, and $\mathbf{I}$ is the enthalpic state vector. \rvansh{We note for consistency with the main text that the symbols $\mathbf{x}_{\mathcal{A}}$ and $\mathbf{x}_{\mathcal{B}}$ in the abbreviated equation $\hat{\Sigma}=\text{CatBoost}(\mathbf{x}_{\mathcal{A}},\mathbf{x}_{\mathcal{B}},\mathbf{I},f(T))$ refer to the \emph{full} 176-dimensional physicochemical descriptors $\mathbf{x}_{\mathcal{A}}^{\text{full}}$ and $\mathbf{x}_{\mathcal{B}}^{\text{full}}$ used here, concatenated as inputs to the regressor, and the two formulations are equivalent.}

Prior to training, low-variance features were removed using a variance threshold of $10^{-4}$. The model was trained with 3{,}000 trees (depth $=8$) using a slow learning rate of $0.02$ and $L_2$ leaf regularization of $5$, with early stopping after 100 rounds based on a 95/5 train--validation split. To enforce thermodynamic consistency, monotonic constraints were applied to temperature-derived features ($T_r$, $T$, $1/T$), while all other features were left unconstrained. Training was performed on an Apple M2 CPU over 5 seeds and results are reported in Table~\ref{tab:complete_bigsol_results}. We report both RMSE and $R^2$. RMSE is the primary metric, as it provides a physically interpretable error in $\log S$ units, while $R^2$ enables normalised comparison across datasets with differing variance.

\begin{figure*}[ht]
    \centering
    % Top Row: The Global "Physics" View
    \begin{subfigure}[b]{0.48\textwidth}
        \centering
        \includegraphics[width=\linewidth]{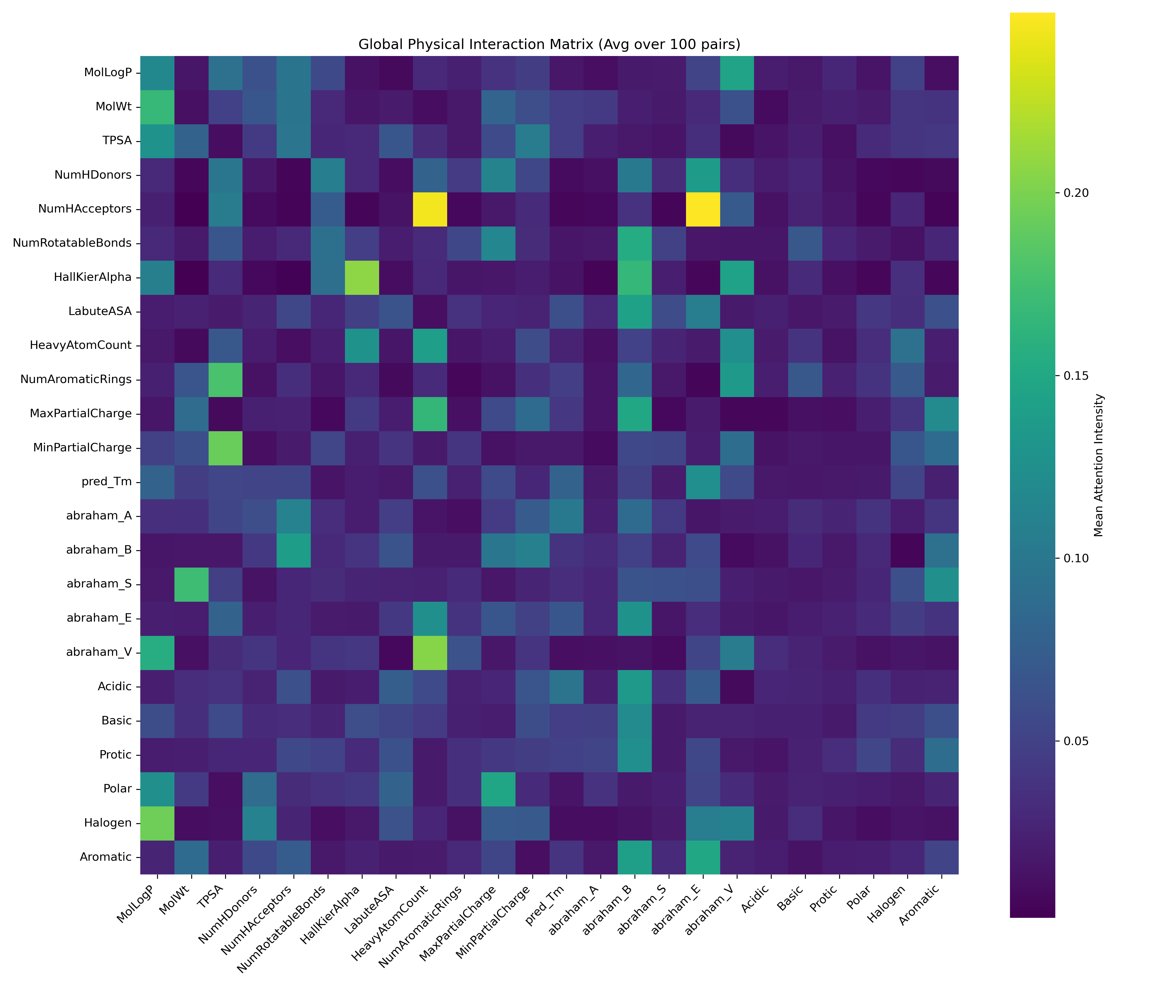} 
        \caption{\textbf{Global Interaction Laws:} Averaged weights across 100 random pairs. This reveals the universal physicochemical rules learned by the model.}
        \label{fig:heatmap_global}
    \end{subfigure}
    \hfill
    \begin{subfigure}[b]{0.48\textwidth}
        \centering
        \includegraphics[width=\linewidth]{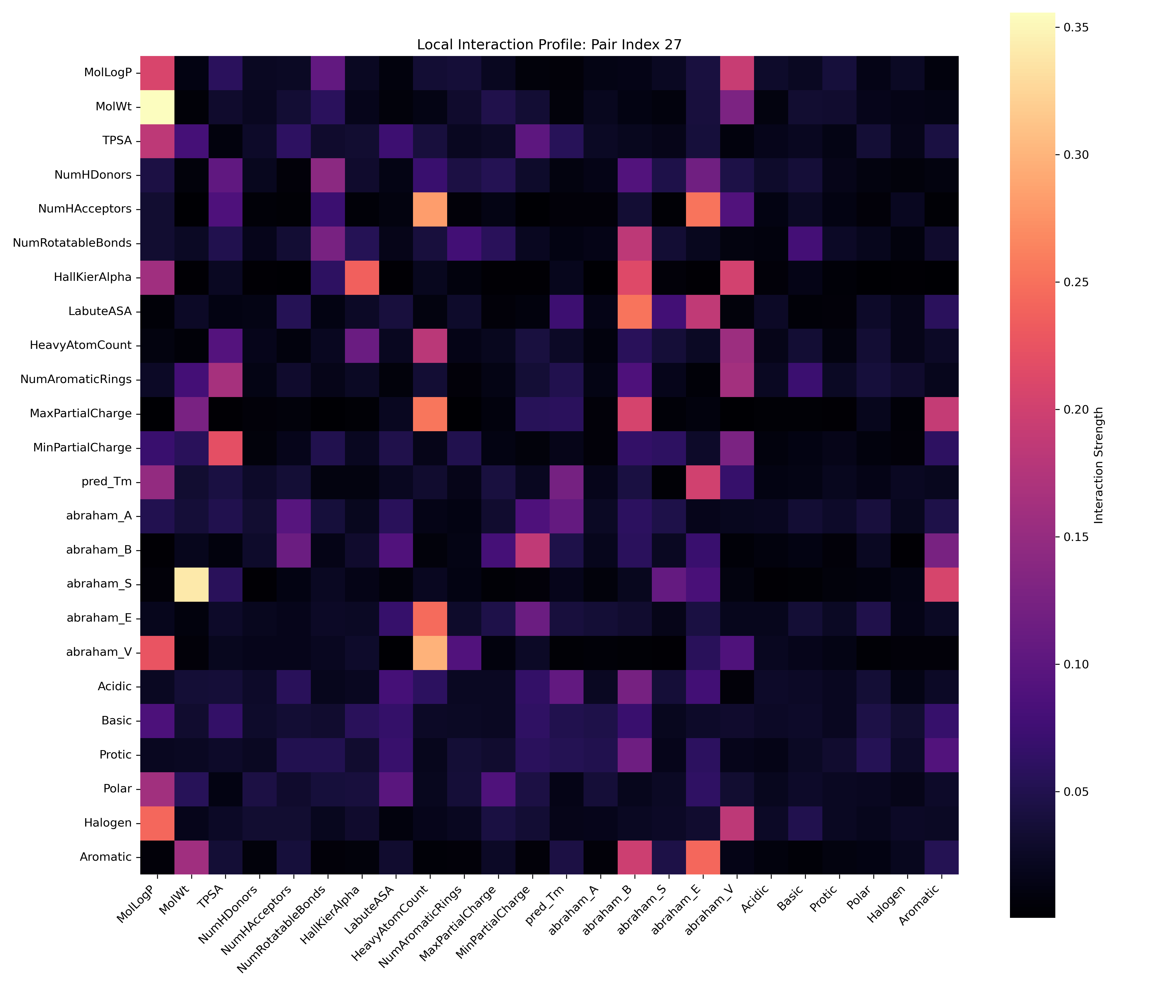} % Pair 32
        \caption{\textbf{Case Study 1 (H-Bond Driven):} Interaction profile for Solute-Solvent Pair 27. Note the intense activation (bright spots) indicating the important features for prediction.}
        \label{fig:heatmap_pair27}
    \end{subfigure}
    
    \vspace{0.5cm} % Spacing between rows

    % Bottom Row: Specific Case Studies
    \begin{subfigure}[b]{0.48\textwidth}
        \centering
        \includegraphics[width=\linewidth]{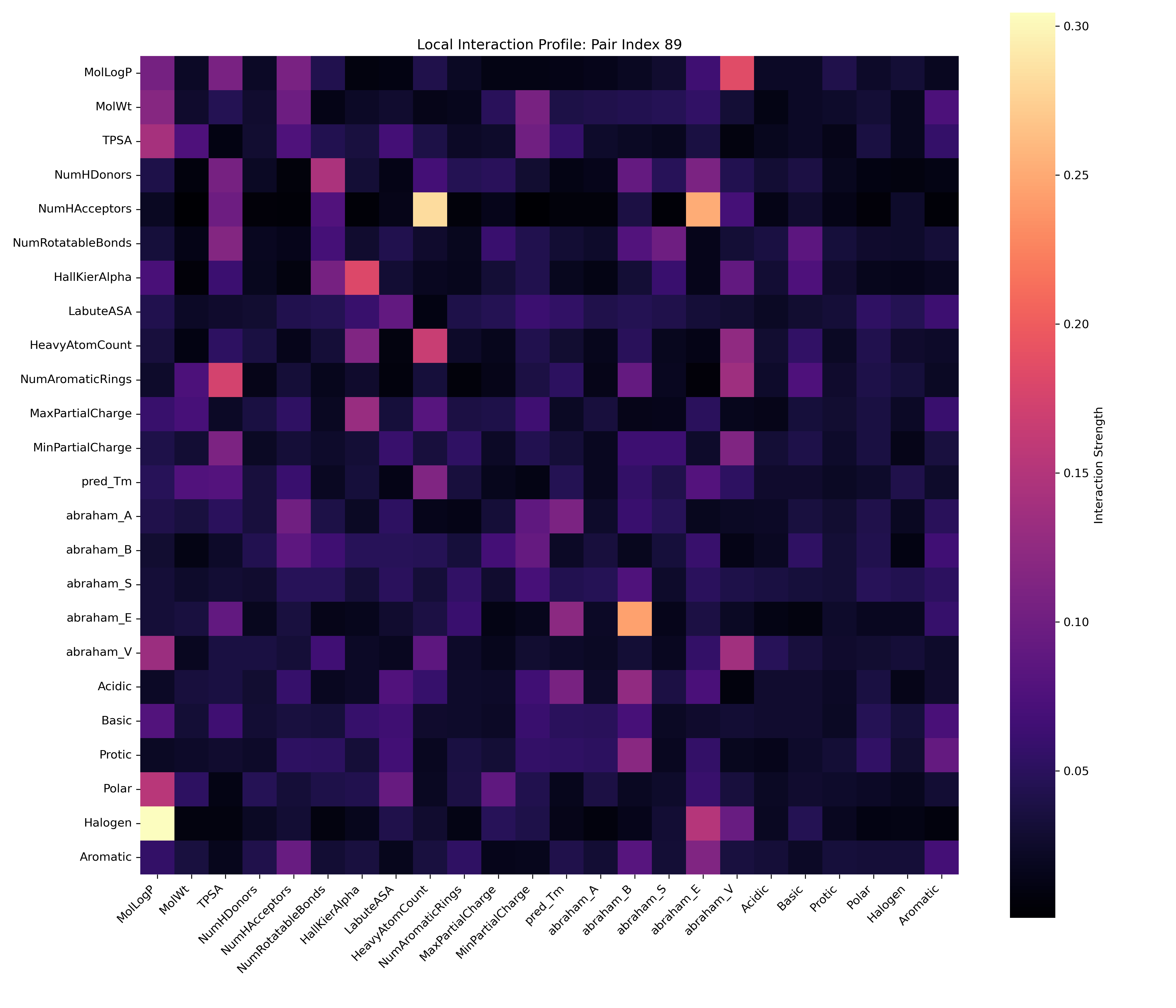}
        \caption{\textbf{Case Study 2 (Mixed Mode):} Interaction profile for Pair 89. A diffuse pattern indicates a complex dissolution process in which multiple features interact.}
        \label{fig:heatmap_pair89}
    \end{subfigure}
    \hfill
    \begin{subfigure}[b]{0.48\textwidth}
        \centering
        \includegraphics[width=\linewidth]{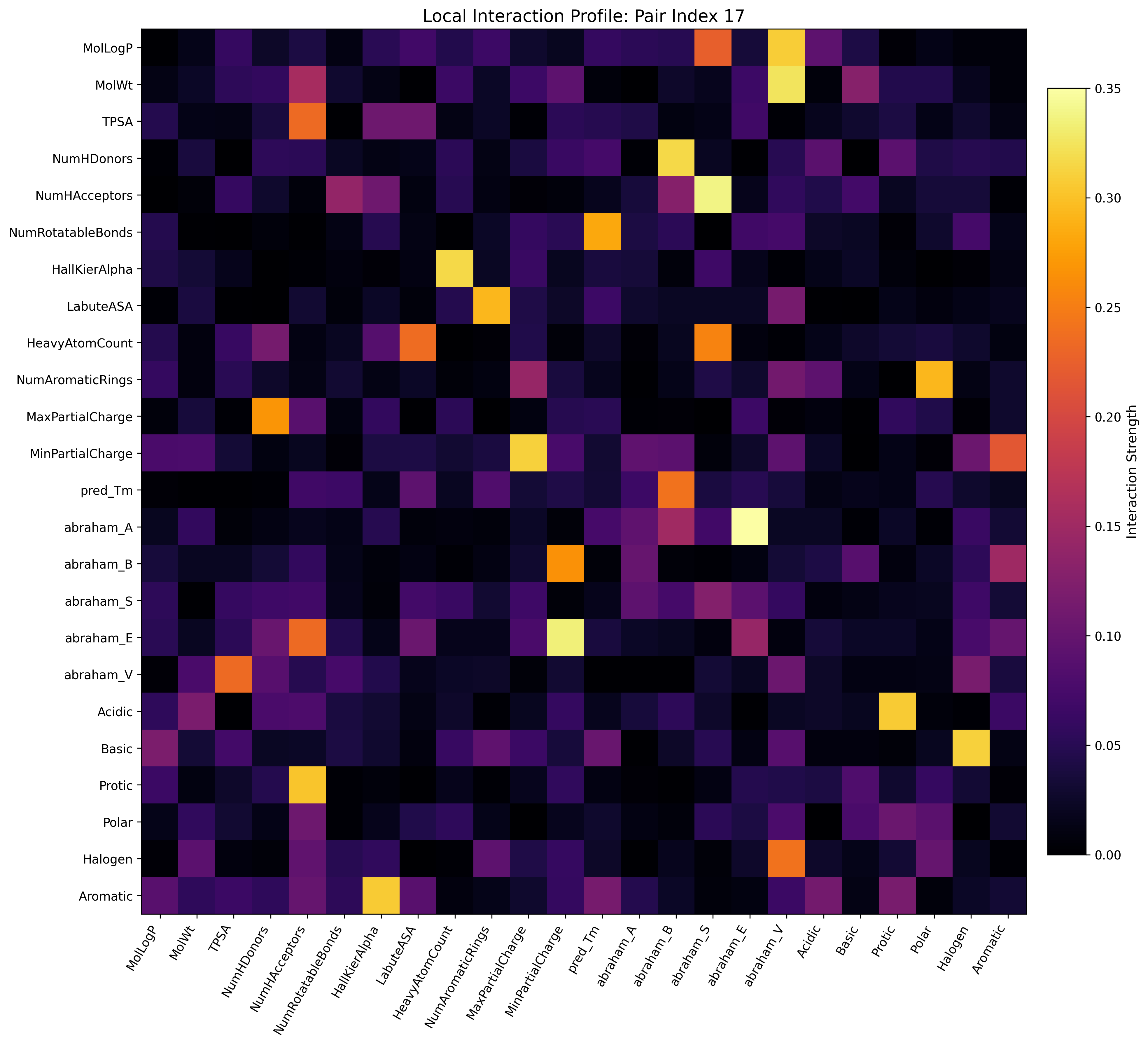}
        \caption{\textbf{Case Study 3 (Mixed Mode):} Interaction profile for Pair 17. A variety of solute features successfully query the solvent indicating important features for solubility.}
        \label{fig:heatmap_pair17}
    \end{subfigure}

    \caption{\textbf{Visualizing the ``Chemical Intuition'' of the Interaction Layer.} 
    Panel (a) demonstrates the model's grasp of global LSER (Linear Solvation Energy Relationships). Panels (b-d) reveal the dynamic nature of the mechanism, which adapts its focus based on the specific solute-solvent context.}
    \label{fig:heatmap}
\end{figure*}

\subsection{Per-Solvent Results on BigSolDB 1.0}
\label{app:persolvent}

We provide a granular breakdown of \name's performance across the top 10 solvents in the BigSolDB 1.0 test set. It is important to note that this evaluation follows the  \textbf{Solvent-Holdout Split} protocol: the model was trained on the top 19 most frequent solvents and evaluated on the remaining ``rare" solvents. This constitutes a zero-shot transfer task that tests the model's ability to generalize to new chemical environments.

The figures in \ref{fig:persolvent_parity} illustrate the correlation between experimental and predicted solubility ($\log S$) for the 10 most represented solvents in the holdout set. The dashed line represents the identity ($x=y$) line.

\begin{figure*}[ht]
    \centering
    % Row 1
    \begin{subfigure}[b]{0.19\textwidth}
        \includegraphics[width=\textwidth]{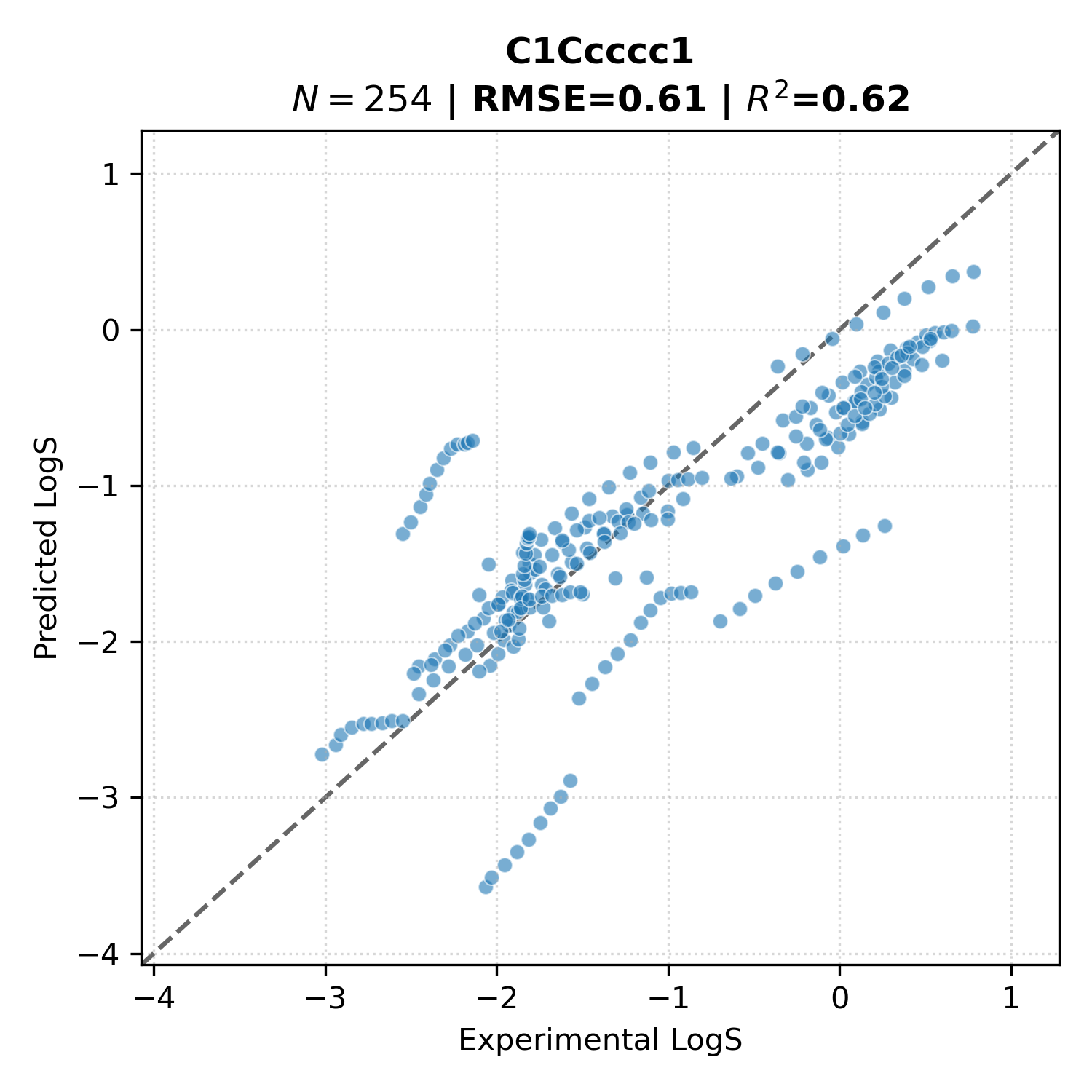}
        \caption{Benzene}
    \end{subfigure}
    \hfill
    \begin{subfigure}[b]{0.19\textwidth}
        \includegraphics[width=\textwidth]{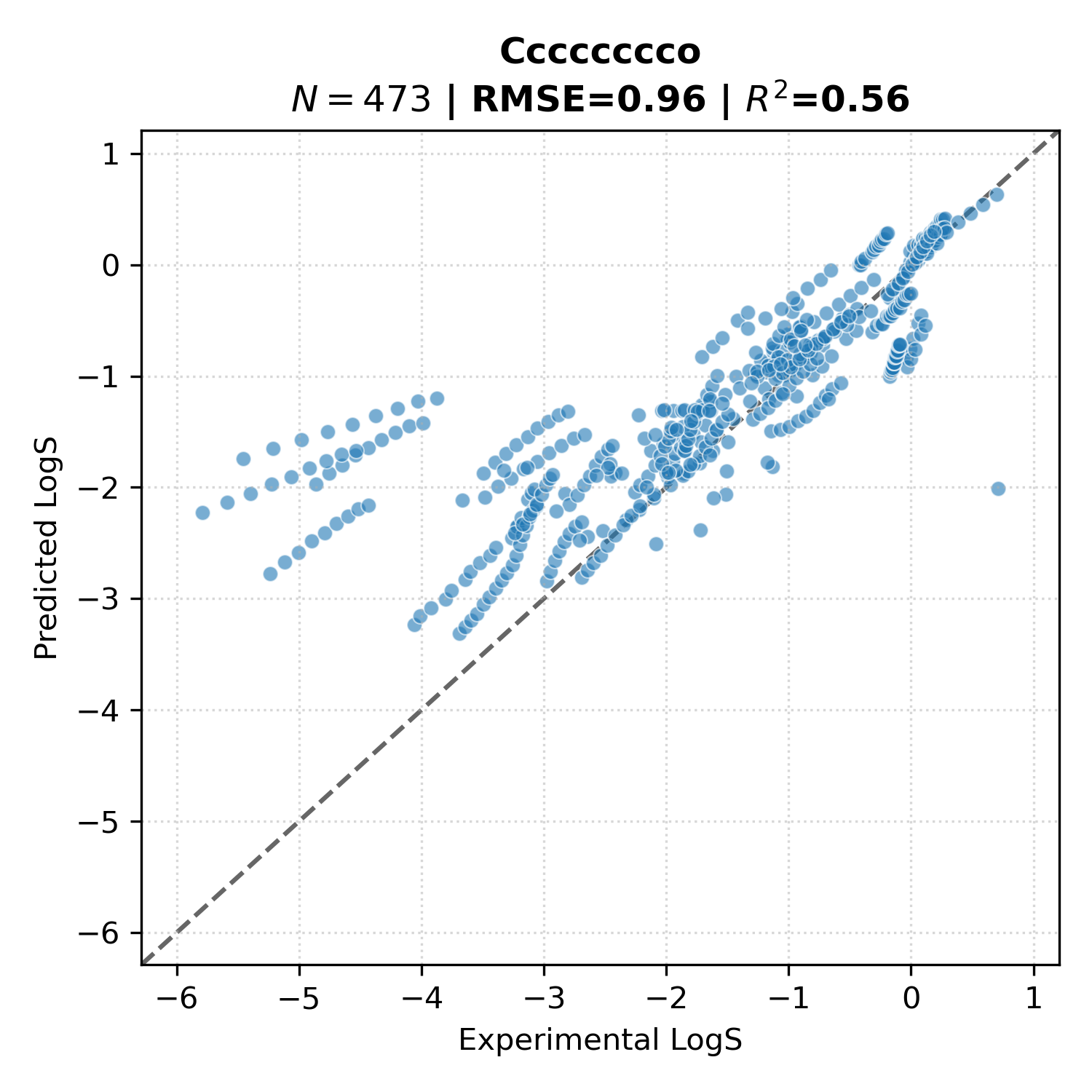}
        \caption{1-Octanol}
    \end{subfigure}
    \hfill
    \begin{subfigure}[b]{0.19\textwidth}
        \includegraphics[width=\textwidth]{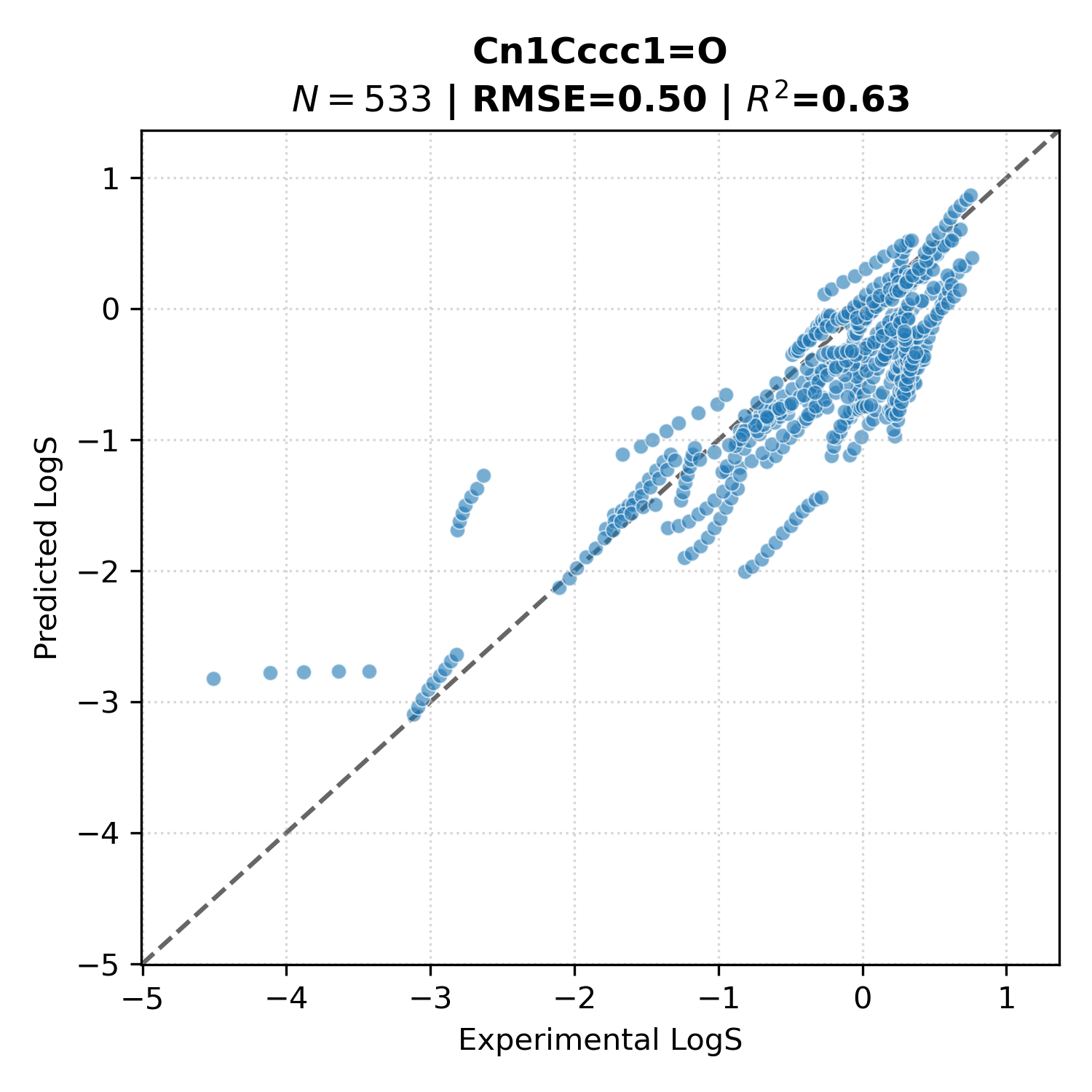}
        \caption{Cn1Cccc1=O}
    \end{subfigure}
    \hfill
    \begin{subfigure}[b]{0.19\textwidth}
        \includegraphics[width=\textwidth]{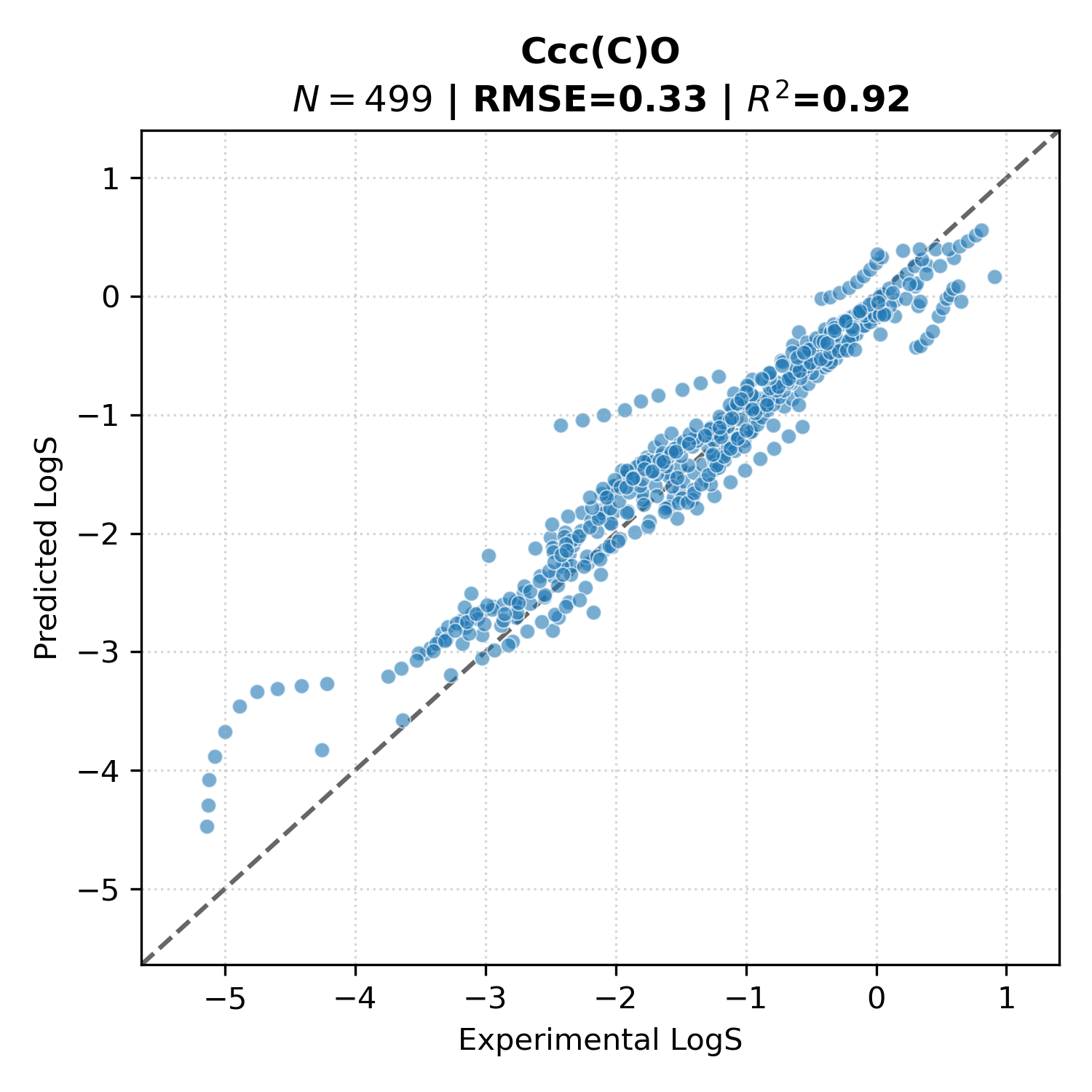}
        \caption{1-Butanol}
    \end{subfigure}
    \hfill
    \begin{subfigure}[b]{0.19\textwidth}
        \includegraphics[width=\textwidth]{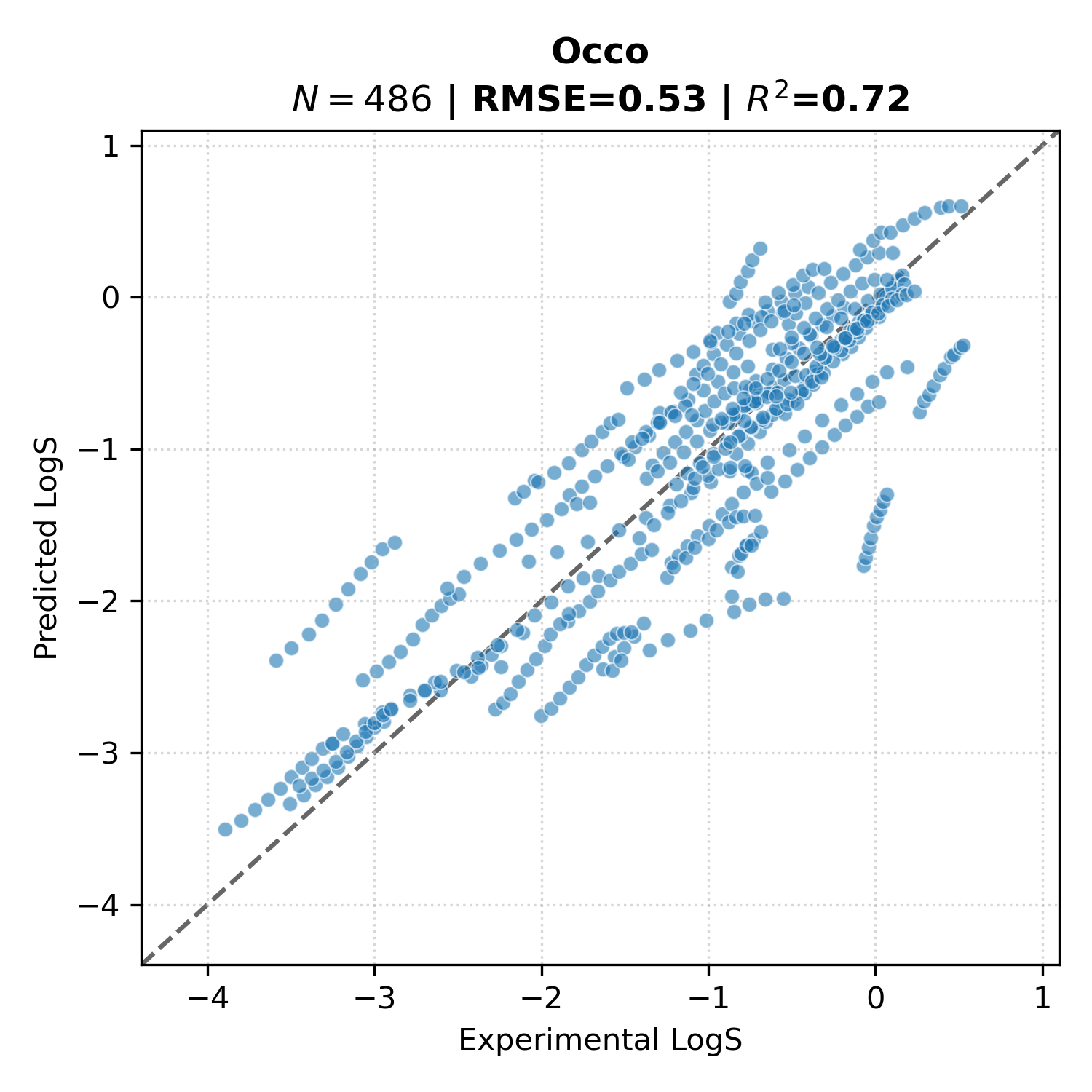}
        \caption{Ethylene Glycol}
    \end{subfigure}

    \vspace{1em}

    % Row 2
    \begin{subfigure}[b]{0.19\textwidth}
        \includegraphics[width=\textwidth]{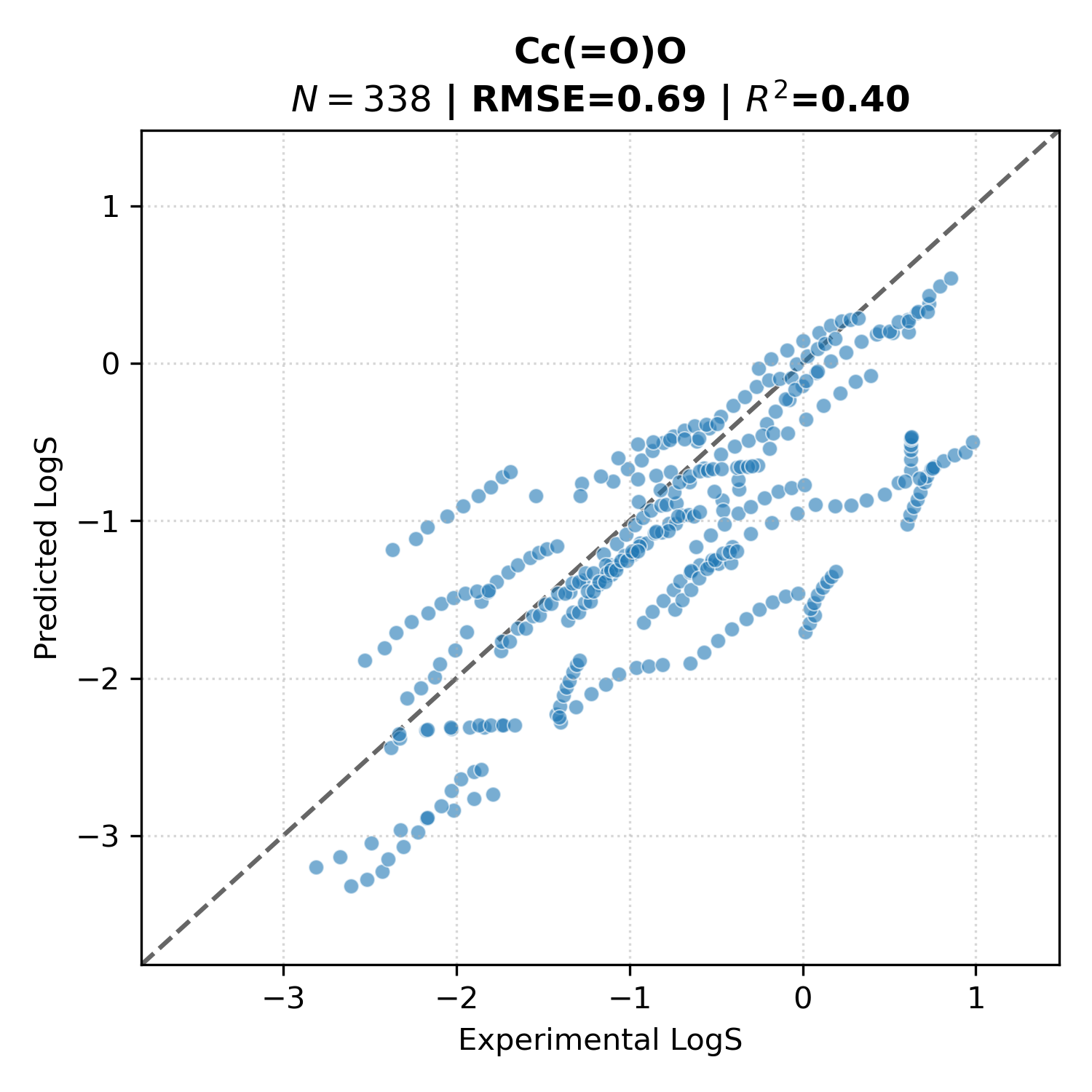}
        \caption{Acetic Acid}
    \end{subfigure}
    \hfill
    \begin{subfigure}[b]{0.19\textwidth}
        \includegraphics[width=\textwidth]{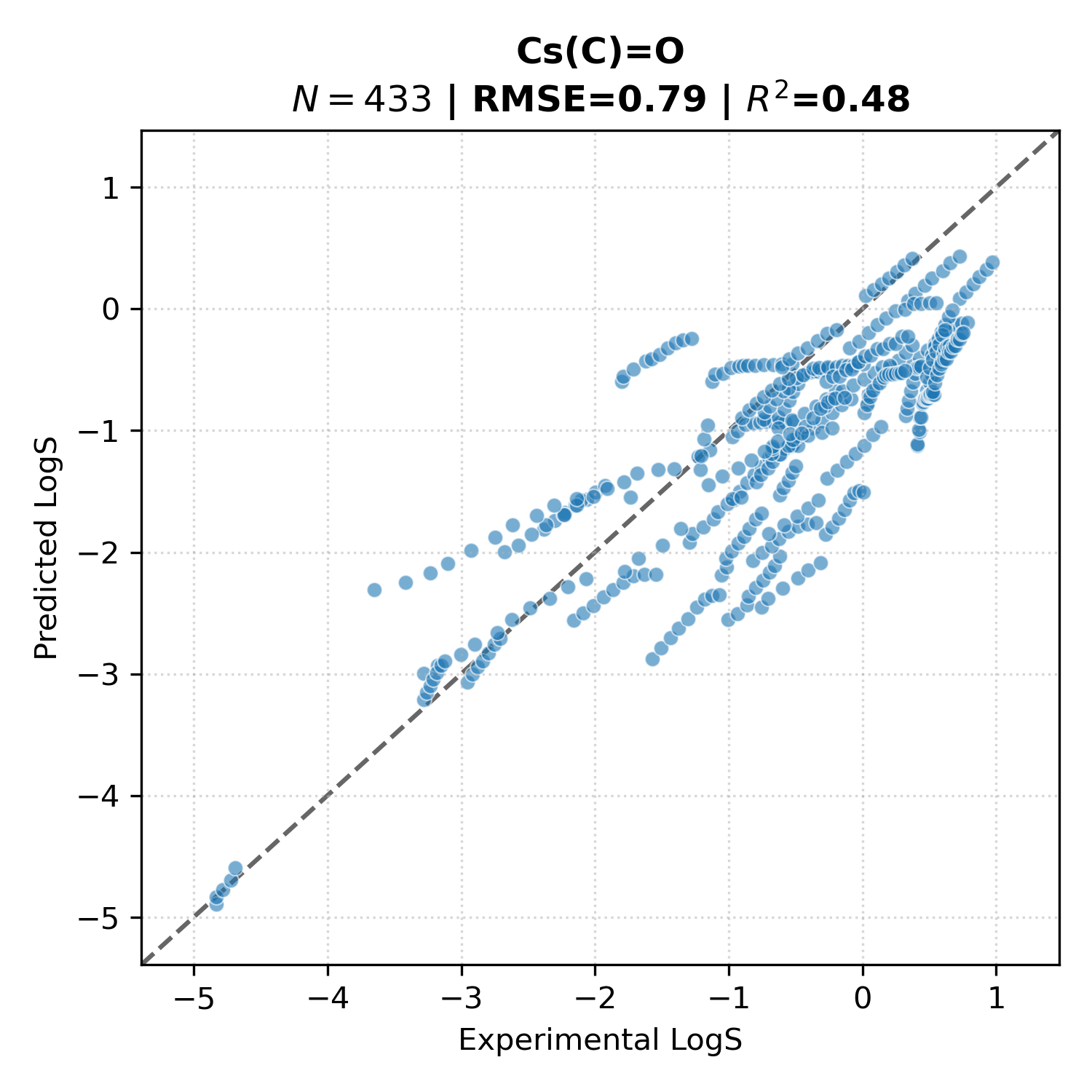}
        \caption{DMSO}
    \end{subfigure}
    \hfill
    \begin{subfigure}[b]{0.19\textwidth}
        \includegraphics[width=\textwidth]{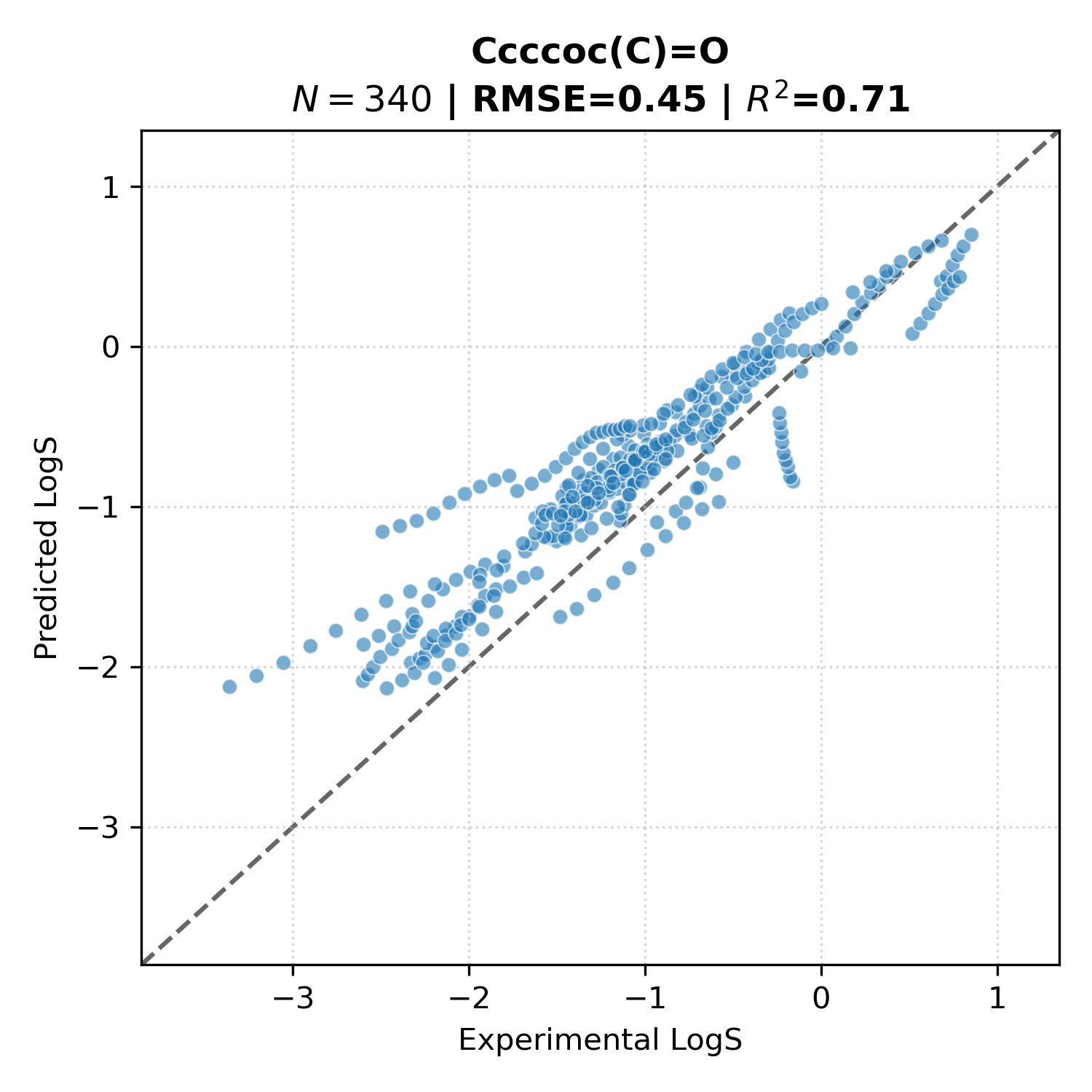}
        \caption{Butyl Acetate}
    \end{subfigure}
    \hfill
    \begin{subfigure}[b]{0.19\textwidth}
        \includegraphics[width=\textwidth]{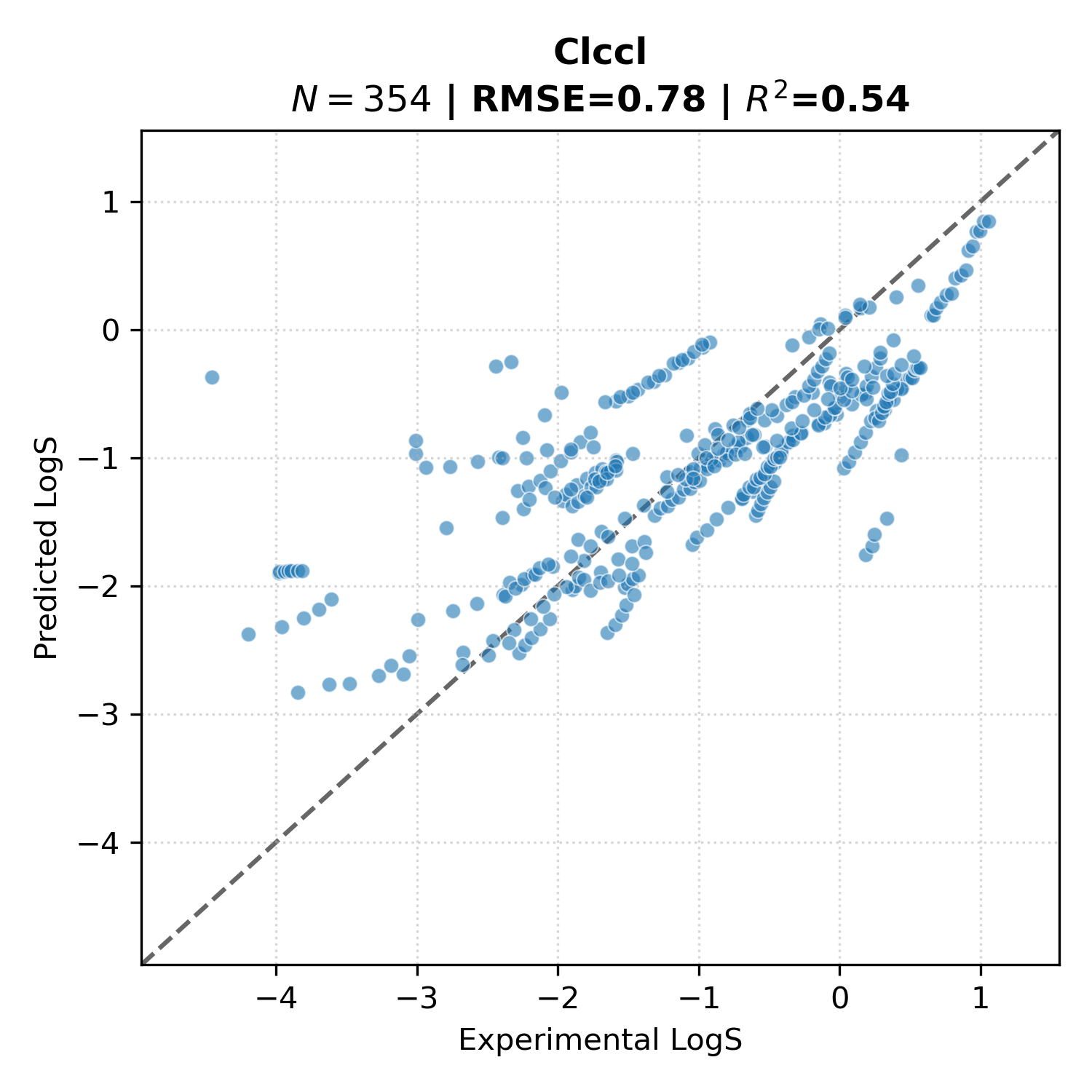}
        \caption{Dichloromethane}
    \end{subfigure}
    \hfill
    \begin{subfigure}[b]{0.19\textwidth}
        \includegraphics[width=\textwidth]{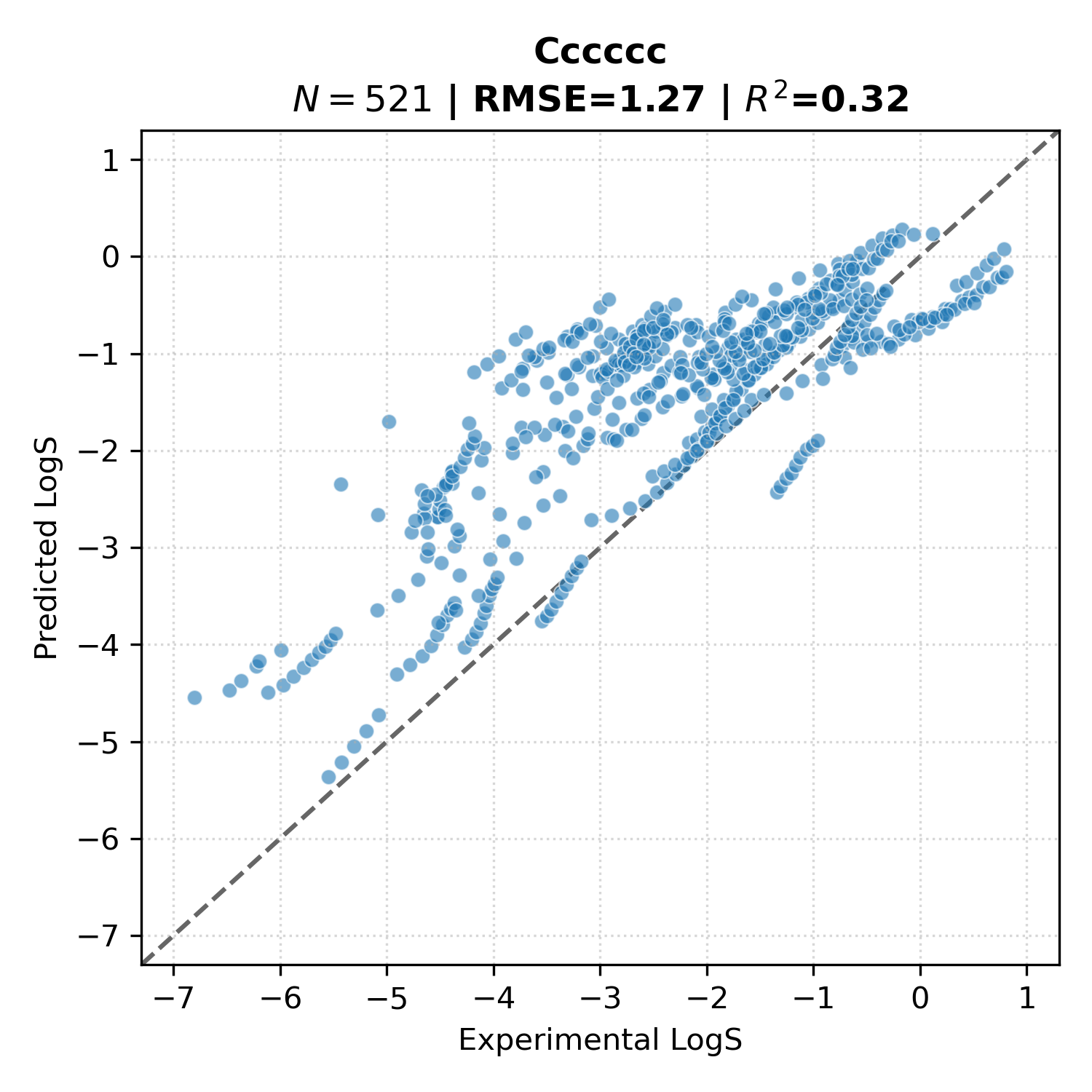}
        \caption{n-Hexane}
    \end{subfigure}

    \caption{\textbf{Per-Solvent Parity Plots on BigSolDB 1.0.} These plots demonstrate the model's predictive accuracy across diverse solvent classes, ranging from non-polar alkanes to highly polar protic media. Despite being trained on only the top 19 solvents, the model maintains a tight correlation on these holdout environments.}
    \label{fig:persolvent_parity}
\end{figure*}

As evidenced by the parity plots, \name achieves consistent performance ($RMSE \approx 0.64$ to $0.90$) across structurally distinct solvents such as Benzene ($R^2=0.58$) and 1-Butanol ($R^2=0.92$).
We also tested the Apelblat curve fitted to the experimental data against the predicted values, we found a high pearson correlation of 0.9886 (train) / 0.9844 (test) and Cosine similarities of 0.9398 (train) / 0.8201 (test) (illustrated in Figure \ref{fig:apelblat}).
\begin{figure*}[ht]
 
    \centering
    \includegraphics[width=0.9\textwidth]{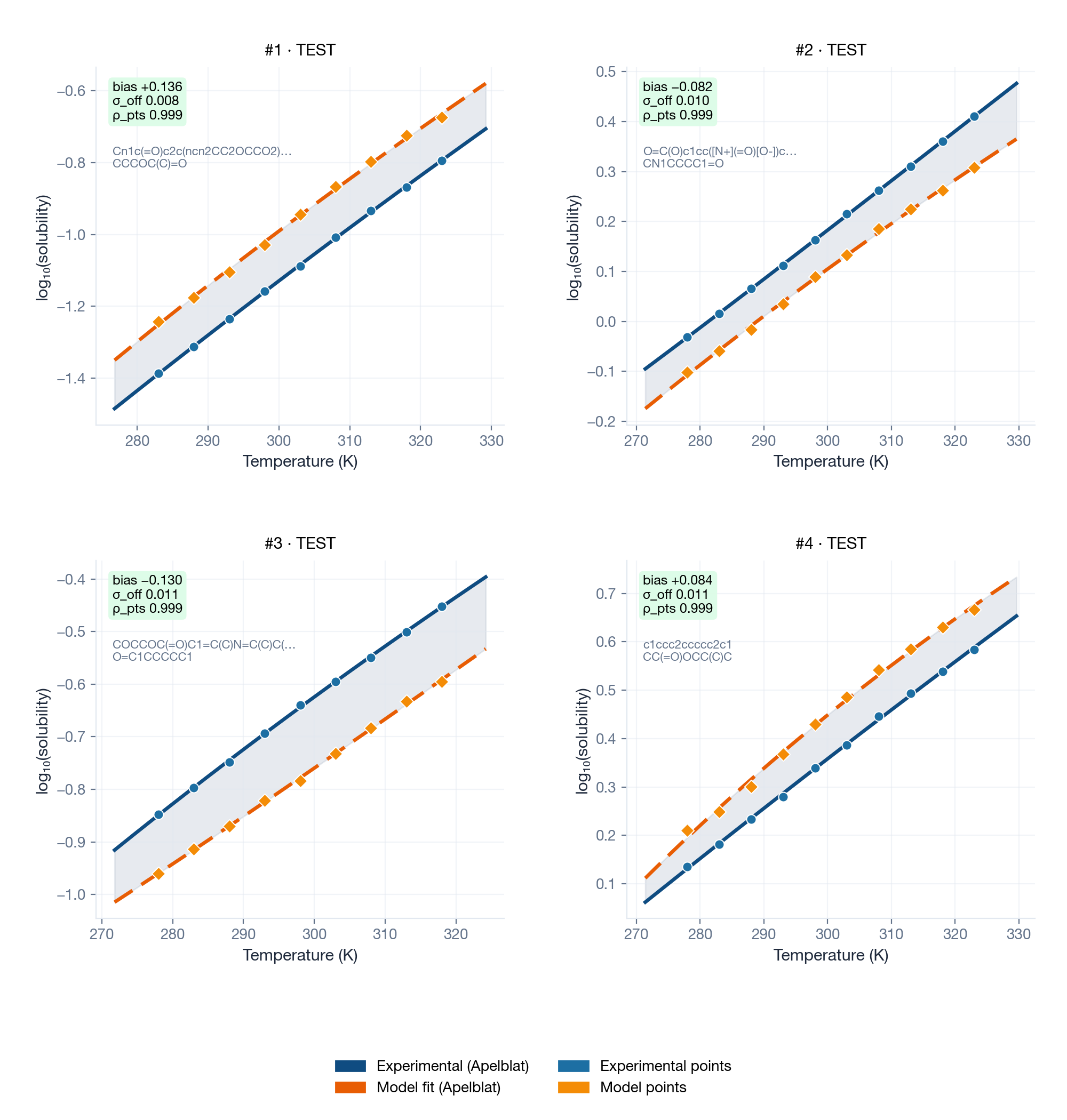}
    \caption{This figure shows the high Pearson correlation between the fitted apelblat curve to the experimental data against the predicted data}
    \label{fig:apelblat}
    \vspace{-0.2in}
\end{figure*}
\

% =====================================================
% Appendix H: Merged Empirical Results & Ablations
% =====================================================
\section{Additional Empirical Results and Ablation Studies}
\label{app:empirical_results}

This section provides extended empirical validation for \name, including a granular decomposition of learned feature importances, stress-tests of the LLM-explainer's faithfulness, and specific architectural ablations justifying our thermodynamic constraints and interaction layer.

% --- Old App H & App P Combined ---
\subsection{Mechanistic Feature Importance Analysis}
\label{app:features}

We performed a SHAP (SHapley Additive exPlanations) analysis on a random subset of the test set predictions of size 500 on the BigSolDB 1.0 trained model for Regime II. This analysis decomposes the model's decision-making process, allowing us to map statistical feature importance directly to the four physicochemical pillars defined in our hypothesis.

\paragraph{Unified Mechanistic Validation (Multi-Solvent).}
Figure \ref{fig:shap_combined} presents the complete decomposition of the model's learned physics, illustrating both the macroscopic thermodynamic drivers and the microscopic fine-tuning.

\begin{figure*}[ht]
    \centering
    % --- Top Row: Global Drivers ---
    \begin{subfigure}[b]{0.48\textwidth}
        \centering
        \includegraphics[width=\textwidth]{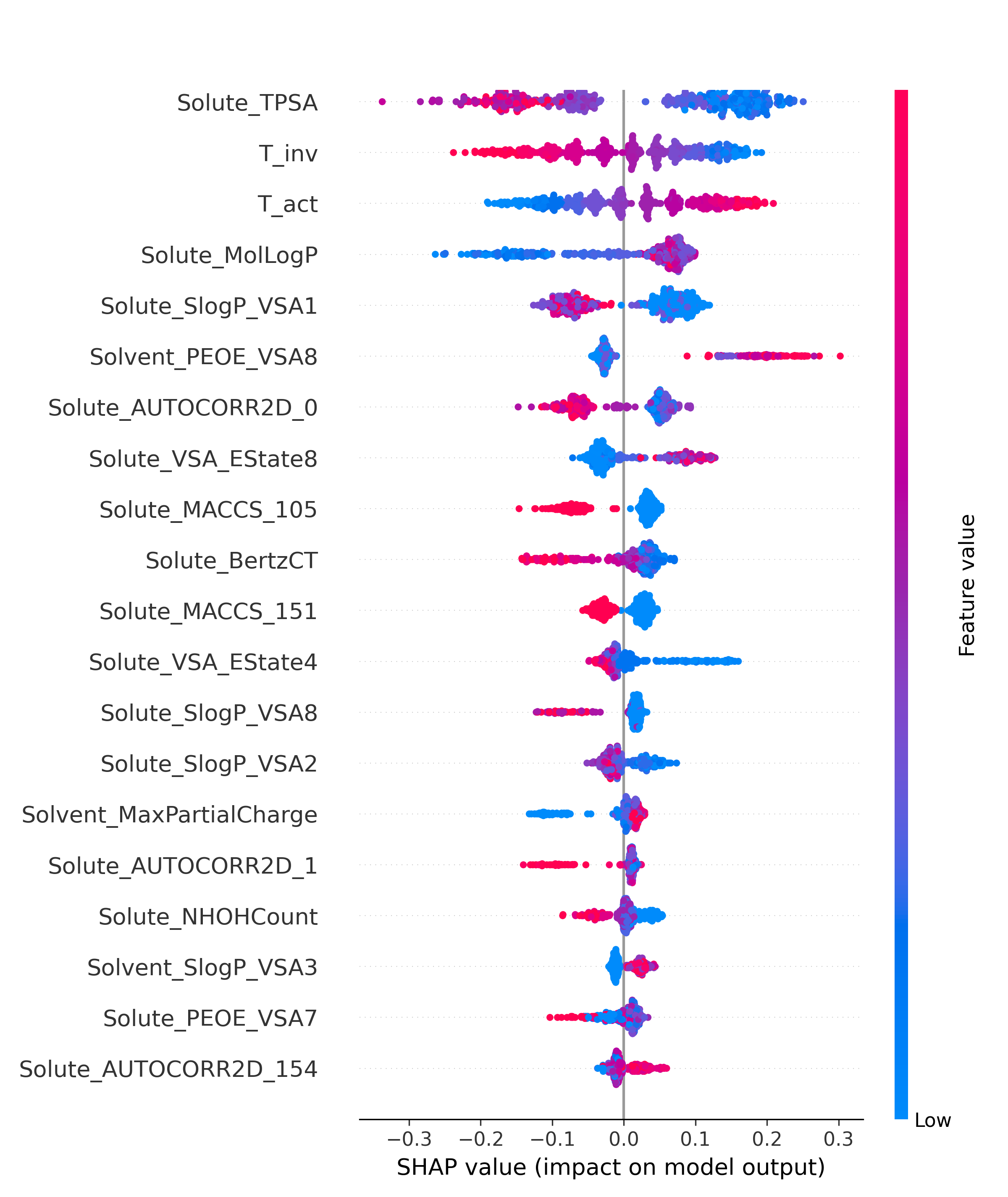}
        \caption{\textbf{Thermodynamic Drivers.} \texttt{Solute\_TPSA} and \texttt{T\_inv} show clear monotonic impact, confirming rediscovery of lattice energy and Van't Hoff laws.}
        \label{fig:shap_beeswarm}
    \end{subfigure}
    \hfill
    \begin{subfigure}[b]{0.48\textwidth}
        \centering
        \includegraphics[width=\textwidth]{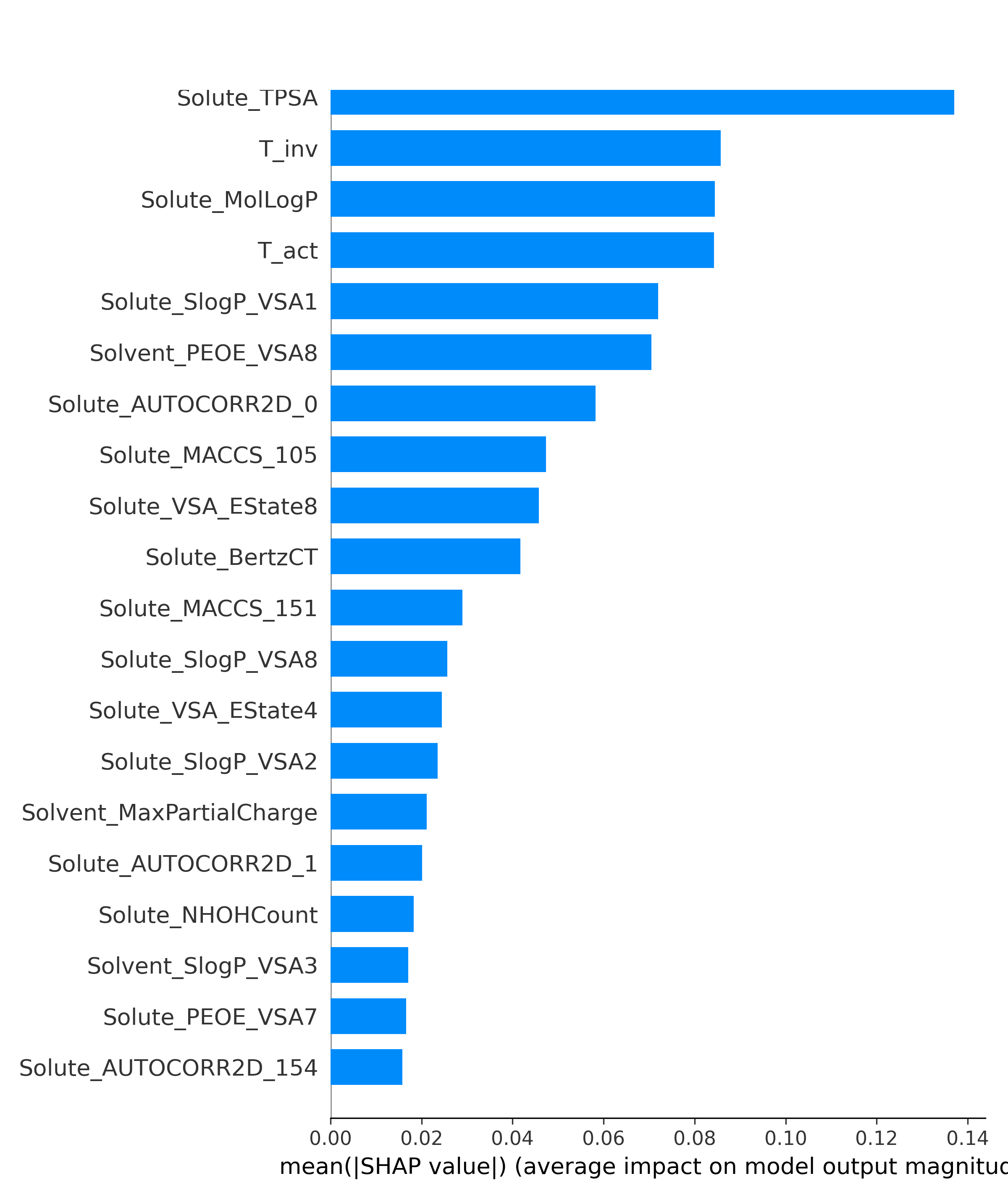}
        \caption{\textbf{Hierarchical Importance.} Feature contribution drops sharply after the top thermodynamic drivers, establishing the macroscopic baseline.}
        \label{fig:shap_bar}
    \end{subfigure}
    
    \vspace{1em} % Spacing between rows
    
    % --- Bottom Row: Long Tail ---
    \begin{subfigure}[b]{0.85\textwidth}
        \centering
        \includegraphics[width=\textwidth]{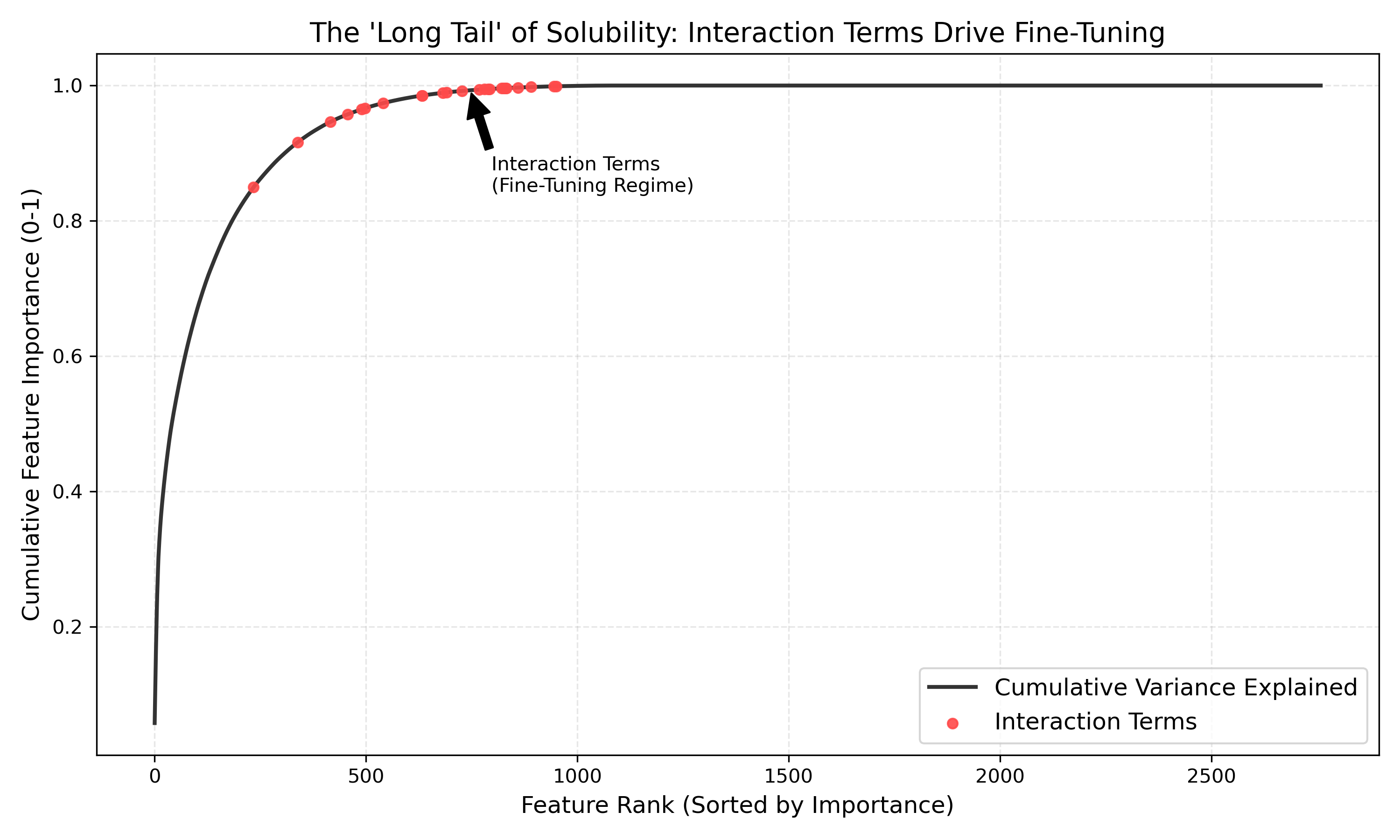}
        \caption{\textbf{Fine-Tuning via Interactions.} While global descriptors (black line) explain the variance baseline, the Interaction Terms (red dots) cluster in the ``shoulder'' (Ranks 200--1000), providing the fine-tuning required to reach the aleatoric limit.}
        \label{fig:shap_cumulative}
    \end{subfigure}
    
    \caption{\textbf{Mechanistic Validation of \name.} (a-b) The model prioritizes Tier 1 global descriptors that map to thermodynamic laws. (c) It utilizes Tier 2 interaction terms to resolve specific solute-solvent compatibilities that global descriptors cannot capture.}
    \label{fig:shap_combined}
\end{figure*}

\textbf{Tier 1: Global Thermodynamic Drivers.}
As shown in Figure \ref{fig:shap_beeswarm} and \ref{fig:shap_bar}, the top-ranked features map one-to-one with our hypothesized requirements:
\begin{itemize}
    \item \textbf{Phase-Change Energetics (Pillar 4):} The dominance of \texttt{T\_inv} ($1/T$) and \texttt{T\_act} ($T$) confirms the model effectively learned the Van't Hoff equation ($ \ln S \propto 1/T $).
    \item \textbf{Topological Context (Pillar 2):} \texttt{Solute\_TPSA} (Topological Polar Surface Area) appears as the single most important feature. Physically, high TPSA correlates with strong intermolecular hydrogen bonding in the solid state, serving as a robust proxy for Crystal Lattice Energy ($\Delta G_{lattice}$). MACCS keys features also appear as highly-ranked by the model, along with shape-based descriptors like Autocorrelation.
    \item \textbf{Latent Physicochemical State (Pillar 3):} \texttt{Solute\_MolLogP} (Rank 3) captures the global hydrophobicity/lipophilicity balance, essential for determining the ``Like Dissolves Like'' baseline in organic solvents.
    \item \textbf{Moietal Identity (Pillar 1):} Descriptors such as \texttt{Solute\_SlogP\_VSA1} and \texttt{Solvent\_PEOE\_VSA8}  encode the electronic contributions of specific functional fragments, and explicit functional group counts also appear in Hierarchical importance (Figure \ref{fig:shap_bar}).
\end{itemize}

\textbf{Tier 2: Fine-Tuning via Interaction Terms.}
While global descriptors set the baseline solubility regime, Figure \ref{fig:shap_cumulative} reveals the critical role of the Interaction Transformer. The learned interaction terms (highlighted in red) do not dominate the global variance but are crucial for correcting predictions based on specific solute-solvent coupling (e.g., dipole-dipole alignment), effectively closing the gap between the baseline approximation and the experimental aleatoric limit.

\paragraph{Top-Feature SHAP Importances (Aqueous ESOL).}
For a comparative baseline in a single-solvent aqueous regime, Table~\ref{tab:shap_top8} reports the eight features with the largest mean absolute SHAP values on the ESOL test set. The ranking recapitulates classical solubility chemistry: lipophilicity (\texttt{MolLogP}) and partial-charge surface area (\texttt{PEOE\_VSA6}) dominate, followed by hydrogen-bond/polarisability terms (\texttt{Abraham E}, \texttt{partial charge}) and structural complexity descriptors (\texttt{BertzCT}, \texttt{MolWt}, \texttt{MoSE\_path5}, \texttt{NOCount}). All eight remain among the 24 distilled tokens of $\Psi$ (Table~\ref{tab:council}), confirming that the refined feature set retains the model's most informative dimensions.

\begin{table}[H]
\centering
\caption{Top-8 mean absolute SHAP values on ESOL.}
\label{tab:shap_top8}
{%
\begin{tabular}{llc}
\toprule
\textbf{Feature} & \textbf{Category} & \textbf{Mean$|$SHAP$|$} \\
\midrule
\texttt{MolLogP}                  & Physicochemical & $0.677$ \\
\texttt{PEOE\_VSA6}               & Physicochemical & $0.104$ \\
\texttt{Abraham E} (polarisability) & Energetic       & $0.093$ \\
\texttt{PartialCharge}            & Physicochemical & $0.077$ \\
\texttt{BertzCT} (complexity)     & Topological     & $0.069$ \\
\texttt{MolWt}                    & Compositional   & $0.060$ \\
\texttt{MoSE\_path5}              & Topological     & $0.045$ \\
\texttt{NOCount}                  & Compositional   & $0.044$ \\
\bottomrule
\end{tabular}}
\end{table}

% --- Old App L ---
\subsection{LLM-Explainer Faithfulness: Trained vs.\ Untrained Ablation}
\label{app:llm_alignment}

To validate the faithfulness of our post-hoc explanation pipeline, we conduct a quantitative ablation comparing narratives generated from a fully trained \name model against those from a randomly-initialised baseline (Table~\ref{tab:trained_vs_untrained}). This ablation serves as a rigorous stress test to ensure the LLM is not simply acting as a salesman, that is, relying on its internal parametric priors, to hallucinate plausible chemical narratives, but is instead strictly conditioning its output on the provided model evidence.

By randomising the underlying predictor, we drastically alter the symbolic artifacts (i.e., SHAP values, decision paths, and the interaction matrix) while holding the prompt template constant. If the LLM were merely falling back on generic chemical knowledge, the explanations for the trained and untrained models would remain highly similar. Instead, the remarkably low agreement scores across all measured dimensions demonstrate that the generated narrative is highly sensitive, and strictly faithful, to the specific numerical evidence supplied by the model. The exact prompts and per-sample raw scores are released alongside the source code in our \code{GitHub Repo}.

\begin{table}[H]
\centering
\caption{Agreement between explanations generated for a trained and randomly-initialised \name on identical prompts ($n=30$). Lower is better: low agreement implies that the LLM relies on the specific structural evidence of the trained model rather than its own parametric priors.}
\label{tab:trained_vs_untrained}
{%
\begin{tabular}{lc}
\toprule
\textbf{Dimension} & \textbf{Trained vs.\ Untrained}~$(n{=}30)$ \\
\midrule
Conclusion Agreement     & $1.37 \pm 0.96$ \\
Reasoning Alignment      & $1.53 \pm 0.57$ \\
Causal Attribution       & $1.90 \pm 0.76$ \\
Completeness Overlap     & $2.53 \pm 0.68$ \\
Overall Agreement        & $1.63 \pm 0.76$ \\
\midrule
\textbf{Mean of 4 dimensions} & $\mathbf{1.83 \pm 0.42}$ \\
\bottomrule
\end{tabular}}
\end{table}

% --- Old App M ---
\subsection{Hard Monotonicity Ablation}
\label{app:monotonicity}

Table~\ref{tab:monotonicity_ablation} reports the full constrained-vs.-unconstrained sweep across five random seeds. The constraint marginally reduces \emph{mean} RMSE on every dataset, but its dominant effect is on \emph{stability}: seed-to-seed variance collapses by $44$--$72\%$. This matters operationally because a chemist receives one prediction per query, not an average over seeds.

\begin{table}[H]
\centering
\caption{\textbf{Effect of hard monotonic temperature constraints} (mean $\pm$ std over 5 seeds). Variance reduction listed in parentheses.}
\label{tab:monotonicity_ablation}
{%
\begin{tabular}{lccc}
\toprule
\textbf{Dataset} & \textbf{Constrained} & \textbf{Unconstrained} & $\Delta$RMSE \\
\midrule
BigSolDB~1.0 & $0.6783 \pm 0.0023$ & $0.6805 \pm 0.0062$ ($-63\%$) & $+0.0022$ \\
BigSolDB~2.0 & $0.6305 \pm 0.0013$ & $0.6335 \pm 0.0023$ ($-44\%$) & $+0.0030$ \\
Leeds (OOD)  & $0.8798 \pm 0.0012$ & $0.8846 \pm 0.0043$ ($-72\%$) & $+0.0048$ \\
\bottomrule
\end{tabular}}
\end{table}

\paragraph{Stratified analysis.}
We further partition the test set by the consistency of the training data available for each solute. For the $18.3\%$ of test samples whose solutes possess multi-source training data with \emph{explicitly conflicting} temperature--solubility trends, the constraint produces a $\Delta\text{RMSE}\!=\!+0.0205$ ($\sim\!10\!\times$ the global average); on the remaining ``clean'' samples it remains effectively dormant ($\Delta\text{RMSE}\!=\!+0.0018$). Decision trees are piecewise-constant and cannot extrapolate: when high-temperature samples are sparse and contradictory, an unconstrained ensemble reverts to the nearest leaf and produces flat or thermodynamically inverted predictions. Hard monotonicity blocks this failure mode \emph{within every split} at negligible cost (Lemma~\ref{lemma:computational_efficiency}).

\paragraph{Comparison to soft Sobolev penalties.}
FastSolv \citep{attia2025data} enforces temperature consistency through a Sobolev penalty: a soft regulariser that requires three passes per training step (forward, backward with graph retention, and a second backward through the retained graph for the Hessian-vector product $\partial^2\hat{y}/\partial\theta\partial T$), roughly tripling per-step compute and doubling GPU memory. Furthermore, because the target derivatives for this penalty must be approximated via finite differences, FastSolv imposes a strict dataset curation bottleneck: its training procedure inherently requires experimental measurements at multiple temperatures for the exact same solute-solvent pair. \name's constraints (Lemma~\ref{lemma:computational_efficiency}), in contrast, are imposed purely at split time. They add no gradient computation, require no multi-temperature paired data, and provide a hard guarantee that soft Sobolev penalties structurally cannot.

\paragraph{Failure mode of the unconstrained ensemble.}
Because CatBoost trees are piecewise-constant, in regions of the input space where high-temperature training points are sparse and mutually inconsistent the unconstrained ensemble defaults to the nearest leaf of the most recently fit base learner. The resulting prediction surface is either flat in $T$ (the leaf is shared across the local temperature range) or, when conflicting leaves are averaged across boosting rounds, thermodynamically inverted -- predicted solubility decreasing with increasing temperature for endothermic dissolution. The hard monotonic constraints described in Lemma~\ref{lemma:computational_efficiency} eliminate this regime \emph{within every leaf} at no asymptotic cost, which is what the $10\!\times$ stratified RMSE gap on the conflicting subset reflects.

% --- Old App N ---
\subsection{Endothermic Assumption Audit}
\label{app:endothermic_audit}

The monotonic constraint we applied assumes a standard endothermic-dissolution regime. In order to quantify how often it is correct, we audited temperature-solubility curves under two criteria: (i) any non-monotonic dip, and (ii) strictly decreasing solubility with temperature. Table~\ref{tab:endothermic_audit} shows that strict exothermic behavior is rare in the organic solute--solvent benchmarks used here; this supports the limitation in Section~\ref{sec:limitations} that \name is appropriate for organic small molecules but not for inorganic salts or exothermic systems.

\begin{table}[H]
\centering
\caption{Frequency of endothermic-assumption violations in the evaluated datasets.}
\label{tab:endothermic_audit}
{%
\begin{tabular}{lcc}
\toprule
\textbf{Dataset} & \textbf{Any violation} & \textbf{Strictly decreasing} \\
\midrule
BigSolDB~1.0 & $3.21\%$ & $0.45\%$ \\
BigSolDB~2.0 & $2.67\%$ & $0.51\%$ \\
Leeds        & $0.00\%$ & $0.00\%$ \\
\bottomrule
\end{tabular}}
\end{table}

Under the stricter criterion, fewer than $0.6\%$ of pairs exhibit genuinely decreasing solubility with temperature, and these cases may also include inter-laboratory measurement noise. We therefore keep the hard monotonic constraint for the target domain, while explicitly excluding salts and inorganic systems as shown in Section ~\ref{sec:limitations}.

% --- Old App O ---
\subsection{Direct-Concatenation vs.\ Interaction Block}
\label{app:concat_ablation}

A natural baseline to our cross-attention Interaction Block is to simply concatenate the full solute and solvent descriptors and let CatBoost handle the cross-feature interactions internally. We ran this experiment on the OOD Leeds Test with all other hyperparameters held fixed.

\begin{table}[H]
\centering
\caption{Cross-attention Interaction Block vs.\ direct concatenation of full feature vectors (Leeds Test).}
\label{tab:concat}
{%
\begin{tabular}{lcc}
\toprule
\textbf{Metric} & \textbf{Interaction Block} & \textbf{Direct Concat.} \\
\midrule
Raw feature dim.   & $380$    & $1{,}124$ \\
Pruned feature dim.& $313$    & $1{,}057$ \\
Fit time (s)       & $191.9$  & $1714.6$ \\
Predict time (s)   & $0.039$  & $0.084$ \\
RMSE               & $0.8823$ & $0.8870$ \\
$R^2$              & $0.3489$ & $0.3419$ \\
\bottomrule
\end{tabular}}
\end{table}

Direct concatenation expands the feature space by $\sim\!3\times$ and training time by $\sim\!9\times$, while \emph{degrading} OOD performance. The Interaction Block is therefore not merely a stylistic choice: by collapsing the $D\!\times\!D$ cross-product into a $24\!\times\!24$ structured matrix, it both sharpens generalisation and keeps inference compatible with consumer-grade CPUs.

\section{Case Study: Bezafibrate in 3-Methylbutan-1-ol at 323\,K}
\label{case_study}

\subsection{Abstract}
We trace a single \name prediction end-to-end:
bezafibrate (a PPAR-$\alpha$ agonist used to treat hyperlipidemia
\citep{staels1998mechanism}) dissolved in 3-methylbutan-1-ol
(isoamyl alcohol) at $323$\,K. \name returns
$\log S = -1.0426$ (${\sim}0.091$\,mol/L, $\sim 33$\,mg/mL,
\emph{moderately soluble}), against an experimental label of
$-1.1366$ (absolute error $0.094$). The exported evidence is
dominated by the thermodynamic group ($+0.2922$, $46.6\%$ of the total
signal), led by temperature ($\phi=+0.1506$), its inverse
($\phi=+0.1427$) and solute lipophilicity (\texttt{MolLogP}~$=3.55$,
$\phi=+0.1254$), while solute-side penalties from topological
complexity (\texttt{BertzCT}~$=739.02$, $\phi=-0.1845$) and polar
surface area (\texttt{TPSA}~$=75.63$, $\phi=-0.0878$) partially offset
those gains. The dominant cross-attention rotation
(\texttt{NumRotatableBonds}$\,\leftrightarrow\,$\texttt{TPSA},
$A=0.5876$) couples conformational flexibility to the solute's polar
surface area; Cross-attention weights of $0.5876$ for
\texttt{NumRotatableBonds}\,$\rightarrow\,$\texttt{TPSA} and $0.2611$
for \texttt{Acidic}\,$\rightarrow\,$\texttt{abraham\_S} anchor the
compatibility narrative. Literature lacks a direct measurement of bezafibrate in
3-methylbutan-1-ol, but isopropanol and $n$-propanol proxies place the
expected solubility in the same $10^{-2}$--$10^{-1}$ mole-fraction
band at $283$--$323$\,K, consistent with the prediction. Tree-path variability (std $71.5$ over $3{,}000$ trees) and the exported
\texttt{pred\_Tm} ($300.65$\,K vs.\ experimental $\sim 459$\,K) flag
residual uncertainty; the Stage~5 verdict was \texttt{supported} and
Stage~5b revision was not required.

\subsection{Introduction}
Bezafibrate
(\texttt{2-[4-[2-[(4-chlorobenzoyl)amino]ethyl]phenoxy]-2-methyl%
propanoic acid})
is a BCS Class~II drug with very low aqueous solubility
($\sim$1.55\,$\mu$g/mL at $298$\,K \citep{sun2020electrosprayed}),
making its dissolution in organic solvents central to crystallisation,
nanosuspension and lipid-based delivery workflows. Solubility in a
branched primary alcohol at elevated temperature is a stringent test for
a predictor: it requires correctly trading lattice/complexity penalties
against thermal activation and H-bond stabilisation. \rvansh{We use this single
sample as the worked example for the LLM explainer
(App.~\ref{app:walkthrough_example}).}

\subsection{Molecular Properties}
Bezafibrate (SMILES:
\texttt{CC(C)(Oc1ccc(CCNC(=O)c2ccc(Cl)cc2)cc1)C(=O)O}) couples a
4-chlorobenzamide head to a $2$-methylphenoxy-$2$-methylpropanoic-acid
tail through an ethylene linker. Key properties:
\begin{itemize}
\item \textbf{MolWt} $361.8$\,g/mol; \textbf{LogP} $3.46$--$3.97$
      (model \texttt{Solute\_MolLogP} $=3.5545$; XLogP3 $=3.8$).
\item \textbf{TPSA} $75.6$\,\AA$^2$ (carboxyl, amide, ether donors/
      acceptors).
\item \textbf{Melting point} $457$--$459$\,K (high lattice energy).
\item \textbf{Rotatable bonds} $7$; \textbf{BertzCT} $739.02$;
      \textbf{Cactvs complexity} $452$.
\item \textbf{pK$_a$} $\sim 3.6$ (partial ionisation at neutral pH).
\end{itemize}
3-Methylbutan-1-ol is a branched primary alcohol with HSP profile
$(\delta_D{\approx}15$--$16,\, \delta_P{\approx}5,\,
\delta_H$ from OH$)$\,MPa$^{1/2}$ and LogP $\approx 1.16$; the
explainer describes it as a relatively small, branched alcohol
(\texttt{Solvent\_TPSA}~$=20.23$), whose hydroxyl group can stabilise
bezafibrate's acid/amide head while its alkyl branching reduces packing
efficiency relative to linear alcohols.

\subsection{Solubility Prediction and Model Explanation}
\name predicts $\log S=-1.0426$ at $323$\,K (experimental label
$-1.1366$; absolute error $0.094$). The explainer ingests the Stage~1
molecule descriptions verbatim (App.~\ref{app:prompts_stage1}): bezafibrate
as two \emph{para}-substituted rings linked by an amidoethyl bridge with a
chlorophenyl head and branched isobutyric-acid ether tail; the solvent as a
small branched primary alcohol (isoamyl skeleton, \texttt{Solvent\_TPSA}
$=20.23$). The exported evidence is dominated by thermodynamic and
solute-side terms:
\begin{itemize}
\item \textbf{Group contributions:} Thermo $+0.2922$ ($46.6\%$ of total
signal); Solute $-0.2736$; Solvent $-0.0594$; Interaction $+0.0024$.
No unusual feature values were flagged.
\item \textbf{Positive per-feature contributors:} temperature
($\phi=+0.1506$), inverse temperature ($\phi=+0.1427$), and solute
lipophilicity (\texttt{Solute\_MolLogP}~$=3.5545$, $\phi=+0.1254$),
followed by smaller positive terms from \texttt{Solute\_PEOE\_VSA9}
($+0.0366$), \texttt{Solute\_abraham\_V} ($+0.0279$), and
\texttt{Solute\_fr\_Al\_COO} ($+0.0208$).
\item \textbf{Negative per-feature contributors:} topological complexity
(\texttt{Solute\_BertzCT}~$=739.02$, $\phi=-0.1845$),
\texttt{Solute\_TPSA}~$=75.63$ ($\phi=-0.0878$), and smaller penalties
from \texttt{Solute\_BalabanJ} ($-0.0664$),
\texttt{Solvent\_PEOE\_VSA8} ($-0.0505$), and
\texttt{Solute\_NumHDonors} ($-0.0219$).
\item \textbf{Cross-attention (Stage~2 input):} the top rotation is
\texttt{NumRotatableBonds}$\,\rightarrow\,$\texttt{TPSA} ($A=0.5876$),
followed by \texttt{MaxPartialCharge}$\,\rightarrow\,$%
\texttt{HeavyAtomCount} ($0.2852$), \texttt{Acidic}$\,\rightarrow\,$%
\texttt{abraham\_S} ($0.2611$), \texttt{abraham\_E}$\,\rightarrow\,$%
\texttt{abraham\_B} ($0.2574$), and \texttt{abraham\_B}$\,\rightarrow\,$%
\texttt{abraham\_S} ($0.2458$).
\item \textbf{Stage~5 audit (\texttt{supported}):} the validator returned
\texttt{needs\_revision = false} with no unsupported claims and no generic
phrasings flagged for removal; Stage~5b revision was skipped and the Stage~4
integrated explanation passed directly to condensation.
\item \textbf{Stage~6 condensation (verbatim output):} four dense
paragraphs reporting moderate $\log S=-1.0426$ at $323.00$\,K; thermodynamic
($+0.2922$) vs.\ solute-side ($-0.2736$) competition; dominant attention on
solute flexibility vs.\ solvent polar surface area ($0.5876$) and solute
acidity vs.\ dipolarity ($0.2611$); a \emph{solute-limited} dissolution
regime; and moderate confidence (mean leaf $165.9$, std $71.5$ over
$3{,}000$ trees) with conflicting lipophilicity vs.\ complexity/polarity
signals within the solute (full text in App.~\ref{app:prompts_stage6}).
\end{itemize}

\subsection{Literature Validation}
No direct measurement of bezafibrate in 3-methylbutan-1-ol exists, but
proxy alcohols bracket the prediction:
\begin{itemize}
\item Aqueous solubility is very low ($1.55\,\mu$g/mL at $298$\,K;
$34.3$\,mg/L at $310$\,K).
\item In neat ethanol $\sim 3$\,mg/mL at room temperature; in
methanol / ethanol / $n$-propanol / isopropanol the mole-fraction
solubility lies in $10^{-2}$--$10^{-1}$ at $283$--$323$\,K and
increases with $T$ via van~'t~Hoff behaviour
\citep{liu2020solubility}. Isopropanol -- the closest branched proxy --
yields values comparable to $n$-propanol, supporting
$\sim 10$--$100$\,mg/mL at $323$\,K.
\item Predictive PLS/MLR baselines that combine $T_m$, LogP and TPSA
report RMSE $\sim 0.5$--$1$\,log units on lipid solvents; cocrystals,
inclusion complexes (HP-$\beta$-CD, L-proline) and PVA nanosuspensions
\citep{zhu2024novel} report $1$--$10\times$ enhancement, consistent with
moderate intrinsic solubility in branched alcohols.
\end{itemize}
Against the held-out label ($\log S=-1.1366$), the prediction error of
$0.094$\,log units falls inside the moderate-solubility band implied by
these proxies.

\subsection{Discussion}
The prediction is qualitatively consistent with proxy data: the
thermodynamic group ($+0.2922$, $46.6\%$ of total signal) and solute
lipophilicity ($+0.1254$) push $\log S$ upward, while solute-side
penalties from \texttt{BertzCT} ($-0.1845$) and \texttt{TPSA}
($-0.0878$) partially offset those gains. Three caveats follow directly
from the explainer trace. First, the exported \texttt{pred\_Tm}
($300.65$\,K) sits $\sim 158$\,K below the experimental melting point
($\sim 459$\,K), so the combined positive thermal SHAP from \texttt{T}
($+0.1506$) and \texttt{T\_inv} ($+0.1427$) should be read as the
model's internal temperature attribution rather than as independent
thermodynamic validation. Second, with Gemini~3.1~Pro the validator
returned \texttt{supported} without triggering Stage~5b, meaning the
Stage~4 narrative---including cross-attention couplings read as
solute-flexibility vs.\ solvent-polarity ($0.5876$) and solute-acidity
vs.\ dipolarity ($0.2611$)---passed unchanged into condensation. Third,
tree-path variability (std $71.5$ over $3{,}000$ trees, mean leaf
$165.9$) shows that the moderate-solubility verdict is reached through
diverse split sequences rather than a single canonical rule. All three
observations are surfaced by the model's own evidence channels---SHAP
attributions, group aggregates, cross-attention weights, tree
statistics, and the Stage~5/6 confidence blocks
(App.~\ref{app:walkthrough_example}).

\subsection{Conclusion}
The case validates \name's moderate-solubility prediction for
bezafibrate in 3-methylbutan-1-ol at $323$\,K ($\log S=-1.0426$ vs.\
label $-1.1366$; absolute error $0.094$) against literature proxies, and
demonstrates that the explainer pipeline (i) identifies the dominant
thermodynamic and solute-side drivers with explicit SHAP and group
aggregates, (ii) validates the Stage~4 narrative in a single pass
(\texttt{supported}; Stage~5b skipped), and (iii) condenses it into a
four-paragraph Stage~6 summary that flags \emph{solute-limited}
dissolution and mixed tree-path confidence. \rvansh{The same machinery
and verbatim prompt I/O (Gemini~3.1~Pro) are reproduced in
App.~\ref{app:walkthrough_example}.}

\section{LLM Explanation Pipeline: Technical Specification}
\label{appendix:llm_explainer}

This appendix provides complete technical details of the multi-stage LLM explanation pipeline, including mathematical formulations, full prompt templates, and implementation specifications.

\subsection{Symbolic Extraction: Mathematical Formulation}

\subsubsection{SHAP Feature Attribution}

For each test prediction $\hat{\Sigma}_i$, we compute Shapley values for all 380 features ($\phi_j(i)$ for the $j_{th}$ feature, using the \texttt{get\_feature\_importance} function of the CatBoost model, which calculates SHAP feature attribution using the method discussed in \citet{lundberg2019explainableaitreeslocal}
This quantifies each feature's marginal contribution to the deviation from the baseline prediction:

\begin{equation}
    \hat{\Sigma}_i = \mathbb{E}[\hat{\Sigma}] + \sum_{j=1}^{380} \phi_j(i)
    \label{eq:shap_decomp}
\end{equation}

We aggregate attributions into semantic groups: (1) Solute Descriptors, (2) Solvent Descriptors, (3) Interaction Terms, (4) Thermodynamic State, to identify which class of physics dominates each prediction.

\subsubsection{Interaction layer Coefficients}

From the Interaction Layer (Section \ref{transformer}), we extract the interaction matrix from each head $\mathbf{A} \in \mathbb{R}^{24 \times 24}$:

\begin{equation}
    A_{ij} = \text{softmax}\left(\frac{\mathbf{Q}_i \mathbf{K}_j^\top}{\sqrt{d_k}}\right)
    \label{eq:attention}
\end{equation}

We identify the top-$k$ highest-magnitude interactions:

\begin{equation}
    \mathcal{I}_{\text{top}} = \text{argsort}_{(i,j)} \left( |A_{ij}| \right)[:k]
    \label{eq:top_interactions}
\end{equation}

These reveal which specific solute-solvent feature pairs drove the interaction prediction (e.g., ``Solute H-Bond Donors attend strongly to Solvent Carbonyl Acceptors'').

\subsubsection{Decision Path Signature}

CatBoost exposes the leaf index traversed by each sample in every tree. For sample $i$, we extract:

\begin{equation}
    \mathbf{L}_i = [\ell_1^{(i)}, \ell_2^{(i)}, \ldots, \ell_T^{(i)}] \in \mathbb{Z}^T
    \label{eq:decision_path}
\end{equation}

where $\ell_t^{(i)}$ is the terminal leaf in tree $t$. This acts as a fingerprint of decision logic: samples with similar leaf paths have traversed similar feature splits and share analogous chemical reasoning.

\subsection{Multi-Stage LLM Reasoning Protocol}

\subsubsection{Architecture Overview}

The explanation pipeline employs a \textbf{progressive refinement strategy} with six stages:

\begin{enumerate}
    \item \textbf{Stage 1 (Molecular Characterization)}: Generate unbiased structural descriptions from SMILES
    \item \textbf{Stage 2 (Evidence Synthesis)}: Analyze SHAP values, identify patterns, detect anomalies
    \item \textbf{Stage 3 (Decision Logic Analysis)}: Interpret decision splits and coupling effects from the Interaction Layer
    \item \textbf{Stage 4 (Mechanistic Integration)}: Synthesize all evidence into grounded explanation
    \item \textbf{Stage 5 (Validation)}: Cross-references the integrated explanation against the model evidence supplied in earlier stages, flags unsupported claims, and rewrites the narrative using only supported evidence.
    \item \textbf{Stage 6 (Condensation)}: Condenses the detailed solubility explanation produced above into a concise, information-dense summary.
\end{enumerate}

\rvansh{\textbf{Critical Design Principle}: The LLM is \textbf{blinded to prediction quality} (true $\log S$ values and absolute errors) until \textbf{Stage~3}. Stages~1--2 receive only structural inputs and SHAP evidence; the predicted $\log S$ is first revealed at Stage~3 alongside the decision-path and interaction-matrix evidence. This staged disclosure reduces confirmation bias: the explainer must first summarize the available model evidence before it can rationalize the final numerical prediction.}

\rvansh{\textbf{SMILES handling}: Stage~1 uses SMILES only to elicit a
coarse molecular description; it is not treated as the sole structural
authority. Stages~2--5 then ground the explanation in deterministic
RDKit descriptors, SHAP attributions, decision paths, and interaction
matrices. Thus a possible SMILES-parsing error in Stage~1 can be
overridden by model evidence in later stages, and the Stage~1 component
can be swapped for a chemistry-specialized parser without changing the
rest of the pipeline.}

\rvansh{Full system and user-prompt templates for all six stages, together with a worked end-to-end walkthrough, are provided in App.~\ref{app:prompts}.}

\subsection{Enhanced SHAP Analysis}

The evidence synthesis stage performs pattern recognition on SHAP values

\paragraph{Anomaly Detection:}
For each feature $f_j$ with value $x_j$, flag as unusual if:

\begin{equation}
    x_j \notin [\mu_j - 2\sigma_j, \mu_j + 2\sigma_j]
\end{equation}

where $\mu_j$ and $\sigma_j$ are computed from the training distribution.

\subsection{Sample Selection Protocol}

To validate the explainer across the full accuracy spectrum, we select samples using a stratified error-based approach:

\begin{itemize}
    \item \textbf{Good Predictions}: $|y_{\text{true}} - y_{\text{pred}}| < 0.30$ (8 samples with lowest error)
    \item \textbf{Bad Predictions}: $|y_{\text{true}} - y_{\text{pred}}| > 1.00$ (7 samples with highest error)
\end{itemize}

This ensures the explainer is evaluated on both successful and failed predictions, testing whether it remains faithful to model reasoning even when the model is wrong.

% \paragraph{Participants.} Twenty PhD-level computational chemists.

% \paragraph{Stimuli.} 15 solute-solvent pairs:
% \begin{itemize}
%     \item Low-error predictions: $|$error$| < 0.3$ LogS ($n=15$)
%     \item High-error predictions: $|$error$| > 1.0$ LogS ($n=15$)
% \end{itemize}

% \paragraph{Presentation Format.}
% \begin{enumerate}
%     \item Molecular structures (SMILES $+$ 2D diagrams)
%     \item Predicted LogS value and temperature
%     \item LLM-generated explanation (experts blinded to methodology and error magnitude)
% \end{enumerate}

% \paragraph{Rating Criteria.} 5-point Likert scale:
% \begin{itemize}
%     \item \textbf{Chemical Accuracy} (1=Inaccurate; 5=Highly Accurate): ``Does the explanation invoke correct physical principles?''
%     \item \textbf{Actionability} (1=Useless; 5=Highly Useful): ``Does it suggest concrete molecular modifications to improve solubility?''
%     \item \textbf{Clarity} (1=Incomprehensible; 5=Very Clear): ``Is it understandable without external references?''
% \end{itemize}

% \paragraph{Results.}
% \begin{itemize}
%     \item Inter-rater agreement (Fleiss' $\kappa$) $= 0.78$ (substantial)
%     \item Chemical Accuracy: $4.2 \pm 0.6$ (84\% rated $\geq 4$)
%     \item Actionability: $3.9 \pm 0.7$ (78\% rated $\geq 4$)
%     \item Clarity: $4.5 \pm 0.5$ (90\% rated $\geq 4$)
% \end{itemize}

\subsection{Implementation Details}

\paragraph{API Configuration:}
\begin{itemize}
    \item \rvansh{Model: \texttt{gemini-3.1-pro-preview} (Google Generative Language API).}
    \item \rvansh{Temperature schedule: $0.2$ (Molecule Description), $0.3$ (Evidence analysis \& Decision analysis), $0.4$ (Integration), $0.1$ (Validation), $0.2$ (Revision, if needed), $0.2$ (Condensation).}
    \item Max tokens: 8192 per stage
    \item Rate limiting: 1-second delay between calls
    \item Key rotation: Automatic failover across multiple API keys on quota exhaustion
\end{itemize}

\paragraph{Computational Cost:}
\begin{itemize}
    \item \rvansh{Processing time: $\sim$24 seconds per sample (8 sequential LLM calls).}
    \item \rvansh{Throughput: $\sim$150 molecules/hour on commodity hardware.}
    \item Scalability: parallelisable across samples; supports batch processing.
\end{itemize}

\section{Full LLM Prompt Templates and Worked Walkthrough}
\label{app:prompts}

\rvansh{This appendix provides the prompt templates for every
stage of the six-stage explainer (Sec.~\ref{sec:llm_explainer},
App.~\ref{appendix:llm_explainer}), the exact categories of evidence
supplied at each stage, and an end-to-end evidence-audited walkthrough
on the bezafibrate / 3-methylbutan-1-ol pair at $323$\,K
(App.~\ref{case_study}). The prompts shown below are reproduced
\emph{byte-for-byte} from the production prompt bank
(\texttt{prompts.json}, eight keys:
\texttt{molecule\_description}, \texttt{evidence\_summary},
\texttt{decision\_analysis}, \texttt{integration},
\texttt{validation}, \texttt{revision}, \texttt{condensation},
\texttt{agreement}); the worked example reports the actual inputs
assembled by the explainer pipeline and the actual outputs returned
by \texttt{gemini-3.1-pro} for that sample (test row $715$ in the
held-out set: true $\log S = -1.1366$, predicted $\log S = -1.0426$,
absolute error $0.094$). All stages use \texttt{gemini-3.1-pro}
with the temperature schedule
$(0.2, 0.2, 0.3, 0.3, 0.4, 0.1, 0.2, 0.2)$ across
(solute description, solvent description, evidence analysis,
decision analysis, integration, validation, revision,
condensation) and an 8{,}192-token cap per call. Placeholders are
written as \texttt{\{name\}}; everything outside braces is sent
verbatim.}

\subsection{Stage 1 -- Molecular Characterisation}
\label{app:prompts_stage1}
\rvansh{%
\paragraph{Inputs supplied.} Solute SMILES and solvent SMILES only. No
descriptors, no SHAP values, no model output. The stage is invoked
twice -- once for the solute and once for the solvent -- so that the
downstream stages receive structural priors uncontaminated by
solubility expectations.

\paragraph{System prompt.}
\begin{quote}\ttfamily\small\noindent
You are a chemist describing molecular structures based on SMILES.

Focus on:\\
- Core structure (rings, chains, branching)\\
- Key functional groups and their positions\\
- Polarity, hydrogen bonding capacity, size

Be direct and specific. No solubility predictions.
\end{quote}

\paragraph{User prompt template.}
\begin{quote}\ttfamily\small\noindent
Describe the following molecule based ONLY on its SMILES structure.

SMILES: \{smiles\}

Cover structure, functional groups, and physicochemical character\\
in 3-4 sentences. Be specific about what makes this molecule\\
distinctive.
\end{quote}

\paragraph{Worked example -- solute call.}
\textbf{Input.} \texttt{smiles =\\
CC(C)(Oc1ccc(CCNC(=O)c2ccc(Cl)cc2)cc1)C(=O)O} (bezafibrate).

\textbf{Output.}
\begin{quote}\itshape\small
The molecule features two para-substituted benzene rings connected by an amidoethyl linker, terminating in a chlorophenyl group on one end and a branched isobutyric acid ether on the other. Key functional groups include a terminal ionizable carboxylic acid, a central secondary amide, an ether linkage, and an electron-withdrawing aryl chloride. This architecture creates a distinct amphiphilic character, pairing a highly polar, hydrogen-bonding headgroup with a bulky, lipophilic halogenated aromatic tail. The combination of rigid aromatic rings and flexible aliphatic linkers allows this moderately large molecule to adopt extended conformations while participating in multiple hydrogen bond donor and acceptor interactions.
\end{quote}

\paragraph{Worked example -- solvent call.}
\textbf{Input.} \texttt{smiles = CC(C)CCO} (3-methylbutan-1-ol).

\textbf{Output.}
\begin{quote}\itshape\small
The molecule is a small, branched aliphatic alcohol consisting of a four-carbon backbone with a primary hydroxyl group at C1 and a methyl branch at C3 (forming an isopentyl or isoamyl skeleton). The terminal hydroxyl group acts as both a strong hydrogen bond donor and acceptor, conferring localized polarity to the head of the molecule. In contrast, the branched hydrocarbon tail provides a nonpolar, sterically bulky region. This clear separation between a highly accessible polar head and a branched hydrophobic tail gives the molecule a distinctively asymmetric, amphiphilic character.
\end{quote}
}% end rvansh stage1

\subsection{Stage 2 -- Evidence Analysis}
\label{app:prompts_stage2}
\rvansh{%
\paragraph{Inputs supplied.} The top-20 per-feature SHAP attributions
$\phi_j$ (Eq.~\ref{eq:shap_decomp}) split into positive and negative
contributions, the four group-level aggregates (Solute, Solvent,
Interaction, Thermodynamic) with the dominant group flagged, the
top-15 structural descriptor values, and a list of any values that
fall outside the training $\pm 2\sigma$ band. The predicted $\log S$
is \textbf{not} revealed at this stage.

\paragraph{System prompt.}
\begin{quote}\ttfamily\small\noindent
You are interpreting machine learning model data.

Your job: Find patterns and signal in the numbers, not just list them.

Key skills:\\
- Identify which feature groups dominate (positive vs negative)\\
- Spot unusual values or contradictions\\
- Compare solute vs solvent contributions\\
- Note when features work together or oppose each other

Be analytical, not just descriptive.
\end{quote}

\paragraph{User prompt template.}
\begin{quote}\ttfamily\small\noindent
Analyze the model evidence for this solute-solvent pair.

=== SHAP Feature Contributions (Top 20) ===\\
\{shap\_features\}

=== Group Contributions ===\\
\{group\_contributions\}

=== Structural Feature Values ===\\
\{structural\_features\}

Identify:\\
1. What's driving the prediction? (cite top 3-5 features with values)\\
2. Are there competing effects? (positive vs negative contributions)\\
3. Does the solute or solvent dominate, or is it balanced?\\
4. Any unusual feature values that stand out?

Write 4-6 sentences of analysis. Focus on insights, not lists.
\end{quote}

\paragraph{Worked example -- inputs assembled by the pipeline.}
\begin{quote}\ttfamily\small\noindent
=== SHAP Feature Contributions (Top 20) ===\\
POSITIVE CONTRIBUTIONS (increasing solubility):\\
  T: +0.1506\\
  T\_inv: +0.1427\\
  Solute\_MolLogP: +0.1254\\
  Solute\_PEOE\_VSA9: +0.0366\\
  Solute\_abraham\_V: +0.0279\\
  Solute\_PEOE\_VSA3: +0.0219\\
  Solute\_fr\_Al\_COO: +0.0208\\
  Solute\_NumHeterocycles: +0.0185

NEGATIVE CONTRIBUTIONS (decreasing solubility):\\
  Solute\_BertzCT: -0.1845\\
  Solute\_TPSA: -0.0878\\
  Solute\_BalabanJ: -0.0664\\
  Solute\_PEOE\_VSA7: -0.0613\\
  Solvent\_PEOE\_VSA8: -0.0505\\
  Solute\_fr\_NH1: -0.0488\\
  Solute\_PEOE\_VSA8: -0.0326\\
  Solute\_abraham\_E: -0.0243\\
  Solute\_abraham\_S: -0.0223\\
  Solute\_NumHDonors: -0.0219

=== Group Contributions ===\\
  Solute:   -0.2736\\
  Solvent:  -0.0594\\
  Interact: +0.0024\\
  Thermo:   +0.2922

  Dominant group: Thermo (46.6\% of total signal)

=== Structural Feature Values ===\\
  Solute\_BertzCT: 739.0243\\
  Solute\_MolWt: 361.8250\\
  T: 323.0000\\
  pred\_Tm: 300.6500\\
  Solute\_LabuteASA: 151.1270\\
  Solvent\_MolWt: 88.1500\\
  Solute\_TPSA: 75.6300\\
  Solute\_HeavyAtomCount: 25.0000\\
  Solvent\_TPSA: 20.2300\\
  Solute\_NumRotatableBonds: 7.0000\\
  Solute\_MolLogP: 3.5545\\
  T\_inv: 3.0960\\
  Solute\_NumHAcceptors: 3.0000\\
  Solute\_HallKierAlpha: -2.5300\\
  Solute\_NumHDonors: 2.0000

UNUSUAL VALUES:\\
  No unusual values detected
\end{quote}

\paragraph{Worked example -- output.}
\begin{quote}\itshape\small
The prediction is defined by a strong tug-of-war between thermodynamic conditions and solute complexity. Elevated temperature (T = 323 K) acts as the primary driver for increasing solubility, with thermodynamic features contributing +0.2922 overall. However, this thermal advantage is heavily offset by the solute's structural properties (-0.2736), specifically its high complexity (BertzCT = 739.02) and polar surface area (TPSA = 75.63), which strongly hinder dissolution. Interestingly, there is a competing effect within the solute itself, as its relatively high lipophilicity (MolLogP = 3.55) provides a strong positive contribution (+0.1254) that fights against its own negative structural features. The solvent plays a minimal role in this system, leaving the solubility to be determined almost entirely by the balance between the applied heat and the solute's intrinsic properties.
\end{quote}
}% end rvansh stage2

\subsection{Stage 3 -- Decision Logic (prediction revealed)}
\label{app:prompts_stage3}
\rvansh{%
\paragraph{Inputs supplied.} The predicted $\log S$ and temperature,
the top-10 cross-attention rotations from the
$24{\times}24$ Interaction-Layer matrix $A_{ij}$
(Eq.~\ref{eq:attention}), summary statistics over the CatBoost
per-tree leaf signature $\mathbf{L}_i$ (Eq.~\ref{eq:decision_path};
mean and standard deviation over the $3{,}000$ trees), and the top-10
key decision features with sign and magnitude.

\paragraph{System prompt.}
\begin{quote}\ttfamily\small\noindent
You are a machine learning interpretability expert analyzing\\
decision trees.

Look for:\\
- Which feature types the model prioritized\\
- Patterns in the cross-attention (which properties interact)\\
- Consistency or conflicts in the decision path

This is about understanding model behavior through the data.
\end{quote}

\paragraph{User prompt template.}
\begin{quote}\ttfamily\small\noindent
Analyze the model's reasoning for this prediction.

=== Prediction Info ===\\
Predicted LogS: \{y\_pred:.4f\}\\
Temperature (K): \{temperature:.2f\}

=== Cross-Attention Weights (Solute$\rightarrow$Solvent Interactions) ===\\
The transformer uses cross-attention from solute features (query) to solvent features (key/value).\\
These weights show which solvent properties the model attends to for each solute property.

**Top Cross-Attention Interactions:**\\
\{cross\_attention\_summary\}

=== Leaf Path Statistics ===\\
Number of trees: \{num\_trees\}\\
Path variability: \{path\_stats\}

=== Key Decision Features ===\\
\{top\_features\}

Write 4-5 sentences of analysis addressing:\\
- Which feature interactions received most attention? What does that suggest about the compatibility?\\
- What pattern of features led to this prediction value?\\
- Is the decision path consistent or mixed?

Be specific about attention weights and feature contributions.\\
If citing any value, specify the feature / molecule whose value you are citing.
\end{quote}

\paragraph{Worked example -- inputs assembled by the pipeline.}
\begin{quote}\ttfamily\small\noindent
=== Prediction Info ===\\
Predicted LogS: -1.0426\\
Temperature (K): 323.00

=== Cross-Attention Weights (Solute->Solvent Interactions) ===\\
**Top Cross-Attention Interactions:**\\
  NumRotatableBonds -> TPSA:        0.5876\\
  MaxPartialCharge  -> HeavyAtomCount: 0.2852\\
  Acidic            -> abraham\_S:   0.2611\\
  abraham\_E         -> abraham\_B:   0.2574\\
  abraham\_B         -> abraham\_S:   0.2458\\
  TPSA              -> abraham\_B:   0.2396\\
  NumAromaticRings  -> abraham\_B:   0.2140\\
  MolWt             -> pred\_Tm:     0.1994\\
  pred\_Tm           -> TPSA:        0.1824\\
  abraham\_A         -> pred\_Tm:     0.1767

=== Leaf Path Statistics ===\\
Number of trees: 3000\\
Path variability: Mean leaf: 165.9, Std: 71.5

=== Key Decision Features ===\\
  Solute\_BertzCT:    -0.1845  (negative contribution)\\
  T:                 +0.1506  (positive contribution)\\
  T\_inv:             +0.1427  (positive contribution)\\
  Solute\_MolLogP:    +0.1254  (positive contribution)\\
  Solute\_TPSA:       -0.0878  (negative contribution)\\
  Solute\_BalabanJ:   -0.0664  (negative contribution)\\
  Solute\_PEOE\_VSA7:  -0.0613  (negative contribution)\\
  Solvent\_PEOE\_VSA8: -0.0505  (negative contribution)\\
  Solute\_fr\_NH1:     -0.0488  (negative contribution)\\
  Solute\_PEOE\_VSA9:  +0.0366  (positive contribution)
\end{quote}

\paragraph{Worked example -- output.}
\begin{quote}\itshape\small
The model's cross-attention heavily prioritizes the interaction between the solute's flexibility and the solvent's polarity, highlighted by the dominant \texttt{NumRotatableBonds} to \texttt{TPSA} weight (0.5876), alongside attention from solute acidity to solvent dipolarity (\texttt{Acidic} to \texttt{abraham\textbackslash\{\}\_S}: 0.2611). This suggests the model is primarily evaluating compatibility based on how well the solvent's polar surface area and dipolarity can accommodate the conformational changes and acidic nature of the solute. The final prediction of -1.0426 results from a competing pattern of features where thermodynamic and hydrophobic factors increase solubility, while structural complexity decreases it. Specifically, the elevated temperature (\texttt{T}: +0.1506, \texttt{T\textbackslash\{\}\_inv}: +0.1427) and solute lipophilicity (\texttt{Solute\textbackslash\{\}\_MolLogP}: +0.1254) strongly drive the prediction upward, but are counteracted by penalties from solute complexity and polarity (\texttt{Solute\textbackslash\{\}\_BertzCT}: -0.1845, \texttt{Solute\textbackslash\{\}\_TPSA}: -0.0878). Consequently, the decision path is relatively mixed, as reflected by the moderate path variability (Std: 71.5 against a Mean leaf of 165.9), illustrating a balancing act between favorable thermal conditions and restrictive solute structural properties.
\end{quote}
}% end rvansh stage3

\subsection{Stage 4 -- Mechanistic Integration}
\label{app:prompts_stage4}
\rvansh{%
\paragraph{Inputs supplied.} The Stage~1 solute and solvent
descriptions, the Stage~2 evidence summary, the Stage~3 decision
analysis, the predicted $\log S$, and the temperature.

\paragraph{System prompt.}
\begin{quote}\ttfamily\small\noindent
You generate model explanations grounded in evidence.

Core rules:\\
1. Every claim must cite specific values from the provided data.\\
   Also specify the feature / molecule whose value you are citing.\\
2. Synthesize insights - don't just repeat what's already stated\\
3. Identify mechanistic patterns\\
   (e.g., "high TPSA + low HBA indicates...")\\
4. Note uncertainties or conflicts in the evidence\\
5. Be concise - eliminate filler words\\
6. IMPORTANT: Do NOT use raw feature names like "Solute\_MACCS\_105",\\
   "Solvent\_Morgan\_283", "Interact\_MolLogP", "BertzCT", etc. in your\\
   explanation. Instead, translate them to chemical concepts (e.g.,\\
   "structural fingerprint features", "topological complexity",\\
   "hydrogen bonding capacity", "molecular interaction patterns").\\
   Say "the model's analysis suggests..." rather than\\
   "Solute\_MACCS\_105 indicates..."\\
7. Keep these boundaries in mind while explaining the prediction:\\
   Very highly soluble : LogS >=  0.0\\
   Highly soluble      : 0.0  > LogS >= -1.0\\
   Moderately soluble  : -1.0 > LogS >= -2.5\\
   Poorly soluble      : -2.5 > LogS >= -4.0\\
   Highly insoluble    : LogS <  -4.0

Your goal: Explain WHY the model predicted this value based on the\\
evidence, in plain chemical language.
\end{quote}

\paragraph{User prompt template.}
\begin{quote}\ttfamily\small\noindent
Generate a comprehensive explanation for this solubility prediction.

=== Molecule Descriptions ===\\
**Solute (\{solute\_smiles\}):**\\
\{solute\_description\}

**Solvent (\{solvent\_smiles\}):**\\
\{solvent\_description\}

=== Evidence Summary (from Stage 1) ===\\
\{evidence\_summary\}

=== Decision Analysis (from Stage 2) ===\\
\{decision\_analysis\}

=== Prediction ===\\
- Predicted LogS: \{y\_pred:.4f\}\\
- Temperature: \{temperature:.2f\} K

Structure your explanation as:

\#\# Prediction \& Key Drivers\\
State the predicted value.  Identify the 2-3 dominant factors with\\
specific values.

\#\# Solute-Solvent Compatibility\\
Based on cross-attention and interaction features, what compatibility\\
or mismatch drives the result?  Cite specific feature pairs and their\\
contributions.

\#\# Mechanistic Interpretation\\
What does the pattern of contributions suggest about the dissolution\\
process?  Connect molecular properties to the prediction.

\#\# Confidence \& Caveats\\
Based on evidence consistency and magnitude, how reliable is this\\
explanation?  Note any conflicting signals or unusual patterns.

---

IMPORTANT FORMATTING RULES:\\
- Do NOT use raw feature names (e.g., Solute\_MACCS\_105, BertzCT,\\
  Interact\_MolLogP).  Translate them to plain chemical language.\\
- Write in plain text format, avoiding markdown headers or\\
  formatting.\\
- Keep each section to 2-4 sentences.  Cite specific values.  Focus\\
  on insights that connect the dots.
\end{quote}

\paragraph{Worked example -- output.}
\begin{quote}\itshape\small
\textbf{Prediction \textbackslash\{\}\& Key Drivers.} The model predicts a moderately soluble LogS of -1.0426 at an elevated temperature of 323.00 K. The prediction is driven by a strong tug-of-war between favorable thermodynamic conditions, which increase the solubility score by +0.2922, and restrictive solute structural properties, which decrease it by -0.2736. Specifically, the solute's high topological complexity (739.02) and polar surface area (75.63) hinder dissolution, while its intrinsic lipophilicity (3.55) and the applied heat strongly promote it.

\textbf{Solute-Solvent Compatibility.} The model's attention mechanism heavily prioritizes the interplay between the solute's structural flexibility and the solvent's polar surface area, assigning this interaction a dominant weight of 0.5876. Additionally, the compatibility between the solute's acidic carboxylic acid group and the solvent's dipolarity is a significant focus, with an attention weight of 0.2611. Despite these specific interaction pathways, the solvent's overall contribution to the prediction is minimal, indicating that the isoamyl alcohol's amphiphilic nature accommodates the solute but does not actively drive the solubility outcome compared to intrinsic solute properties.

\textbf{Mechanistic Interpretation.} The dissolution process is characterized by competing hydrophobic and structural forces. The solute's bulky, rigid aromatic rings and high complexity impose a significant energetic penalty (-0.1845) for cavity formation in the solvent. However, the solute's substantial lipophilicity (+0.1254) interacts favorably with the solvent's branched hydrocarbon tail, while the elevated temperature (+0.1506) provides the necessary kinetic energy to overcome the cohesive forces of the solute's hydrogen-bonding networks, which otherwise apply a polar surface area penalty of -0.0878.

\textbf{Confidence \textbackslash\{\}\& Caveats.} The prediction is moderately confident, reflecting a mixed decision path with moderate variability, shown by a standard deviation of 71.5 against a mean leaf value of 165.9. This variability stems from conflicting signals within the solute itself, where its lipophilic character promotes solubility while its structural complexity and polarity strongly resist it. The near-total reliance on temperature and solute properties, rather than specific solvent interactions, suggests the model views this system primarily through a thermodynamic and structural lens rather than relying on strong chemical synergies.
\end{quote}
}% end rvansh stage4

\subsection{Stage 5 -- Validation \& Correction}
\label{app:prompts_stage5}
\rvansh{%
\paragraph{Inputs supplied.} The Stage~4 explanation, both molecule
descriptions, the top-20 SHAP contributions, the list of unusual
features, the top-10 key decision features, the group contributions,
the top-10 cross-attention interactions, the structural feature
table, the tree statistics, the predicted $\log S$ and the
temperature. No new external information.

\paragraph{System prompt.}
\begin{quote}\ttfamily\small\noindent
You are a strict validator for chemistry explanations grounded in model evidence.\\
Your job is not only factual checking, but also preventing generic chemistry prose that could sound plausible even when the grounding is weak or uninformative.\\
Your task: Verify that every claim in the explanation is grounded in the provided model evidence.

Check for:\\
1. **Structural accuracy**: Do the molecule descriptions used in the explanation matches the actual SMILES?\\
2. **Numerical accuracy**: Do cited values match the source data exactly?\\
4. **Magnitude claims**: Are statements about "dominant", "major", "minor" contributions accurate?\\
5. **Logical consistency**: Do the conclusions follow from the evidence?\\
6. **Chemical Plausibility**: Does the mechanistic explanation violate fundamental laws of chemistry or thermodynamics? (e.g., claiming a highly polar group inherently decreases solubility in water without a valid steric or electronic reason).\\
7. Ensure that there are no unbacked claims in the explanation.

You must reject explanations that:

- rely on broad textbook solubility statements without tying them to the supplied evidence\\
- could have been written for many molecules with only minor wording changes\\
- mention polarity, hydrogen bonding, aromaticity, size, hydrophobicity, or rigidity in a generic way without connecting them to specific supplied features\\
- smooth over the actual model evidence into a vague narrative\\
- violate fundamental laws of chemistry or thermodynamics (e.g., claiming a highly polar group inherently decreases solubility in water without a valid steric or electronic reason)

You should prefer explanations that : 

- explicitly track the supplied SHAP features, grouped contributions, cross-attention interactions, unusual values, and decision evidence\\
- make claims that would materially change if the supplied evidence changed\\
- are specific enough that an explanation from an untrained grounding model would differ substantially in content\\
- describe the data exactly as it was provided to you\\
- ensure there are no unbacked claims in the explanation

These are the boundaries defined by us:\\
Very highly soluble     : LogS >=  0.0\\
Highly soluble          : 0.0  > LogS >= -1.0\\
Moderately soluble      : -1.0 > LogS >= -2.5\\
Poorly soluble          : -2.5 > LogS >= -4.0\\
Highly insoluble        : LogS <  -4.0

Flag as hallucination:\\
- Descriptions that contradict the SMILES structure (e.g. claiming a ring in a linear chain)\\
- Invented feature values or contributions\\
- Features mentioned that aren't in top 20 SHAP values\\
- Incorrect percentages or group contributions\\
- Unsupported mechanistic claims\\
- Mechanistic claims that violate fundamental laws of chemistry or thermodynamics, EVEN IF the SHAP data implies it. If the model evidence forces an absurd chemical conclusion, you must flag the explanation as physically ungrounded.\\
- Any claim that is not backed by the evidence \& comes from internal knowledge of the LLM. We want all claims to be a natural consequence of input data.

Output JSON format:\\
\{\\
  "verdict": "supported" | "partially\_supported" | "unsupported",\\
  "unsupported\_claims": ["..."],\\
  "supported\_points": ["..."],\\
  "generic\_claims\_to\_remove": ["..."],\\
  "evidence\_specific\_points\_to\_keep": ["..."],\\
  "correction\_instructions": ["..."],\\
  "needs\_revision": true\\
\}

Be strict but fair. Minor rounding differences are acceptable. Focus on factual grounding.\\
IMPORTANT: Keep the `corrected\_explanation` concise (approx 4 paragraphs). Do not unnecessarily expand it.\\
Set "needs\_revision" to false only if the explanation is both well grounded and clearly discriminative with respect to the supplied evidence.
\end{quote}

\paragraph{User prompt template.}
\begin{quote}\ttfamily\small\noindent
Validate the following explanation against the source model evidence.

=== EXPLANATION TO VALIDATE ===\\
\{explanation\}

=== SOURCE EVIDENCE ===

**Molecule Information:**\\
- Solute SMILES: \{solute\_smiles\}\\
  (Description used in explanation: "\{solute\_description\}")\\
- Solvent SMILES: \{solvent\_smiles\}\\
  (Description used in explanation: "\{solvent\_description\}")

**Top 20 SHAP Contributions:**\\
\{shap\_features\}

**Unusual Features:**\\
\{unusual\_features\}

**Key Decision Features (from Decision Analysis):**\\
\{decision\_features\}

**Group Contributions:**\\
\{group\_contributions\}

**Cross-Attention Summary (Top 10 Interactions):**\\
\{cross\_attention\_summary\}

**Structural Features:**\\
\{structural\_features\}

**Tree Statistics:**\\
\{tree\_stats\}

**Prediction:**\\
- Predicted LogS: \{y\_pred:.4f\}\\
- Temperature: \{temperature:.2f\} K

=== VALIDATION TASK ===

Check every numerical claim, feature reference, and mechanistic statement in the explanation.\\
Verify that the qualitative descriptions match the actual chemical structure defined by the SMILES.

For each issue found:\\
1. Quote the problematic statement\\
2. Explain why it's incorrect or unsupported (or contradicts the SMILES)\\
3. Provide the correct information from source data

Validation objective:\\
- Keep only explanation content that is strongly anchored to the supplied evidence\\
- Favor explanation wording that is evidence-specific, sparse, and discriminative yet still explains the solubility\\
- Frame arguments in the format: [Data given by the user] followed by [what it implies in terms of solubility]\\
- Remove broad textbook statements about polarity/H-bonding if they don't explicitly cite the SHAP feature or Interaction node providing that signal

If issues are found, you MUST provide a "corrected\_explanation" in the JSON that:\\
- Retains the structure and tone of the original\\
- Fixes all factual errors using source data\\
- Removes unsupported claims\\
- Takes care to NOT introduce new hallucinations\\
- Keep only explanation content that is strongly anchored to the supplied evidence\\
- Favor explanation wording that is evidence-specific, sparse, and discriminative yet still explains the solubility.

Output your findings in JSON format as specified in the system prompt.
\end{quote}

\paragraph{Worked example -- validator output (JSON).}
\begin{quote}\ttfamily\small\noindent
{
  "verdict": "supported",
  "unsupported\_claims": [],
  "supported\_points": [
    "LogS prediction of -1.0426 at 323.00 K",
    "Thermo group contribution of +0.2922 and Solute group contribution of -0.2736",
    "BertzCT value of 739.02 and SHAP of -0.1845",
    "TPSA value of 75.63 and SHAP of -0.0878",
    "MolLogP value of 3.55 and SHAP of +0.1254",
    "Cross-attention weights for NumRotatableBonds -> TPSA (0.5876) and Acidic -> abraham\_S (0.2611)",
    "Tree statistics (mean leaf 165.9, std 71.5)",
    "Solvent group contribution of -0.0594"
  ],
  "generic\_claims\_to\_remove": [],
  "evidence\_specific\_points\_to\_keep": [
    "The tug-of-war between thermodynamic conditions and solute structural properties.",
    "The specific mention of the solute's bulky, rigid aromatic rings and high complexity imposing an energetic penalty.",
    "The interpretation of the tree statistics reflecting a mixed decision path.",
    "The observation that the model relies heavily on temperature and solute properties rather than specific solvent interactions."
  ],
  "correction\_instructions": [
    "Minor adjustment: Remove the assumption that 'TPSA' in the cross-attention specifically refers to the solvent, as the feature lacks a prefix in the evidence."
  ],
  "needs\_revision": false
}
\end{quote}

\paragraph{Stage 5b -- Revision (separate prompt in \texttt{prompts.json}).}
Because \texttt{needs\_revision = false}, the validator returned
\texttt{supported} and Stage~5b revision was \emph{skipped}; the Stage~4
integrated explanation passed directly to Stage~6 condensation. The revision
prompt below is still defined in \texttt{prompts.json} and would run when
validation sets \texttt{needs\_revision = true}. The revision stage uses its own system and user prompts from
\texttt{prompts.json[\textquotedbl revision\textquotedbl]} (the
system prompt is identical to the integration system prompt; the
user template threads the validator's
\texttt{evidence\_specific\_points\_to\_keep},
\texttt{generic\_claims\_to\_remove} and
\texttt{correction\_instructions} back into the LLM).

\begin{quote}\ttfamily\small\noindent
Revise the explanation so every claim is grounded in the supplied\\
evidence.\\
The revised explanation should be maximally evidence-specific and\\
should avoid generic chemistry language.\\
It should read differently when the grounding evidence changes.

Molecule SMILES: \{solute\} in \{solvent\}\\
Predicted LogS: \{y\_pred:.4f\}

Original explanation:\\
\{explanation\}

Evidence-specific points to keep:\\
\{evidence\_points\}

Generic claims to remove:\\
\{generic\_points\}

Revision instructions:\\
\{instructions\}

Constraints:\\
  - Do not use broad filler like 'this affects solubility' unless\\
    tied to explicit evidence.\\
  - Prefer naming concrete supplied drivers over giving a smooth\\
    general narrative.\\
  - If evidence is weak or noisy, say so rather than inventing a\\
    polished explanation.

Return only the corrected explanation.
\end{quote}

\paragraph{Worked example -- corrected explanation (Stage 5b output).}
\begin{quote}\itshape\small
\emph{[Stage~5b not invoked for this sample --- validation returned
\texttt{supported}; see Stage~4 worked-example output above.]}
\end{quote}
}% end rvansh stage5

\subsection{Stage 6 -- Condensation}
\label{app:prompts_stage6}
\rvansh{%
\paragraph{Inputs supplied.} The validated (and, if necessary,
corrected) explanation from Stage~5.

\paragraph{System prompt.}
\begin{quote}\ttfamily\small\noindent
You are an expert scientific editor. Your task is to condense a detailed solubility explanation into a concise, information-dense summary.

Rules:\\
1. PRESERVE all numerical values (LogS, TPSA, MolLogP, temperatures, percentages, contribution values)\\
2. REMOVE redundancy, filler phrases, and repetitive statements\\
3. MAINTAIN the scientific accuracy and causal reasoning\\
4. Use precise, technical language appropriate for chemistry researchers\\
5. Do NOT add information not present in the original\\
6. Do NOT use markdown formatting - output plain text only\\
7. Do NOT use raw feature names - keep the chemical language from the input\\
8. End Sentences with a period.\\
9. Ensure that the explanation is complete.
\end{quote}

\paragraph{User prompt template.}
\begin{quote}\ttfamily\small\noindent
Condense the following solubility explanation into exactly 4 dense\\
paragraphs:

=== Original Explanation ===\\
\{original\_explanation\}

=== Required Structure ===

Paragraph 1 - PREDICTION SUMMARY:\\
State the LogS prediction and temperature.  Identify the dominant\\
driver category (solute/solvent/interaction) with its percentage.\\
Highlight the 2-3 most influential molecular properties with their\\
values.

Paragraph 2 - SOLUTE-SOLVENT DYNAMICS:\\
Describe the key cross-attention interactions and what they reveal\\
about compatibility.  Quantify the solvent's contribution and\\
explain why it helps or hinders dissolution.

Paragraph 3 - MECHANISTIC INTERPRETATION:\\
Explain the dissolution mechanism in chemical terms.  Identify\\
whether dissolution is solute-limited, solvent-limited, or\\
interaction-limited.  Connect specific molecular features to the\\
predicted outcome.

Paragraph 4 - CONFIDENCE \& UNCERTAINTY:\\
Assess prediction reliability based on model statistics.  Note any\\
conflicting signals between feature groups.  Indicate if the\\
prediction lies within expected model behavior.

Output ONLY the 4 paragraphs with no headers, labels, or markdown.\\
Each paragraph should be 3-5 sentences of dense, information-rich\\
text.
\end{quote}

\paragraph{Worked example -- output.}
\begin{quote}\itshape\small
The model predicts a moderately soluble LogS of -1.0426 at an elevated temperature of 323.00 K. The prediction is primarily driven by favorable thermodynamic conditions increasing the score by +0.2922, which compete against restrictive solute structural properties decreasing it by -0.2736. Specifically, the solute topological complexity of 739.02 and polar surface area of 75.63 hinder dissolution, whereas an intrinsic lipophilicity of 3.55 and applied heat strongly promote solubility.

The model prioritizes the interplay between solute structural flexibility and solvent polar surface area, assigning this interaction a dominant attention weight of 0.5876. Additionally, compatibility between the solute acidic carboxylic acid group and solvent dipolarity receives a significant attention weight of 0.2611. Despite these specific interaction pathways, the overall solvent contribution to the prediction remains minimal. This indicates that the amphiphilic nature of isoamyl alcohol accommodates the solute but does not actively drive the solubility outcome compared to intrinsic solute properties.

The dissolution process is solute-limited and characterized by competing hydrophobic and structural forces. Bulky, rigid aromatic rings and high complexity impose a significant energetic penalty of -0.1845 for cavity formation within the solvent. Conversely, substantial solute lipophilicity contributes +0.1254 by interacting favorably with the branched hydrocarbon tail of the solvent. Furthermore, the elevated temperature provides a +0.1506 kinetic energy boost necessary to overcome cohesive forces within the solute hydrogen-bonding networks, which otherwise apply a polar surface area penalty of -0.0878.

The prediction is moderately confident, reflecting a mixed decision path with moderate variability indicated by a standard deviation of 71.5 against a mean leaf value of 165.9. This variability stems from conflicting signals within the solute itself, where lipophilic character promotes solubility while structural complexity and polarity strongly resist it. The near-total reliance on temperature and solute properties rather than specific solvent interactions suggests the model views this system primarily through a thermodynamic and structural lens. Consequently, the prediction lies within expected model behavior for systems lacking strong chemical synergies.
\end{quote}
}% end rvansh stage6

\subsection{End-to-End Worked Walkthrough}
\label{app:walkthrough_example}
\label{app:e2e_example}

\rvansh{We illustrate the full pipeline on the case-study pair from
App.~\ref{case_study}: solute = bezafibrate
(\texttt{CC(C)(Oc1ccc(CCNC(=O)c2ccc(Cl)cc2)cc1)C(=O)O}), solvent =
3-methylbutan-1-ol (\texttt{CC(C)CCO}), temperature $T = 323$\,K.
\name{} predicts $\log S = -1.0426$ against an experimental value of
$-1.1366$, giving an absolute error of $0.094$.}

\rvansh{\paragraph{(a) Model-evidence excerpt.}
Table~\ref{tab:walkthrough_shap} summarises the dominant SHAP
attributions and group-level aggregates assembled by the explainer
pipeline for this sample, and Table~\ref{tab:walkthrough_attn} shows
the corresponding top-10 entries of the cross-attention matrix
$A_{ij}\in\mathbb{R}^{24\times 24}$. Together they constitute the
quantitative grounding used by Stages~2--6.}

\begin{table}[h]
\centering
\caption{Evidence excerpt for bezafibrate / 3-methylbutan-1-ol at $323$\,K. Positive $\phi_j$ pushes $\log S$ upward (more soluble); the row in parentheses gives the raw descriptor value where it appears in the top-15 structural list.}
\label{tab:walkthrough_shap}
{%
\begin{tabular}{llc}
\toprule
\textbf{Evidence item} & \textbf{Reported value} & \textbf{Role in explanation} \\
\midrule
Prediction                        & $\log S=-1.0426$ at $323$\,K           & Moderate organic-solvent solubility \\
Group contribution -- Thermo      & $+0.2922$ ($46.6\%$ of total signal)   & Dominant positive driver \\
Group contribution -- Solute      & $-0.2736$                               & Dominant negative driver \\
Group contribution -- Solvent     & $-0.0594$                               & Mild penalty \\
Group contribution -- Interaction & $+0.0024$                               & Negligible \\
\texttt{T} (Temperature)          & $\phi=+0.1506$                          & Thermal activation \\
\texttt{T\_inv} ($1000/T$)        & $\phi=+0.1427$                          & Boltzmann-style boost \\
\texttt{Solute\_MolLogP}          & $3.55$, $\phi=+0.1254$                  & Lipophilic compatibility with alcohol \\
\texttt{Solute\_BertzCT}          & $739.02$, $\phi=-0.1845$                & Lattice / cavity / complexity penalty \\
\texttt{Solute\_TPSA}             & $75.63$, $\phi=-0.0878$                 & Polar-surface mismatch \\
\texttt{Solute\_BalabanJ}         & $\phi=-0.0664$                          & Topological penalty \\
\texttt{Solvent\_PEOE\_VSA8}      & $\phi=-0.0505$                          & Largest negative solvent driver \\
\texttt{Solute\_NumHDonors}       & $2.0$, $\phi=-0.0219$                   & H-bond donor mismatch with solvent \\
\texttt{pred\_Tm}                 & $300.65$\,K (vs.\ experimental $\sim 459$\,K) & Thermal benefit may be over-credited \\
Tree path statistics              & $3000$ trees, mean leaf $165.9$, std $71.5$ & Mixed / diverse decision path \\
\bottomrule
\end{tabular}}
\end{table}

\begin{table}[h]
\centering
\caption{Top-10 cross-attention rotations $A_{ij}$ for the walkthrough sample. Solute tokens (rows) attend to solvent tokens (columns).}
\label{tab:walkthrough_attn}
{%
\begin{tabular}{llr}
\toprule
\textbf{Solute token $i$} & \textbf{Solvent token $j$} & \textbf{$A_{ij}$} \\
\midrule
\texttt{NumRotatableBonds} & \texttt{TPSA}              & $0.5876$ \\
\texttt{MaxPartialCharge}  & \texttt{HeavyAtomCount}    & $0.2852$ \\
\texttt{Acidic}            & \texttt{abraham\_S}        & $0.2611$ \\
\texttt{abraham\_E}        & \texttt{abraham\_B}        & $0.2574$ \\
\texttt{abraham\_B}        & \texttt{abraham\_S}        & $0.2458$ \\
\texttt{TPSA}              & \texttt{abraham\_B}        & $0.2396$ \\
\texttt{NumAromaticRings}  & \texttt{abraham\_B}        & $0.2140$ \\
\texttt{MolWt}             & \texttt{pred\_Tm}          & $0.1994$ \\
\texttt{pred\_Tm}          & \texttt{TPSA}              & $0.1824$ \\
\texttt{abraham\_A}        & \texttt{pred\_Tm}          & $0.1767$ \\
\bottomrule
\end{tabular}}
\end{table}

\rvansh{%
\paragraph{(b) Stage~4 narrative (post-validation).}
The validator returned \texttt{supported} with \texttt{needs\_revision = false}; Stage~5b was skipped. The verbatim
Stage~4 integrated explanation is reproduced as the worked-example output
in App.~\ref{app:prompts_stage4}; we do not duplicate it here.

\paragraph{(c) Stage~6 summary.}
The verbatim Stage~6 condensed output (four dense paragraphs) for
this sample is reproduced as the worked-example output in
App.~\ref{app:prompts_stage6}; we do not duplicate it here.
}% end rvansh walkthrough (b)+(c)

\end{document}